\documentclass{article}

\usepackage[preprint,nonatbib]{paper_style}
\usepackage[sort,comma,numbers]{natbib}

\usepackage[utf8]{inputenc} 
\usepackage[T1]{fontenc}    
\usepackage{hyperref}       
\usepackage{url}            
\usepackage{booktabs}       
\usepackage{amsfonts}       
\usepackage{nicefrac}       
\usepackage{microtype}      
\usepackage{xcolor}         
\usepackage{amsmath}
\usepackage{amssymb}
\usepackage{mathtools}
\usepackage{amsthm}
\usepackage{dsfont}
\usepackage{subcaption}
\usepackage{float}
\usepackage{cleveref}
\usepackage{algorithm}
\usepackage{algorithmic}
\usepackage{threeparttable}
\usepackage{multirow}

\DeclareMathOperator{\essinf}{ess\, inf}
\DeclareMathOperator{\esssup}{ess\, sup}

\theoremstyle{plain}
\newtheorem{theorem}{Theorem}
\newtheorem{proposition}{Proposition}
\newtheorem{lemma}{Lemma}
\newtheorem{assumption}{Assumption}
\newtheorem{corollary}{Corollary}
\theoremstyle{definition}

\theoremstyle{remark}
\newtheorem{remark}{Remark}

\usepackage{enumitem,amssymb}
\newlist{todolist}{itemize}{2}
\setlist[todolist]{label=$\square$}
\usepackage{pifont}
\makeatletter
\renewcommand{\paragraph}{%
  \@startsection{paragraph}{4}{\z@}%
                {\z@}%
                {-1em}%
                {\normalfont\normalsize\bfseries}}
\makeatother

\title{Doing well with less! On Sampling Techniques for Empirical Pairwise Loss Estimation/Minimization}

\author{%
  Louise Davy \\
  IDS, LTCI\\
  Télécom Paris\\
  Palaiseau, France\\
  \texttt{louise.davy@telecom-paris.fr} \\
  \And
  Stephan Clémençon \\
  IDS, LTCI\\
  Télécom Paris\\
  Palaiseau, France\\
  \texttt{stephan.clemencon@telecom-paris.fr} \\
  \And
  Charlotte Laclau \\
  IDS, LTCI\\
  Télécom Paris\\
  Palaiseau, France\\
  \texttt{charlotte.laclau@telecom-paris.fr} \\
}

\begin{document}


\maketitle

\begin{abstract}
Many machine learning problems, including similarity learning, ranking, and clustering, rely on empirical pairwise loss functions whose quadratic computational cost quickly becomes prohibitive at scale. We demonstrate how a frugal approach that retains only a fraction of the available information on pairs can achieve estimation or optimization performance comparable to that obtained by using all pairs, by leveraging survey sampling techniques. A central finding, supported by both theory and experiments, is that such sampling plans must target pairs directly rather than individual observations. In particular, for pairwise losses between high-dimensional vectors such as embeddings in vision or graph learning, assigning higher inclusion probabilities to informative pairs using suitable auxiliary information yields performance close to full pairwise evaluation, providing a principled and theoretically grounded trade-off between accuracy and computational cost.
\end{abstract}

\section{Introduction}
How to extract reliable information from a large population when measuring every individual is too costly? Long before the era of large-scale datasets, statisticians already faced this challenge. Survey sampling theory addressed it by designing principled schemes that select a small, informative subset of the population, and by correcting for the resulting selection bias in a statistically controlled way. The rapid growth of digital data and computational capacity temporarily shifted attention away from these ideas, as the focus moved toward scalability and the representation of complex data through learned embeddings. Yet randomization and sampling never truly left machine learning: from dropout and mini-batch SGD to boosting and random forests, careful data selection has remained central to efficient
learning. More recently, concerns about computational frugality and data efficiency have brought these questions back to the forefront.

A particularly challenging instance of this tension arises in tasks that rely on empirical pairwise loss functions. Given observations $Z_1, \ldots, Z_N \in \mathcal{Z}$, many statistical learning problems, including ranking, metric learning, clustering, and contrastive representation learning, are formulated as the minimization of a pairwise empirical risk of the form
\begin{equation}\label{eq:pairwise_loss}
    \frac{1}{N(N-1)} \sum_{i \neq j} \ell(\theta, (Z_i, Z_j)),
\end{equation}
where $\ell : \Theta \times \mathcal{Z}^2 \to \mathbb{R}$ is a pairwise loss function and $\Theta$ is the parameter space. Such objectives, referred to as U-statistics of order 2, explicitly depend on all $O(N^2)$ pairs in the dataset. When the $Z_i$ are high-dimensional, such as image or text embeddings, evaluating this objective over all pairs quickly becomes the dominant computational bottleneck. The most popular contrastive methods, such as 
SimCLR~\citep{chen2020simclr} and Barlow Twins~\citep{zbontar2021barlowtwins}, face exactly this challenge.

To reduce this cost, a natural strategy consists in subsampling individual observations and forming pairs from the resulting subset. The central contribution of this paper is to show that this strategy is suboptimal. A key principle of survey sampling is to leverage auxiliary information, that is, any cheap quantity correlated with the target, to guide the selection
of units. Applied to the population of pairs $\{(Z_i, Z_j) : i \neq j\}$, this means assigning higher inclusion probabilities to pairs expected to contribute more to the objective, using a proxy $\rho(Z_i, Z_j)$ correlated with the loss
$\ell(\theta, (Z_i, Z_j))$. Such auxiliary information can be defined directly at the pair level, for instance through edge weights in a graph, or derived from observation-level features, such as image metadata in face recognition datasets like LFW~\citep{haung2008lfw}. A simple yet practically important example is when $\ell(z, z')$ is an $\ell_q$ distance costly to compute in high dimension, while $\rho(z, z')$ is the distance of a low-dimensional projection, available at negligible cost. As we establish in Section~\ref{sec:estimation}, leveraging such auxiliary information
requires defining it at the pair level, which is precisely why the population of pairs, rather than that of individual observations, is the natural domain on which to apply a survey plan: the resulting Horvitz-Thompson estimator then achieves provably lower variance at equal computational cost.

\paragraph{Contributions.} This paper makes the following contributions. (i) (Sec.~\ref{sec:estimation}) We establish that, for any fixed pair budget, direct pair sampling strictly dominates observation sampling in terms of estimator variance. This holds for the most popular sampling designs used in practice, and is confirmed empirically across all settings considered.
(ii) (Sec .~\ref{sec:estimation}-\ref{sec:learning}) Setting inclusion probabilities proportional to a proxy $\rho(W_i, W_j)$ correlated with the loss produces a variance that depends on the quality of $\rho$. We carry this explicit dependence through to the generalization bound of Section~\ref{sec:learning}.
(iii) (Sec.~\ref{sec:learning})
We establish non-asymptotic excess risk bounds for predictors trained on survey-sampled pairs, for both Poisson and negatively associated designs. We further show that $\bar{n} = cN$ sampled pairs suffice to match the $\mathcal{O}_P(1/\sqrt{N})$ rate of full-pair learning, so that the ratio of pairs actually evaluated to pairs available vanishes as $N \to \infty$.

\paragraph{Related work.}

The use of survey sampling techniques to select training datasets for pointwise learning, that is, for minimizing empirical risk functionals of the form $(1/n)\sum_{i\leq n}l(\theta, Z_i)$, has received growing attention. Probabilistic tools for studying the performance of empirical risk minimizers under sampling without replacement have been developed in \cite{Hoeffding63} (see also \cite{BardenetMaillard}), and more general survey sampling schemes in \cite{CBC2017, pmlr-v63-clemencon64}. See also \cite{CLEM2015a} for the application of
survey techniques to mini-batch selection in SGD. However, the pairwise learning framework has not yet been documented in this context. The approximation of U-statistics by subsampling pairs has been studied in the statistics literature~\citep{Lee90}, and more recently in the machine-learning setting by \cite{CCB16}, where pairs are selected uniformly
at random. Unlike that approach, our framework leverages auxiliary information to define non-uniform inclusion probabilities over pairs sampled without replacement.

In the broader machine-learning literature, several lines of work address the selection of informative pairs for training, though from a fundamentally different perspective. Methods such as hard and semi-hard negative mining \citep{schroff2015facenet, robinson2020contrastive} and contrastive learning \citep{chen2020simclr, zbontar2021barlowtwins} select pairs heuristically based on the current model state, typically focusing on pairs with large loss or informative contrasts. Distance-weighted sampling \cite{wu2018samplingm} assigns non-uniform probabilities based on embedding distances. As shown in \cite{musgrave2020}, these approaches often fail to provide consistent gains across settings, and crucially, none of them corrects for the bias introduced by non-uniform selection.

Importance sampling offers another route to non-uniform pair selection: by reweighting gradient contributions proportionally to their norm, it aims to reduce variance in stochastic gradient descent \cite{zhao2015stochastic, katharopoulos2018not}. \cite{zhou2025} extend this to pairwise learning via PAC-Bayes analysis, noting that non-uniform sampling in pairwise learning remains unclear and has not been rigorously studied. Our work addresses this gap differently, by providing unbiased risk estimation and non-asymptotic concentration guarantees that are not available in importance sampling or contrastive learning approaches. Finally, ~\citep{pmlr-v139-bertail21a} uses survey sampling to debias samples from a shifted
source distribution, a setting orthogonal to ours.


\section{Background and preliminaries}\label{sec:background}
We recall the key notions of survey sampling theory and pairwise learning used throughout the paper.

\subsection{Survey sampling and Horvitz-Thompson estimation}
\label{sec:survey_sampling}

Consider a finite population of size $N$, indexed by $i \in \mathcal{I} := \{1, \ldots, N\}$. A probability distribution $D_N$ on $\mathcal{P}(\mathcal{I})$ is called a sampling plan on $\mathcal{I}$. The sampling outcome is encoded by $\boldsymbol{\epsilon}_N = (\epsilon_1, \ldots, \epsilon_N)$, where $\epsilon_i = \mathbb{I}\{i \in S\}$ for $1 \leq i \leq N$. Of particular interest are the first- and {second-order inclusion probabilities, defined for $i \neq j$ in $\mathcal{I}$ by $\pi_i = \mathbb{P}_{D_N}\{i \in S\}$ and $\pi_{i,j} = \mathbb{P}_{D_N}(\{i,j\} \subset S)$. A plan is said to be of \emph{fixed size} $n \leq N$ if $\#S = n$ almost surely, which implies $\sum_{i=1}^N \pi_i = n$.

\paragraph{Horvitz-Thompson estimation.} Suppose that a measurement
$z_i \in \mathcal{Z} \subset \mathbb{R}^d$ is assigned to each individual $i \in \mathcal{I}$. A central problem in survey theory is estimating the population total $Q = \sum_{i=1}^N z_i$ from a sample $S$. The Horvitz-Thompson (HT) estimator \citep{HT51} is the canonical tool
for this purpose:
\begin{equation}\label{eq:HT1}
    \widehat{Q}_{D_N} = \sum_{i \in S} \frac{z_i}{\pi_i} = \sum_{i=1}^N
    \frac{\epsilon_i}{\pi_i} z_i.
\end{equation}
It is unbiased, i.e. $\mathbb{E}_{D_N}[\widehat{Q}_{D_N}] = Q$, with variance given by \citet{yates53} for fixed-size plans:
\begin{equation}\label{eq:HT_condvar}
    \mathrm{Var}(\widehat{Q}_{D_N}) = \sum_{i < j} \left(\frac{z_i}{\pi_i} -
    \frac{z_j}{\pi_j}\right)^2 (\pi_i \pi_j - \pi_{i,j}).
\end{equation}
The auxiliary information used to define the inclusion probabilities typically takes the form of variables $w_i$ assigned to each individual, on which the $\pi_i$'s depend through a measurable link function $\rho : \mathcal{W} \to (0, +\infty)$, namely
$\pi_i = n\rho(w_i) / \sum_{j=1}^N \rho(w_j)$. The link function is chosen so as to make $\mathrm{Var}(\widehat{Q}_{D_N})$ as small as possible: ideally, inclusion probabilities proportional to the $z_i$'s minimize variance, though in practice a cheap proxy correlated with the $z_i$'s suffices.

\paragraph{Example 1: Poisson sampling.} The $\epsilon_i$'s are independent, and the plan is fully characterized by first-order inclusion probabilities
$\mathbf{p}_N = (p_1, \ldots, p_N) \in (0,1)^N$. The sample size is random with variance $d_N = \sum_{i=1}^N p_i(1-p_i)$, and second-order inclusion probabilities factorize as $\pi_{i,j} = p_i p_j$.

\paragraph{Example 2: Rejective sampling.} Rejective sampling $R_N$ is a fixed-size design of size $n \leq N$ that generalizes simple random sampling without replacement (SRSWOR). It can be viewed as Poisson sampling conditioned on achieving exactly $n$ draws, which is why it is also called \emph{conditional Poisson sampling} \citep{Dupacova}. The first-order inclusion probabilities $\pi_i$ of $R_N$ differ from the Poisson parameters $p_i$ used to construct it; their relationship is characterized in \cite{Hajek64}. Fixing the sample size reduces estimator variability compared to Poisson sampling, but introduces a dependency structure among the $\epsilon_i$'s. Specifically, the inclusion indicators are negatively associated \citep{joag1983negative}, a property shared by most fixed-size designs and which plays a central role in the concentration arguments of Section~\ref{sec:learning}. 

We provide additional examples of sampling plans in Appendix \ref{app:example-plans} and summarize their main characteristics in Table~\ref{tab:sampling_plans_full}.

\paragraph{Survey sampling and pointwise risk minimization.} As shown in
\cite{pmlr-v63-clemencon64}, empirical pointwise risk minimization can be extended to the case where a survey plan selects the training dataset. Placing ourselves in the superpopulation model, we assume that $\{(Z_i, W_i): i = 1, \ldots, N\}$ are independent copies of a pair $(Z, W)$, where $W$ encodes the auxiliary information driving the sampling design. The learning task consists in minimizing $L(\theta) = \mathbb{E}[l(\theta, Z)]$ over $\Theta$, based on a sample $S$ selected using the $W_i$'s. This naturally leads to minimizing the \emph{HT risk}:
\begin{equation}\label{eq:HTpointRisk}
    \widehat{L}_{D_N}(\theta) := \frac{1}{N} \sum_{i \in S} \frac{l(\theta, Z_i)}{\pi_i}.
\end{equation}
Upper confidence bounds of order $O_{\mathbb{P}}(1/\sqrt{n})$ for the excess risk $L(\widehat{\theta}_{D_N}) - \min_{\theta \in \Theta} L(\theta)$ have been established for Poisson designs in \cite{CBC2017} and for fixed-size designs in
\cite{pmlr-v63-clemencon64}. Our goal is to extend this framework to pairwise learning.

\subsection{\texorpdfstring{$U$}{U}-statistics and pairwise statistical learning}
\label{sec:ustat}

In pairwise learning, the goal is to minimize a risk of the form
$U(\theta) = \mathbb{E}[\ell(\theta, (Z, Z'))]$ over $\Theta$, where $Z$ and $Z'$ are independent copies of a random variable with unknown distribution $P$ on $\mathcal{Z}$, and $\ell : \Theta \times \mathcal{Z}^2 \to \mathbb{R}$ is a symmetric pairwise loss function. This formulation covers a wide range of tasks. In ranking, the objective is to find a scoring function $s : \mathcal{X} \to \mathbb{R}$ minimizing the ranking risk $\mathcal{R}(s) = \mathbb{E}[\mathbb{I}\{(s(X)-s(X'))(Y-Y') < 0\}]$, see \cite{CLV08}.
In \emph{metric learning}, the goal is to find a metric $\delta$ on  $\mathcal{X}$ such that pairs with the same label are close and pairs with different labels are far, formalized as minimizing $\mathcal{M}(\delta) = \mathbb{E}[\psi((\delta(X,X')-1)(2\mathbb{I}\{Y=Y'\}-1))]$ for a convex surrogate $\psi$, see \cite{BHS15}.

\paragraph{Empirical pairwise risk.} Given $N$ independent copies $Z_1, \ldots, Z_N$ of $Z$, the minimum-variance unbiased estimator of $U(\theta)$ is the pairwise average
\begin{equation}\label{eq:emp_pair_risk}
    \widehat{U}_N(\theta) = \frac{2}{N(N-1)} \sum_{1 \leq i < j \leq N}
    \ell(\theta, (Z_i, Z_j)),
\end{equation}
which is a U-statistic of order 2 with symmetric kernel $\ell(\theta, (z, z'))$,
see \cite{Lee90}. Concentration results for the U-process $\{\widehat{U}_N(\theta)\}_{\theta \in \Theta}$ can be established via Hoeffding
decompositions and decoupling \citep{PenaGine99, houdre03, CLV08}, yielding generalization bounds of order $O_{\mathbb{P}}(1/\sqrt{N})$ under standard complexity assumptions. However, evaluating \eqref{eq:emp_pair_risk} requires computing all $O(N^2)$ pairwise terms, which is prohibitive at scale. The following sections show how survey sampling provides principled grounded remedy.

\section[Survey sampling and U-statistics]{Survey sampling \& $U$-statistics}\label{sec:estimation}

 Placing ourselves in the superpopulation framework $(Z,W,\mathcal{I})$ previously recalled and setting $(\mathbf{Z}_N,\mathbf{W}_N)=((Z_1,\ldots,Z_N),(W_1,\ldots,W_N))$, this section focuses on leveraging survey sampling for empirical pairwise risk estimation with limited computation/memory capacities.  The theoretical arguments we develop here are also supported by solid empirical evidence. Throughout the section, we omit the argument $\theta$ to lighten notation and assume that $U=\mathbb{E}[\ell(Z,Z')]>0$.

\subsection{Sampled pairs \textit{vs} pairs of sampled observations} 

With a view to processing only a fraction of the high-dimensional data $Z$ from the large population $\mathcal{I}$ to estimate the empirical pairwise risk $\widehat{U}_N$, a first ``natural'' strategy consists in selecting randomly a sample $S$, of reduced (expected) size but as informative as possible, by means of an appropriate survey plan $D_N$ on $\mathcal{I}$ relying on the auxiliary information $\mathbf{W}_N$. Based on the sampled data $\{Z_i:\; i\in S\}$, an estimator of \eqref{eq:emp_pair_risk} is then
\begin{equation}
\label{eq:emp_pair_risk_on_obs}
\widetilde{U}_{D_N}=\frac{2}{N(N-1)}\sum_{i<j\; \text{ in } S}\frac{\ell(Z_i,Z_j)}{\pi_{i,j}}\\=\frac{2}{N(N-1)}\sum_{1\leq i<j\leq N}\frac{\epsilon_i\epsilon_j}{\pi_{i,j}}\ell(Z_i,Z_j).
\end{equation}
Note that to ensure zero bias, it is necessary to weight by the inverse of the 2nd order inclusion probabilities in this case, \textit{i.e.} $\mathbb{E}[\epsilon_i\epsilon_j]=\pi_{i,j}$. In the Poisson case however, the latter are expressed as a function of the 1st order inclusion probabilities (\textit{i.e.} $p_{i,j}=p_ip_j$) and the conditional variance of \eqref{eq:emp_pair_risk_on_obs} given the $W_i$'s is then given by
\begin{multline}\label{eq:var_Poisson_obs}
\binom{N}{2}^{-2} Var(\widetilde{U}_{P_N}\mid \mathbf{W}_N)=\sum_{i<j}\frac{1-p_ip_j}{p_ip_j}\ell^2(Z_i,Z_j) 
+\frac{1}{2}\sum_{i=1}^N\sum_{\substack{j\neq i\\ k\neq i\\ j<k}} \frac{1 -p_i}{p_i} \ell(Z_i,Z_j)\ell(Z_i,Z_k),
\end{multline}
omitting to specify the dependence of the $p_i$'s in $\mathbf{W}_N$ when the latter is fixed for simplicity.
The conditional expectation of the number of pairs $(Z_i,Z_j)$ actually used in the computation given $\mathbf{W}_N$ is $\mathbb{E}[\sum_{i<j}\epsilon_i\epsilon_j\mid \mathbf{W}_N]=\sum_{i<j}p_ip_j:=\tilde{n}$.

\noindent {\bf Sampling pairs of observations.} Alternatively, rather than sampling individual observations and forming next pairs of sampled observations, one could consider sampling directly pairs of indices by means of a survey scheme $\bar{D}_{\bar{N}}$ in the population $\mathcal{J}:=\{(i,j):\; 1\leq i<j\leq N\}$ of cardinality $\bar{N}=N(N-1)/2$ based on information $\mathbf{W}_N$, yielding a sample of pairs $\bar{S}$ and the estimator of \eqref{eq:emp_pair_risk}:
\begin{equation}
\label{eq:emp_pair_risk_on_pairs}
\bar{U}_{\bar{D}_{\bar{N}}}=\frac{2}{N(N-1)}\sum_{(i,j) \in \bar{S}}\frac{\ell(Z_i,Z_j)}{\bar{\pi}_{i,j}}\\
=\frac{2}{N(N-1)}\sum_{i<j}\frac{\bar{\epsilon}_{i,j}}{\bar{\pi}_{i,j}}\ell(Z_i,Z_j),
\end{equation}
with $\bar{\pi}_{i,j}(\mathbf{W}_N)=\mathbb{P}(\bar{\epsilon}_{i,j}=1\mid \mathbf{Z}_N,\mathbf{W}_N)$ and $\bar{\epsilon}_{i,j}=\mathbb{I}\{(i,j)\in \bar{S}\}$ for $i<j$.
For more clarity, we provide table \ref{tab:reminder_notations} in the Appendix to remind the different notations between observation and pair sampling. Conditioned upon $(\mathbf{Z}_N,\mathbf{W}_N)$, \eqref{eq:emp_pair_risk_on_pairs} is a HT estimator, whose variance can be deduced from the Sen-Yates-Grundy formula \eqref{eq:HT_condvar} when the pairwise plan $\bar{D}_{\bar{N}}$ is of fixed size and is equal to
\begin{equation}
Var(\bar{U}_{\bar{P}_{\bar{N}}}\mid \mathbf{Z}_N,\mathbf{W}_N)=\\
\frac{4}{N(N-1)^2}\sum_{i<j}\left(\frac{1}{\bar{p}_{i,j}}-1 \right)\ell^2(Z_i,Z_j)
\end{equation}
when it is a pairwise Poisson scheme $\bar{P}_{\bar{N}}$ with inclusion probabilities $\bar{p}_{i,j}=\bar{p}_{i,j}(\mathbf{W}_N)$ for $i<j$ and conditional expected size $\bar{n}=\sum_{i<j}\bar{p}_{i,j}$. Reminders on notation for pair- and observation-level sampling plans are collected in Appendix~\ref{app:notations}.
Finding an explicit expression for the inclusion probabilities that minimizes the above quantity under the constraint that the expected number of pairs used  $\sum_{i<j}\bar{p}_{i,j}$ is fixed, equal to $\bar{n}\in (0,\bar{N})$ with probability $1$ requires the following assumption.
\begin{assumption}\label{hyp1}
Fix $\bar{n}\in (0,\bar{N})$. We have:
\begin{equation*}
0<\bar{n}\esssup \rho^*(W,W')\leq \bar{N}\essinf \rho^*(W,W')<+\infty,
\end{equation*}
where $\rho^*(W,W')=\sqrt{\mathbb{E}[\ell^2(Z,Z')\mid W,W']}$.
\end{assumption}

The result below describes the inclusion probabilities of the optimal pairwise Poisson plan with a given expected size.
\begin{theorem}\label{thm:opt_pair}{\sc (Optimal pairwise Poisson scheme)} 
Fix $\bar{n}\in (0,\bar{N})$ and suppose that Assumption \ref{hyp1} is fulfilled. The pairwise Poisson scheme solution to the minimization problem under size constraint
\begin{equation}
\min_{\bar{P}_{\bar{N}}}\mathbb{E}[Var(\bar{U}_{\bar{P}_{\bar{N}}}\mid \mathbf{Z}_N,\mathbf{W}_N)\mid \mathbf{W}_N]
\text{ subject to } \sum_{i<j}\bar{p}_{i,j}=\bar{n}
\end{equation}
is the pairwise Poisson plan $\bar{P}^*_{\bar{N}}$ defined by the link function $\rho^*$,
namely that with inclusion probabilities given by $\bar{p}^*_{i,j}(\mathbf{W}_N)=\bar{n}\rho^*(W_i,W_j)/\sum_{k<l}\rho^*(W_k,W_l)$ for $i<j$. For $\bar{D}_{\bar{N}}=\bar{P}^*_{\bar{N}}$, the (unconditional) variance of the estimator \eqref{eq:emp_pair_risk_on_pairs} of $U=\mathbb{E}[\ell(Z,Z')]$, minimal among all HT estimators based on a pairwise Poisson scheme with fixed size $\bar{n}$, is given by $Var(\bar{U}_{\bar{P}^*_{\bar{N}}})=Var(\widehat{U}_N)+\bar{\sigma}^2_*$, where
\begin{equation*}
\bar{\sigma}^2_*= \frac{\bar{N}-1}{\bar{N}\bar{n}}\left(\mathbb{E}[\rho^*(W,W')]\right)^2-\frac{\bar{n}-1}{\bar{N}\bar{n}}\mathbb{E}\left[(\rho^*(W, W'))^2\right].
\end{equation*}
\end{theorem}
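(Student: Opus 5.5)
Conditioning on $\mathbf{W}_N$ turns the optimization into a deterministic Lagrangian problem, and the variance formula then follows from the law of total variance.

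\textbf{Step 1: reduce the objective.} The Poisson conditional variance is $c_N\sum_{i<j}(1/\bar p_{i,j}-1)\ell^2(Z_i,Z_j)$, where $c_N$ denotes the normalizing constant $4/(N(N-1))^2=1/\bar{N}^2$. Since the $\bar p_{i,j}$ are functions of $\mathbf{W}_N$ only, taking the conditional expectation given $\mathbf{W}_N$ gives
\[
c_N\sum_{i<j}\Bigl(\frac{1}{\bar p_{i,j}}-1\Bigr)\rho^{*2}(W_i,W_j).
\]
This uses the superpopulation independence, so that $\mathbb{E}[\ell^2(Z_i,Z_j)\mid \mathbf{W}_N]=\rho^{*2}(W_i,W_j)$. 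The term $-\sum\rho^{*2}$ does not depend on the design. It therefore suffices to minimize $\sum_{i<j}\rho^{*2}_{ij}/\bar p_{i,j}$ subject to $\sum\bar p_{i,j}=\bar n$ and $\bar p_{i,j}\in(0,1]$.

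\textbf{Step 2: solve the constrained problem.} By Cauchy--Schwarz,
\[
\Bigl(\sum\rho^*_{ij}\Bigr)^2\le \sum\frac{\rho^{*2}_{ij}}{\bar p_{i,j}}\,\sum\bar p_{i,j}=\bar n\sum\frac{\rho^{*2}_{ij}}{\bar p_{i,j}},
\]
with equality iff $\bar p_{i,j}\propto\rho^*_{ij}$. Alternatively, a Lagrangian computation on this strictly convex objective gives the same minimizer. The unconstrained minimizer is therefore $\bar p^*_{i,j}=\bar n\rho^*_{ij}/\sum_{k<l}\rho^*_{kl}$. Assumption~\ref{hyp1} is exactly what makes it feasible: it gives $\bar n\max\rho^*\le \bar N\min\rho^*\le\sum_{k<l}\rho^*_{kl}$ almost surely, hence $\bar p^*_{i,j}\le1$. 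Positivity holds because $\rho^*>0$. The box constraints are thus inactive, and $\bar P^*_{\bar N}$ is optimal.

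\textbf{Step 3: compute the variance.} The law of total variance gives
\[
\mathrm{Var}(\bar U)=\mathrm{Var}\bigl(\mathbb{E}[\bar U\mid\mathbf{Z}_N,\mathbf{W}_N]\bigr)+\mathbb{E}\bigl[\mathrm{Var}(\bar U\mid\mathbf{Z}_N,\mathbf{W}_N)\bigr].
\]
HT unbiasedness makes the first term $\mathrm{Var}(\widehat U_N)$. At the optimum, the second term equals
\[
c_N\,\mathbb{E}\Bigl[\frac{1}{\bar n}\Bigl(\sum_{i<j}\rho^*_{ij}\Bigr)^2-\sum_{i<j}\rho^{*2}_{ij}\Bigr].
\]
To evaluate it, expand $(\sum\rho^*_{ij})^2=\sum\rho^{*2}_{ij}+\sum_{(i,j)\neq(k,l)}\rho^*_{ij}\rho^*_{kl}$. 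The off-diagonal expectations are not all equal, because pairs sharing an index are dependent. The stated formula treats the $\bar N(\bar N-1)$ cross terms as having expectation $(\mathbb{E}\rho^*)^2$. Collecting terms then gives
\[
\frac{1}{\bar N^2}\Bigl[\frac{\bar N}{\bar n}\mathbb{E}\rho^{*2}+\frac{\bar N(\bar N-1)}{\bar n}(\mathbb{E}\rho^*)^2-\bar N\,\mathbb{E}\rho^{*2}\Bigr],
\]
which equals $\frac{\bar N-1}{\bar N\bar n}(\mathbb{E}\rho^*)^2-\frac{\bar n-1}{\bar N\bar n}\mathbb{E}\rho^{*2}$, matching $\bar\sigma^2_*$.

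\textbf{Main obstacle.} The hard part is this last expectation. Pairs sharing an index contribute $\mathbb{E}[\rho^*(W,W')\rho^*(W,W'')]$ rather than $(\mathbb{E}\rho^*)^2$, and there are $O(N^3)$ such pairs among the $O(N^4)$ cross terms. I would first check whether the authors' constant arises from a decoupling or approximation convention. If not, I would state the exact formula with an extra $O(1/N)$-relative correction term from the overlapping pairs, and show that the displayed $\bar\sigma^2_*$ is its leading term.
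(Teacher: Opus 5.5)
Your Steps 1--3 are the paper's proof: condition on $\mathbf{W}_N$ so that $\ell^2(Z_i,Z_j)$ becomes $\rho^*(W_i,W_j)^2$, optimize over the $\bar{p}_{i,j}$, then apply the law of total variance with conditional unbiasedness. The paper uses a Lagrange multiplier argument where you use Cauchy--Schwarz. Your version is slightly cleaner: it certifies the global minimum and gives its value $\bar{N}^{-2}\bigl(\bar{n}^{-1}(\sum_{i<j}\rho^*_{ij})^2-\sum_{i<j}\rho^{*2}_{ij}\bigr)$ at once, with feasibility from Assumption~\ref{hyp1} exactly as in the paper.

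The obstacle you flag is genuine, and the paper does not get around it. Its proof passes from this conditional minimum to $\bar{\sigma}^2_*$ in one line. That line is exactly the substitution $\mathbb{E}[\rho^*_{ij}\rho^*_{kl}]=(\mathbb{E}[\rho^*(W,W')])^2$ for all $(i,j)\neq(k,l)$ that you describe, with no decoupling or asymptotic convention behind it. So there is nothing to recover: the displayed identity for $\mathrm{Var}(\bar{U}_{\bar{P}^*_{\bar{N}}})$ is false in general, and your fallback is the right move.

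Concretely, let $\mu=\mathbb{E}[\rho^*(W,W')]$, $\nu=\mathbb{E}[\rho^*(W,W')^2]$ and $h(w)=\mathbb{E}[\rho^*(w,W')]$. With $W,W',W''$ i.i.d., overlapping pairs contribute $\mathbb{E}[\rho^*(W,W')\rho^*(W,W'')]=\mathbb{E}[h(W)^2]$. Each pair shares exactly one index with $2(N-2)$ pairs and is disjoint from $\binom{N-2}{2}$ pairs. Hence $\mathbb{E}[(\sum_{i<j}\rho^*_{ij})^2]=\bar{N}\bigl[\nu+2(N-2)\mathbb{E}[h(W)^2]+\binom{N-2}{2}\mu^2\bigr]$, and
\[
\mathbb{E}\bigl[\mathrm{Var}(\bar{U}_{\bar{P}^*_{\bar{N}}}\mid \mathbf{Z}_N,\mathbf{W}_N)\bigr]=\bar{\sigma}^2_*+\frac{2(N-2)}{\bar{N}\bar{n}}\,\mathrm{Var}\bigl(h(W)\bigr).
\]
For $N\geq 3$, the stated $\bar{\sigma}^2_*$ is therefore exact if and only if $h$ is a.s.\ constant, and is otherwise a strict lower bound. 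The optimality part of the theorem is unaffected, since the minimization holds pointwise in $\mathbf{W}_N$.

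One caution about calling the displayed $\bar{\sigma}^2_*$ the leading term: only $\frac{\bar{N}-1}{\bar{N}\bar{n}}\mu^2\approx\mu^2/\bar{n}$ is leading. The omitted correction is $\approx 4\,\mathrm{Var}(h(W))/(N\bar{n})$. The retained term $\frac{\bar{n}-1}{\bar{N}\bar{n}}\nu$ is at most $\nu/\bar{N}\approx 2\nu/N^2$. In the regime $\bar{n}=O(N)$ that the paper emphasizes (Corollary~\ref{cor:sample}), the dropped term is therefore of the same order as the kept one or larger. You should either state the exact expression above or keep only $\mu^2/\bar{n}$ as the first-order approximation.
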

Refer to the Appendix for the technical proof.
Observe that $\mathbb{E}[ \ell^2(Z,Z')]\geq U^2>0$ and define $\psi(W)=\mathbb{E}[ \ell^2(Z,Z')\mid W ]/\sqrt{\mathbb{E}[ \ell^2(Z,Z')]}$. Consider the event 
$$
\mathcal{E}_{\ell}=\left\{\mathbb{E}\left[ \ell^2(Z,Z')\mid W,W' \right]= \psi(W)\psi(W')\right\}.
$$
The hypothesis below basically stipulates that the function describing the auxiliary information provided by the pair $W,W'$ on the quantity $\ell^2(Z,Z')$ cannot be expressed as a tensor product of a function of $W$ and a function of $W'$. In particular, it prohibits $\ell(Z,Z')$ from being expressed as $l(Z)l(Z')$ for some measurable function $l:\mathcal{Z}\to \mathbb{R}_+$. 
\begin{assumption}\label{hyp2}
The event $\mathcal{E}_{\ell}$ does not occur with probability one.
\end{assumption}

The following result shows that no estimator of the form \eqref{eq:emp_pair_risk_on_obs} based on a Poisson plan $P_N$ on $\mathcal{I}$ with an expected number of pairs sampled equal to $\sum_{i<j}p_ip_j=\bar{n}$ can achieve an expected conditional variance as low as that of $\bar{U}_{\bar{P}^*_N}$.
\begin{proposition}\label{prop:gain}
Fix $\bar{n}\in (0,\bar{N})$ and suppose that Assumption \ref{hyp1} is fulfilled. Let $P_N$ any Poisson scheme on $\mathcal{I}$ based on information $\mathbf{W}_N$ with inclusion probabilities $p_i$ s.t. $\sum_{i<j}p_ip_j=\bar{n}$. We almost-surely have:
\begin{equation*}
\mathbb{E}[Var(\widetilde{U}_{P_N}\mid \mathbf{Z}_N,\mathbf{W}_N)\mid \mathbf{W}_N]\geq\\
\mathbb{E}[Var(\bar{U}_{\bar{P}^*_{\bar{N}}}\mid \mathbf{Z}_N,\mathbf{W}_N)\mid \mathbf{W}_N].
\end{equation*}
If, in addition, Assumption \ref{hyp2} holds true, then
\begin{equation*}
\mathbb{E}[Var(\widetilde{U}_{P_N}\mid \mathbf{Z}_N,\mathbf{W}_N)]>
\bar{\sigma}^2_*.
\end{equation*}
\end{proposition}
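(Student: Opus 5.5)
The plan is to view the observation-level Poisson estimator $\widetilde{U}_{P_N}$ as a pairwise Poisson-type HT estimator, and to show that its expected conditional variance dominates that of a pairwise Poisson scheme with inclusion probabilities $p_ip_j$. The comparison then reduces to Theorem \ref{thm:opt_pair}.

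\textbf{Step 1: reduction to a pairwise Poisson plan.} Conditionally on $(\mathbf{Z}_N,\mathbf{W}_N)$, the variance of $\widetilde{U}_{P_N}$ is given by \eqref{eq:var_Poisson_obs}. Its first sum is exactly the conditional variance of $\bar U$ under the pairwise Poisson plan $\bar P_{\bar N}$ with $\bar p_{i,j}=p_ip_j$; this plan has expected size $\sum_{i<j}p_ip_j=\bar n$. The second sum is a covariance term $\sum_i\frac{1-p_i}{p_i}\ell(Z_i,Z_j)\ell(Z_i,Z_k)$. Since $\ell\ge 0$ (as implicit in $U>0$ and in the use of $\rho^*$ as a link function), and $p_i\le 1$, this term is nonnegative. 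Hence
\[
\mathbb{E}[Var(\widetilde{U}_{P_N}\mid \mathbf{Z}_N,\mathbf{W}_N)\mid \mathbf{W}_N]\ \ge\ \mathbb{E}[Var(\bar U_{\bar P_{\bar N}}\mid \mathbf{Z}_N,\mathbf{W}_N)\mid \mathbf{W}_N].
\]
Theorem \ref{thm:opt_pair} states that $\bar P^*_{\bar N}$ minimizes the right-hand side among pairwise Poisson plans of expected size $\bar n$, which gives the first inequality. One caveat must be checked: $p_ip_j\in(0,1)$ must be admissible in the minimization. It is, since the minimization in Theorem \ref{thm:opt_pair} is over all pairwise Poisson plans of size $\bar n$.

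\textbf{Step 2: strict inequality.} Take unconditional expectations. By Theorem \ref{thm:opt_pair}, the right-hand side averages to $\bar\sigma^2_*$, up to the normalisation used there; I will be careful that the ``$Var(\widehat U_N)$'' part is exactly the law-of-total-variance piece, which is not included here. It remains to show that equality forces $\mathcal{E}_\ell$ almost surely.

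\textbf{Step 3: the equality case.} Equality requires two things. First, the covariance term must have zero expectation. Second, $p_ip_j$ must minimise the convex objective $\sum_{i<j}\rho^*(W_i,W_j)^2/\bar p_{i,j}$ under the linear constraint. By strict convexity (Cauchy–Schwarz equality), the minimiser is unique, so equality forces $p_ip_j\propto\rho^*(W_i,W_j)$ for all $i<j$ a.s. Thus $\rho^{*2}(W_i,W_j)=c\,g(W_i)g(W_j)$ with $g$ a function of a single argument (the $p_i$ being measurable functions of $\mathbf{W}_N$). By exchangeability, one can let the other coordinates vary and fix a choice of $g$. Integrating over $W'$ identifies $g$ with $\psi$ up to constants, which yields $\mathcal{E}_\ell$ with probability one and contradicts Assumption \ref{hyp2}. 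Alternatively, strictness may come directly from the covariance term: its conditional expectation involves $\mathbb{E}[\ell(Z_i,Z_j)\ell(Z_i,Z_k)\mid\mathbf W_N]$, which is positive unless $\ell=0$. This route would give strict inequality even without Assumption \ref{hyp2}, except when all $p_i=1$, which is impossible since $\bar n<\bar N$.

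\textbf{Main obstacle.} The delicate step is the equality characterisation, specifically making the measure-theoretic passage from ``$p_ip_j\propto\rho^*(W_i,W_j)$ for the realized sample'' to ``a.s. as functions.'' I would first try the cleaner covariance argument. I would fall back on the Cauchy–Schwarz/product-structure argument only if $\ell$ can vanish on large sets.
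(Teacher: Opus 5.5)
Your Step 1 and the Cauchy--Schwarz/product-structure branch of Step 3 follow the paper's proof. You split \eqref{eq:var_Poisson_obs} into the pairwise Poisson term with $\bar p_{i,j}=p_ip_j$ plus a cross term, bound the former by Theorem~\ref{thm:opt_pair}, and get strictness because $p_ip_j\propto\rho^*(W_i,W_j)$ would force $\mathcal{E}_\ell$. You are right, and more explicit than the paper, that the first inequality needs the cross term to be nonnegative. That requires $\ell\ge 0$ as an extra hypothesis, which the paper uses silently; it does not follow from $U>0$ or from the form of $\rho^*$.

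The product-structure step needs repair. Measurability of $p_i$ in $\mathbf{W}_N$ does not make $p_i$ a function of $W_i$. What works is to eliminate the $p$'s over four distinct indices, $\rho^*(W_1,W_2)\rho^*(W_3,W_4)=\rho^*(W_1,W_4)\rho^*(W_2,W_3)$ a.s. Then freeze $(w_3,w_4)$ by Fubini to get $\rho^*(w,w')=a(w)a(w')$ (using symmetry), and integrate to find $a^2=\psi$. This needs $N\ge 4$. For $N=3$, setting $p_1^2=c\rho^*_{12}\rho^*_{13}/\rho^*_{23}$ and its cyclic versions solves $p_ip_j=c\rho^*_{ij}$ for any positive triple (admissible for small $\bar n$). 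For $N=2$ the two estimators coincide in law, so the strict inequality is actually false there. The paper's ``straightforward'' equality case skips all of this.

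The covariance shortcut you would try first is a genuinely different route, and in this framework a stronger one, but its justification must change. The conditional moment $\mathbb{E}[\ell(Z_i,Z_j)\ell(Z_i,Z_k)\mid\mathbf{W}_N]$ is not positive a.s. Take $W$ uniform on $[0,1]$, $Z=(W,\xi)$ with $\xi$ a fair coin independent of $W$, and $\ell((w,\xi),(w',\xi'))=\mathbb{I}\{\xi=\mathbb{I}\{w<w'\},\,\xi'=\mathbb{I}\{w'<w\}\}$. Then Assumption~\ref{hyp1} holds ($\rho^*\equiv 1/2$), yet the product vanishes identically whenever $W_i$ lies between $W_j$ and $W_k$. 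So ``not all $p_i=1$'' is not the right criterion: for $N=3$, setting $p_i=1$ at the two extreme ranks would kill the cross term.

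What is true is an unconditional statement. The expected cross term equals $2\bar N^{-2}\sum_i\sum_{j<l,\;j,l\neq i}\mathbb{E}[(p_i^{-1}-1)\ell(Z_i,Z_j)\ell(Z_i,Z_l)]$. Here $p_i^{-1}-1>0$ a.s. because a Poisson plan has $\mathbf{p}_N\in(0,1)^N$. Also $\mathbb{E}[\ell(Z_i,Z_j)\ell(Z_i,Z_l)]=\mathbb{E}[h(Z)^2]\ge U^2>0$ with $h(z)=\mathbb{E}[\ell(z,Z')]$, so every summand is positive. This gives the strict inequality for all $N\ge 3$ without Assumption~\ref{hyp2} and without any product-structure argument. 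The paper's route needs Assumption~\ref{hyp2} and, made rigorous, $N\ge 4$. In exchange it gives the structural fact the authors emphasise: the factorised probabilities $p_ip_j$ cannot reach the pairwise optimum even before the cross term is counted.
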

The proof follows from the arguments demonstrating the previous theorem combined with \eqref{eq:var_Poisson_obs} (see Appendix for details). In summary, for the same auxiliary information $\mathbf{W}_N$ and
expected number $\bar{n}$ of evaluated pairs, only direct pair
sampling achieves optimal accuracy in the least squares sense. 

We point out that the result can be extended to fixed-size survey sampling plans, provided that the 2nd-order inclusion probabilities can be approximated based on those of the 1st order, given the expression of conditional variance in this case. In particular, this is true for conditional Poisson (rejective) schemes. Due to space constraints, this extension is postponed to the Appendix.

In practice, of course, the optimal pairwise link function $\rho^*(w,w')$ is generally unknown. As illustrated in the following subsection, it is often sufficient to use a pairwise Poisson plan $\bar{P}_{\bar{N}}$ defined by a link function $\rho(w,w')$ such that $\rho(W,W')$ and $\ell(W,W')$ tend to vary in the same direction to approximate the quantity \eqref{eq:emp_pair_risk} by $\bar{U}_{\bar{P}_{\bar{N}}}$, which has expected conditional variance converging to
\begin{equation}\label{eq:sigma_rho}
\sigma^2_{\rho}:=\frac{\mathbb{E}[\rho(W,W')]}{\bar{n}}\mathbb{E}\left[\frac{\ell^2(Z,Z')}{\rho(W,W')}\right], \qquad \text{as $\bar{N}$ becomes asymptotically large.}
\end{equation}

\subsection{Estimation experiments}

\paragraph{Numerical illustration}

\begin{figure}[t]
    \centering
    \begin{subfigure}[b]{0.3\textwidth}
        \centering
\includegraphics[width=\textwidth]{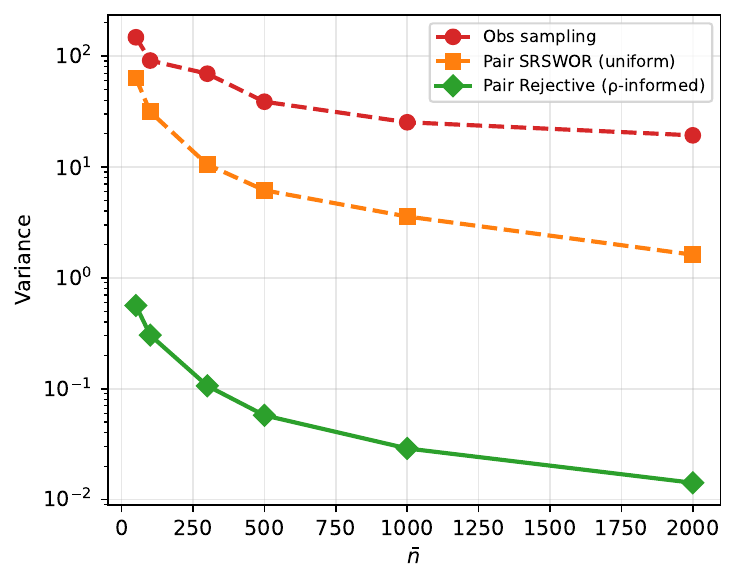}
        \caption{Rejective}
    \label{fig:toy_variance}
    \end{subfigure}
    \hfill
     \begin{subfigure}[b]{0.3\textwidth}
        \centering
\includegraphics[width=\textwidth]{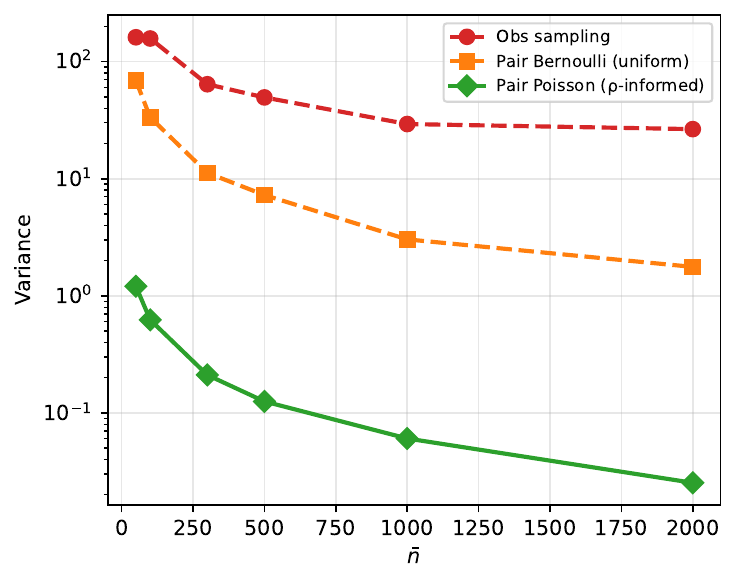}
        \caption{Poisson}
 \label{fig:toy_variance_poisson}
    \end{subfigure}
    \hfill
    \begin{subfigure}[b]{0.3\textwidth}
        \centering
\includegraphics[width=\textwidth]{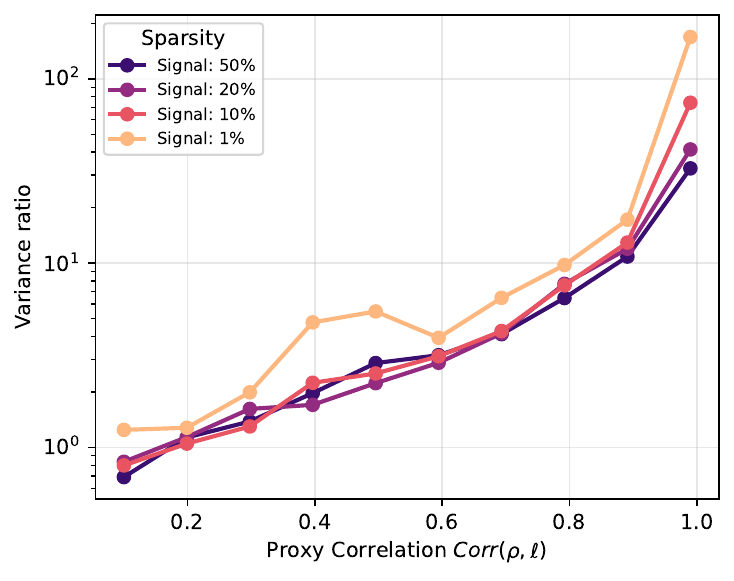}
\caption{Impact of $\rho$}
\label{fig:toy_sensitivity}
    \end{subfigure}
\caption{Illustration of the HT-risk estimator efficiency. 
(a) and (b) Comparison of the variance of $\widehat{R}_{\pi}(\theta)$ for a sparse pairwise task. (c) Reduction of variance gain as a function of the correlation between the auxiliary information $\rho$ and the loss $\ell$.}
\label{fig:toy_example_results}
\end{figure}
We evaluate the HT-risk estimator $\hat{R}(\theta)$ on a synthetic task where the signal is sparse: only $0.9\%$ of pairs contribute to $93\%$ of the total risk $R(\theta)$. We compare our approach using a $\rho$-informed rejective plan against two baselines: uniform pair sampling (SRSWOR) and standard observation sampling. Figure~\ref{fig:toy_variance} shows that our method achieves a variance reduction of up to $121\times$ over uniform pair sampling, without evaluating all $O(N^2)$ pairs. Furthermore, Figure~\ref{fig:toy_sensitivity} demonstrates the robustness of this approach: the efficiency gain grows with $\mathrm{Corr}(\rho, \ell)$, confirming that the theoretical benefits of Theorem~\ref{thm:opt_pair} are accessible in practical, noisy settings. 

\paragraph{Real-world datasets.}
The three tasks below are representative of standard pairwise learning problems, where evaluating the pairwise loss over all pairs is the dominant computational bottleneck.
\textit{MovieLens} ($N=1\,683$) targets preference asymmetry between items via a BPR-style pairwise loss, as in collaborative filtering and learning-to-rank. The auxiliary information $\rho(W_i, W_j)$ is derived from item popularity and mean ratings, metadata available before any pairwise computation. \textit{Cora} ($N=2\,708$) models node classification as a pairwise hinge loss over node feature embeddings, a setting encountered in graph contrastive learning. Here $\rho(W_i, W_j)$ combines a low-dimensional projection of node features and common neighbors, naturally available without pairwise feature computation.
\textit{LFW} ($N=13\,233$) addresses metric learning for face verification, optimizing a pairwise FAR-based loss over image embeddings obtained from a pretrained FaceNet model \cite{schroff2015facenet}. With over 87 million pairs, this is the largest dataset considered. As auxiliary information $\rho(W_i, W_j)$, we use the visual attributes of \citet{kumar2009attribute} most correlated with elevated false acceptance rates. In all three cases, $\rho(W_i, W_j)$ is available at negligible cost. It requires no additional supervision beyond what is already available in standard pipelines, and is positively correlated with the pairwise loss.
\begin{table}
\centering
\caption{
    Variance reduction at $\bar{n} = N$ (the regime of Corollary~1)
    across datasets and sampling designs.
    \emph{Bern.\,/\,Pois.} and \emph{SRS\,/\,Rej.} report the ratio of
    estimator variance between uniform and informed pair sampling.
    \emph{Obs.\,/\,Pair} reports the ratio between observation-based
    and uniform pair sampling.
    All ratios are $\geq 1$; larger values indicate greater variance reduction.
}
\label{tab:variance_reduction}
\resizebox{\linewidth}{!}{
\begin{tabular}{@{}llccrrrr@{}}
\toprule
Dataset & Task & $\mathrm{CV}^2(\ell)$ & $\mathrm{Corr}(\rho, \ell)$
        & Bern.\,/\,Pois.\,$(\uparrow)$
        & SRS\,/\,Rej.\,$(\uparrow)$
        & Obs.\,/\,Pair\,$(\uparrow)$ \\
\midrule
Toy
  & Sparse signal 
  & $99.8$ & $0.97$
  & $64.0$x &$105.0$x  & $9.0$x \\
MovieLens
  & Ranking loss
  & $30.8$ & $0.51$
  & $3.1$x &$3.8$x & $42.0$x \\
Cora
  & Pairwise hinge loss
  & $8.1$ & $0.36$
  & $2.0$x & $2.4$x& $40.2$x \\
LFW
  & FAR estimation 
  & $3.98$ & $0.16$
  & $1.2$x & $2.5$x & $76.8$x \\

\bottomrule
\end{tabular}
}
\end{table}

\paragraph{Results.}
Table \ref{tab:variance_reduction} reports variance ratios at $\bar n = N$ across datasets and sampling designs. 

\textit{Informed vs. uniform pair sampling}. As predicted by Theorem~\ref{thm:opt_pair} and confirmed by Figure~\ref{fig:toy_sensitivity}, the gain from informed sampling increases with the quality of the proxy $\rho$. The variance is reduced from $105\times$ for the sparse toy task where $\rho$ is nearly a perfect proxy, down to $2-3.8 \times$ on MovieLens and Cora where the correlation $\mathrm{Corr}(\rho, \ell)$ is moderate, and $1.2-2.5\times$ on LFW where it is weak. Note that the gain is systematically smaller for Bernoulli/Poisson than for SRS/Rejective, as the variance of Poisson-based estimators is partly dominated by sample size variability rather than pair selection, an effect documented in Appendix.

\textit{Pair-level vs. observation-level sampling.} The Obs./Pair column reflects the result established in Proposition~\ref{prop:gain}: pair-level sampling dominates observation-level sampling regardless of $\rho$, with a gain that reflects the intrinsic heterogeneity of the loss, measured by its squared coefficient of
variation $\text{CV}^2(\ell) := \text{Var}(\ell)/\mathbb{E}[\ell]^2$. LFW illustrates this most starkly (76.8× reduction despite a weak proxy). MovieLens and Cora confirm the trend at intermediate heterogeneity levels.  This also implies that no Poisson scheme on observations, however well-informed, can match the efficiency of direct pair sampling, because observation-level inclusion probabilities are constrained to factorize as $p_i p_j$ and cannot adapt to the full geometry of the pairwise loss. These results are further illustrated in Figure~\ref{fig:supp-cora-movielens} in Appendix~\ref{app:additional-results}, which displays MAE and variance as a function of $\bar{n}$ for Cora, MovieLens and LFW.

The estimation results above show that survey-sampled pairs yield provably lower variance than both observation-based and uniform pair sampling. We now ask whether this variance reduction translates into better generalization when the sampled pairs are used to minimize the pairwise risk.


\section{Pairwise learning based on survey-sampled pairs}\label{sec:learning}
We now move beyond loss estimation and investigate the statistical performance of learning procedures trained on survey-sampled pairs. We consider learning by minimizing the survey-weighted empirical pairwise risk based on a subsample of pairs drawn according to a sampling design $\bar{D}_{\bar{N}}$:
$
    \bar{\theta}_{\bar{D}_{\bar{N}}} \in \arg\min_{\theta \in \Theta}\;
    \bar{U}_{\bar{D}_{\bar{N}}}(\theta),
$
where $\bar{U}_{\bar{D}_{\bar{N}}}(\theta)$ denotes the HT estimator of the empirical
pairwise risk introduced in \eqref{eq:emp_pair_risk_on_pairs}. We place ourselves in the superpopulation
framework and study the deviation between the survey-weighted empirical pairwise risk and
the true pairwise risk.

\subsection{Generalization guarantees}
\label{sec:generalization}

For any $\theta \in \Theta$, the excess risk admits the decomposition
\begin{equation}\label{eq:decomp}
    \bar{U}_{\bar{D}_{\bar{N}}}(\theta) - U(\theta) =
    \underbrace{\bar{U}_{\bar{D}_{\bar{N}}}(\theta) - \widehat{U}_N(\theta)}_{\text{survey-induced error}}
    + \underbrace{\widehat{U}_N(\theta) - U(\theta)}_{\text{U-statistic error}},
\end{equation}
where $\widehat{U}_N(\theta)$ is the complete U-statistic based on all pairs. Uniform deviation bounds for the second term are available from existing results on U-statistics \citep{CLV08}. The first term captures the additional error induced by the survey sampling of pairs and constitutes the main object of our analysis. We make the following assumptions throughout this section.
\begin{enumerate}[label=(A\arabic*)]
    \item \label{ass:bounded} \textbf{(Bounded loss)} $|\ell_\theta(z, z')| \leq M$ for
    all $\theta \in \Theta$ and all $(z, z') \in \mathcal{Z}^2$.
    \item \label{ass:lipschitz} \textbf{(Lipschitz loss)} There exists $L > 0$ such that
    $|\ell_\theta(z,z') - \ell_{\theta'}(z,z')| \leq L\|\theta - \theta'\|$ for all
    $\theta, \theta' \in \Theta$ and $(z,z') \in \mathcal{Z}^2$.
    \item \label{ass:incl} \textbf{(Non-degenerate inclusion probabilities)}
    $\bar{\pi}_{i,j} \geq \bar{\pi}_{\min} > 0$ for all $(i,j) \in \mathcal{J}$.
\end{enumerate}

The following theorem provides a uniform deviation bound for the survey-induced error, valid for both Poisson and negatively associated designs. Recall that under a pairwise plan with link function $\rho$ satisfying Assumption~\ref{hyp1}, the inclusion probabilities are $\bar{\pi}_{i,j} = \bar{n}\rho(W_i,W_j)/S^\rho_N$ where $S^\rho_N = \sum_{k < l}\rho(W_k, W_l)$. We define the variance proxy
\begin{equation}\label{eq:Vrho}
    V^\rho := M^2 \sum_{i < j} \left(\frac{S^\rho_N}{\bar{n}\,\rho(W_i, W_j)} - 1\right),
\end{equation}
which quantifies the concentration of inclusion probabilities relative to the link function $\rho$: the more $\rho$ is aligned with the loss, the smaller $V^\rho$.

\begin{theorem}[Uniform deviation under pair sampling]
\label{thm:main-learning}
Assume \ref{ass:bounded}--\ref{ass:incl} and consider a pairwise sampling design
$\bar{D}_{\bar{N}}$ satisfying either \emph{(i)} independence of the
$(\bar{\epsilon}_{i,j})$ \emph{(Poisson)}, or \emph{(ii)} negative association of the
$(\bar{\epsilon}_{i,j})$ \emph{(rejective, and more generally any fixed-size design)}.
Let $\Theta_\eta$ be an $\eta$-net of $\Theta$ for $\|\cdot\|$, with covering number
$\mathcal{N}(\Theta, \|\cdot\|, \eta)$. Then for any $\eta > 0$ and $\delta \in (0,1)$,
with probability at least $1 - \delta$,
\begin{equation}\label{eq:main_bound}
    \sup_{\theta \in \Theta} \left|\bar{U}_{\bar{D}_{\bar{N}}}(\theta) -
    \widehat{U}_N(\theta)\right| \leq \frac{1}{\bar{N}}\left[
    \sqrt{2V^\rho \log\frac{2\mathcal{N}(\Theta,\|\cdot\|,\eta)}{\delta}}
    + \frac{2B}{3}\log\frac{2\mathcal{N}(\Theta,\|\cdot\|,\eta)}{\delta}
    \right] + \frac{L}{\bar{\pi}_{\min}}\eta,
\end{equation}
where $B = M/\bar{\pi}_{\min}$, and $\bar{n}$ denotes the \emph{expected} number of sampled pairs under Poisson sampling, and the \emph{exact} sample size under rejective sampling. 
\end{theorem}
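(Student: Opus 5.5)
Throughout, I work conditionally on $(\mathbf{Z}_N,\mathbf{W}_N)$. The only randomness left is then in the indicators $(\bar{\epsilon}_{i,j})$, and the inclusion probabilities $\bar{\pi}_{i,j}=\bar{n}\rho(W_i,W_j)/S^\rho_N$ are deterministic. For fixed $\theta$, write
\[
\bar{U}_{\bar{D}_{\bar{N}}}(\theta)-\widehat{U}_N(\theta)=\frac{1}{\bar{N}}\sum_{i<j}X_{i,j}(\theta),\qquad X_{i,j}(\theta)=\Big(\frac{\bar{\epsilon}_{i,j}}{\bar{\pi}_{i,j}}-1\Big)\ell_\theta(Z_i,Z_j).
\]
The $X_{i,j}(\theta)$ are centred. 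By \ref{ass:bounded} and \ref{ass:incl} they satisfy $|X_{i,j}|\le M/\bar{\pi}_{\min}=B$; strictly this gives $\max(1/\bar\pi_{i,j}-1,1)M\le B$. Their variances are $\mathrm{Var}(X_{i,j})=(1/\bar{\pi}_{i,j}-1)\ell_\theta^2\le M^2(S^\rho_N/(\bar n\rho(W_i,W_j))-1)$, and summing over $i<j$ gives at most $V^\rho$. The plan has three steps: a Bernstein bound for a single $\theta$, a union bound over the net $\Theta_\eta$, and a Lipschitz discretization step. The resulting bound does not depend on $(\mathbf{Z}_N,\mathbf{W}_N)$, so it also holds unconditionally.

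\textbf{Step 1 (fixed $\theta$).} In case (i) the $X_{i,j}$ are independent, and the classical Bernstein inequality gives, for each sign,
\[
\mathbb{P}\Big(\pm\sum_{i<j}X_{i,j}\ge t\Big)\le \exp\!\Big(-\frac{t^2}{2(V^\rho+Bt/3)}\Big).
\]
Inverting in the usual way gives $t=\sqrt{2V^\rho\log(1/\delta')}+\tfrac{2B}{3}\log(1/\delta')$. The constant $2B/3$ comes from the standard inversion, which is slightly loose. In case (ii) I invoke negative association, noting that each $X_{i,j}$ is an increasing function (for $\ell_\theta\ge0$, otherwise decreasing) of a single $\bar{\epsilon}_{i,j}$. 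For $\lambda>0$, the variables $e^{\lambda X_{i,j}}$ are monotone functions of disjoint coordinates. Negative association then yields $\mathbb{E}\prod e^{\lambda X_{i,j}}\le\prod\mathbb{E}e^{\lambda X_{i,j}}$; see \cite{joag1983negative} and the argument used in \cite{pmlr-v63-clemencon64}. So the moment generating function is dominated by that of an independent sum with the same marginals, and the Chernoff/Bernstein derivation goes through verbatim. The main obstacle is here: the signs of $\ell_\theta$ may vary across pairs, so the functions are not all monotone in the same direction. To handle this I would split $\ell_\theta=\ell_\theta^+-\ell_\theta^-$, or better, observe that for a fixed $\theta$ each pair contributes a function monotone in its own coordinate. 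I would then use the version of negative association for products of nonnegative functions that are all nondecreasing \emph{or} all nonincreasing, applying it after grouping pairs by the sign of $\ell_\theta(Z_i,Z_j)$. A cleaner option is to shift the loss by $M$ so that it is nonnegative: the shift contributes $M\sum(\bar\epsilon/\bar\pi-1)$, which is itself a monotone sum. If even this creates difficulty, I fall back on assuming $\ell\ge0$ as in the paper's applications.

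\textbf{Step 2 (union bound).} Apply Step 1 to every $\theta\in\Theta_\eta$ with $\delta'=\delta/(2\mathcal{N}(\Theta,\|\cdot\|,\eta))$, covering both tails. This gives, with probability at least $1-\delta$ simultaneously over the net, $\max_{\theta\in\Theta_\eta}|\bar U-\widehat U_N|$ bounded by the bracketed term of \eqref{eq:main_bound} divided by $\bar N$.

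\textbf{Step 3 (discretization).} For arbitrary $\theta$, pick $\theta'\in\Theta_\eta$ with $\|\theta-\theta'\|\le\eta$. By \ref{ass:lipschitz},
\[
|\bar U(\theta)-\bar U(\theta')|\le \frac{1}{\bar N}\sum_{i<j}\frac{\bar\epsilon_{i,j}}{\bar\pi_{i,j}}L\eta\le \frac{L\eta}{\bar\pi_{\min}}\cdot\frac{\#\bar S}{\bar N}\le\frac{L\eta}{\bar\pi_{\min}},
\]
and similarly $|\widehat U_N(\theta)-\widehat U_N(\theta')|\le L\eta$. Strictly, combining the two gives $L\eta(1+1/\bar\pi_{\min})$. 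To recover exactly the stated $L\eta/\bar\pi_{\min}$, I would bound the difference of the centred terms $X_{i,j}(\theta)-X_{i,j}(\theta')$ directly. Its magnitude is $|\bar\epsilon_{i,j}/\bar\pi_{i,j}-1|\,L\eta\le L\eta\max(1/\bar\pi_{i,j}-1,1)\le L\eta/\bar\pi_{\min}$ whenever $\bar\pi_{\min}\le 1/2$, and otherwise at most $L\eta\le L\eta/\bar\pi_{\min}$ anyway. Averaging over $\bar N$ pairs then gives $L\eta/\bar\pi_{\min}$. Combining the three steps yields \eqref{eq:main_bound}.
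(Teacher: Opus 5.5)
Your proposal is the paper's proof: a conditional Bernstein bound for fixed $\theta$ (directly under independence, via MGF domination under negative association), a union bound over $\Theta_\eta$, a Lipschitz discretization, and $V(\theta)\le V^\rho$. Your sharper pointwise bound $|\bar\epsilon_{i,j}/\bar\pi_{i,j}-1|\le\max(1/\bar\pi_{i,j}-1,1)\le 1/\bar\pi_{\min}$ is in fact what gives the stated constants $B=M/\bar\pi_{\min}$ and $L\eta/\bar\pi_{\min}$; the appendix uses $1/\bar\pi_{\min}+1\le 2/\bar\pi_{\min}$ and only gets twice these.

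One caution on case (ii): your sign-grouping and shift-by-$M$ fixes do not work as described. Under negative association, a nondecreasing function of one block times a nonincreasing function of another satisfies $\mathbb{E}[fg]\ge\mathbb{E}[f]\,\mathbb{E}[g]$, which is the wrong direction, so the two pieces cannot be recombined without splitting $t$ and inflating the constants; the shift also raises the variance proxy to as much as $4V^\rho$. Your fallback $\ell_\theta\ge 0$ is therefore genuinely needed, and it is exactly what the paper tacitly assumes when it calls the $X_{ij}(\theta)$ ``non-decreasing functions'' of the indicators.
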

The proofs for cases \emph{(i)} and \emph{(ii)} are given in Appendices~\ref{app:sec4-poisson} and~\ref{app:sec4-rejective} respectively, using Bernstein's inequality for independent and negatively associated variables. The bound \eqref{eq:main_bound} makes explicit how the choice of link function $\rho$ controls the tightness of the guarantee through $V^\rho$. In particular, the uniform plan $\rho \equiv 1$ recovers a generic constant, while any informative $\rho$ yields a strictly smaller $V^\rho$. This mirrors the optimality structure of Theorem~\ref{thm:opt_pair}: the same link
function $\rho^*$ that minimizes estimation variance also minimizes $V^\rho$, so that better auxiliary information simultaneously reduces variance and tightens generalization guarantees.
Combining \eqref{eq:main_bound} with standard U-process bounds \citep{CLV08}, the excess risk of the learned predictor satisfies
\begin{equation}\label{eq:excess_risk}
    U(\bar{\theta}_{\bar{D}_{\bar{N}}}) - \inf_{\theta \in \Theta} U(\theta) =
    O_{\mathbb{P}}\!\left(\sqrt{\frac{\log \mathcal{N}(\Theta)}{\bar{n}}}\right)
    + O_{\mathbb{P}}\!\left(\sqrt{\frac{\log \mathcal{N}(\Theta)}{N}}\right),
\end{equation}
where the first term is the survey-induced error and the second is the standard U-statistic approximation error.

\begin{corollary}[Sample complexity]\label{cor:sample}
Let $c > 0$ and suppose $\bar{n} = cN$. Then both terms in \eqref{eq:excess_risk} are of order $O_{\mathbb{P}}(1/\sqrt{N})$, and the excess risk satisfies
\begin{equation}
        U(\bar{\theta}_{\bar{D}_{\bar{N}}}) - \inf_{\theta \in \Theta} U(\theta) \leq
        \left(\frac{C_1}{\sqrt{c}} + C_2\right)\sqrt{\frac{\log \mathcal{N}(\Theta)}{N}},
\end{equation}
with high probability, where $C_1$ and $C_2$ are constants depending only on $M$, $\bar{\pi}_{\min}$, and $L$. In particular, survey sampling with $\bar{n} = cN$ pairs matches the $O_{\mathbb{P}}(1/\sqrt{N})$ rate of full-pair learning, while reducing the computational cost from $O(N^2)$ to $O(N)$ pairs. 
\end{corollary}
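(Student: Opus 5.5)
The plan is to plug $\bar n = cN$ into the decomposition \eqref{eq:decomp} and the two-term bound \eqref{eq:excess_risk}, while keeping the constants explicit. I start from the standard ERM argument applied to $\bar\theta=\bar{\theta}_{\bar{D}_{\bar{N}}}$ and a minimizer (or near-minimizer) $\theta^*$ of $U$:
\begin{equation*}
U(\bar\theta)-\inf_{\Theta}U \leq 2\sup_{\theta\in\Theta}\left|\bar{U}_{\bar{D}_{\bar{N}}}(\theta)-\widehat{U}_N(\theta)\right| + 2\sup_{\theta\in\Theta}\left|\widehat{U}_N(\theta)-U(\theta)\right|.
\end{equation*}
This holds because $\bar{U}_{\bar{D}_{\bar{N}}}(\bar\theta)\leq \bar{U}_{\bar{D}_{\bar{N}}}(\theta^*)$, and each difference $U-\bar U$ splits along \eqref{eq:decomp}. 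The two suprema are then controlled separately on events of probability at least $1-\delta/2$, and I combine them with a union bound.

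For the survey-induced term I would invoke Theorem~\ref{thm:main-learning} and bound each piece using only $M$ and $\bar{\pi}_{\min}$. The key step is to bound $V^\rho$. Since $\bar\pi_{i,j}=\bar n\rho(W_i,W_j)/S^\rho_N$, we have $V^\rho = M^2\sum_{i<j}(1/\bar\pi_{i,j}-1)$. Using $\sum_{i<j}\bar\pi_{i,j}=\bar n$ and Cauchy--Schwarz, or more crudely $1/\bar\pi_{i,j}\leq \bar N/\bar n$ (this follows from Assumption~\ref{hyp1}-type bounds on the ratio $\rho/\mathrm{mean}(\rho)$), I aim for $V^\rho\leq C M^2\bar N^2/\bar n$. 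Then
\begin{equation*}
\frac{1}{\bar N}\sqrt{2V^\rho\log(2\mathcal N/\delta)}\lesssim M\sqrt{\log(2\mathcal N/\delta)/\bar n}=\frac{M}{\sqrt{c}}\sqrt{\log(2\mathcal N/\delta)/N}.
\end{equation*}
The Bernstein linear term $\frac{2B}{3\bar N}\log(\cdot)$ is of order $\log\mathcal N/(\bar\pi_{\min}\bar N)$. Here I use $\bar\pi_{\min}\gtrsim \bar n/\bar N = c/(N-1)\cdot 2$, again from Assumption~\ref{hyp1}, so this term becomes $O(\log\mathcal N/N)$, which is dominated by $\sqrt{\log\mathcal N/N}$ once $\log\mathcal N\leq N$. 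Finally, I choose $\eta$ so that $L\eta/\bar\pi_{\min}\leq \sqrt{1/N}$. Concretely, I take $\eta$ of order $\bar\pi_{\min}/(L\sqrt N)$ and write $\mathcal N(\Theta)$ for the covering number at that scale, so that the discretization error is absorbed into the constant.

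For the U-statistic term I would use the uniform U-process bound of \cite{CLV08}: with probability at least $1-\delta/2$, $\sup_\theta|\widehat U_N-U|\leq C M\sqrt{\log(2\mathcal N/\delta)/N}$ plus a Lipschitz discretization term $L\eta$, which is handled as above. This term gives $C_2$, while the previous step gives $C_1/\sqrt c$. Summing the two yields the displayed bound with high probability, and the $O(N)$ pair cost is immediate because $\bar n=cN$.

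I expect the main obstacle to be making the constants $C_1$ and $C_2$ genuinely independent of $N$. The difficulty is that $\bar\pi_{\min}$ necessarily scales like $\bar n/\bar N\sim c/N$, so the factors $B=M/\bar\pi_{\min}$ and $L\eta/\bar\pi_{\min}$ are not $O(1)$ in absolute terms. I would resolve this by working with the normalized quantity $\bar\pi_{\min}\bar N/\bar n$, which is bounded below by the ratio $\operatorname{ess\,inf}\rho/\operatorname{ess\,sup}\rho$ from Assumption~\ref{hyp1}, and by reading the corollary's dependence on $\bar\pi_{\min}$ as dependence on this normalized version. The linear Bernstein term then carries a $1/c$ factor at order $\log\mathcal N/N$, and I would absorb it into $C_1/\sqrt c$ under the mild condition $c\gtrsim \log\mathcal N/N$.
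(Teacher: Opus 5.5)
Your proof is correct and has the same skeleton as the paper's. Both use the ERM reduction through \eqref{eq:decomp}, then Theorem~\ref{thm:main-learning} with the crude bound $V^\rho\le M^2\frac{\bar N^2}{\bar n}\frac{\bar\rho}{\rho_{\min}}$ (with $\bar\rho=S^\rho_N/\bar N$ and $\rho_{\min}=\min_{i<j}\rho(W_i,W_j)$), and the U-process bound of \cite{CLV08} for the second term.

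Where you differ is in the treatment of $\bar\pi_{\min}$, and there your version is the sound one. The paper fixes $\eta=1/\sqrt N$, asserts $\frac{L}{\bar\pi_{\min}}\eta=O(1/\sqrt N)$, and never addresses the linear Bernstein term. But the $\bar\pi_{i,j}$ sum to $\bar n$, so one always has $\bar\pi_{\min}\le\bar n/\bar N=2c/(N-1)$. That remainder is therefore at least $L(N-1)/(2c\sqrt N)\to\infty$, and no constant depending on the raw $\bar\pi_{\min}$ can be free of $N$.

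Your choice $\eta\asymp\bar\pi_{\min}/(L\sqrt N)$ repairs this. You combine it with tracking the normalized quantity $\bar\pi_{\min}\bar N/\bar n$, which equals $\rho_{\min}/\bar\rho$ when $\bar\pi_{\min}=\min_{i<j}\bar\pi_{i,j}$; its reciprocal is exactly the extra factor the paper's $C_1$ carries anyway. With these two choices every term is genuinely $O(\sqrt{\log\mathcal N/N})$. The price is harmless: the covering number is taken at a scale of order $N^{-3/2}$ (only a constant-factor change in $\log\mathcal N$ for parametric classes), and the linear term needs the side condition $\log\mathcal N\lesssim cN$.

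If you would rather keep $\eta=1/\sqrt N$, bound the discretization error directly instead of through $|\bar\epsilon_{i,j}/\bar\pi_{i,j}-1|\le 2/\bar\pi_{\min}$. It is at most $L\eta\bigl(1+\frac{1}{\bar N}\sum_{i<j}\bar\epsilon_{i,j}/\bar\pi_{i,j}\bigr)\le L\eta\bigl(1+\frac{\bar\rho}{\rho_{\min}}\frac{\#\bar S}{\bar n}\bigr)$. This is $O(\eta)$ surely for fixed-size designs, and with high probability under Poisson sampling.

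There is one slip in your $V^\rho$ step. The inequality $1/\bar\pi_{i,j}\le\bar N/\bar n$ cannot hold for every pair unless the design is uniform, since the $\bar\pi_{i,j}$ average to $\bar n/\bar N$. Cauchy--Schwarz yields $\sum_{i<j}1/\bar\pi_{i,j}\ge\bar N^2/\bar n$, which is the wrong direction. The inequality you actually need, and which should define your constant $C$, is $1/\bar\pi_{i,j}\le\frac{\bar\rho}{\rho_{\min}}\frac{\bar N}{\bar n}\le\frac{\esssup\rho}{\essinf\rho}\frac{\bar N}{\bar n}$.
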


In particular, the reduction factor $N/(2c)$ grows unboundedly with $N$, 
reflecting the quadratic-versus-linear contrast between the two sampling regimes. We next assess empirically how survey-sampled pairwise training behaves in practice.

\subsection{Learning experiments}
\paragraph{Node Classification}
We train a two-layer GCN \citep{gcn} on Cora with a pairwise hinge loss and evaluate node classification accuracy via logistic regression on the learned embeddings. We compare full-pair training (oracle), Bernoulli sampling, Poisson sampling, and a method inspired by hard negative mining but adapted to supervised tasks we call hard sampling (see Appendix \ref{appendix:experiment_settings}).

\begin{figure}
    \centering
    \begin{subfigure}[b]{0.32\textwidth} \includegraphics[width=\linewidth]{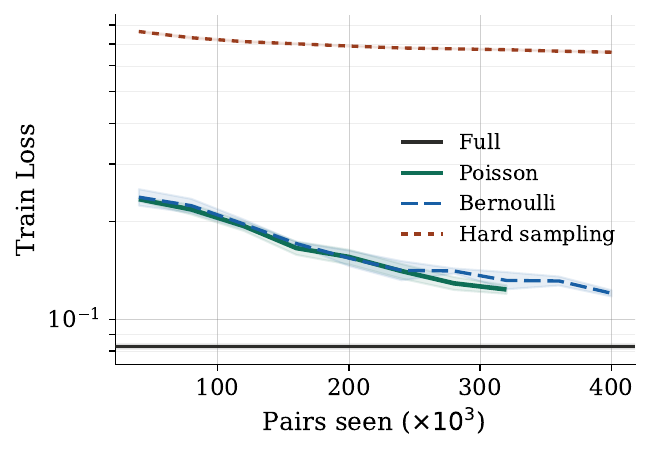}
        \caption{}
        \label{fig:loss-cora}
    \end{subfigure}
    \begin{subfigure}[b]{0.32\textwidth} \includegraphics[width=\linewidth]{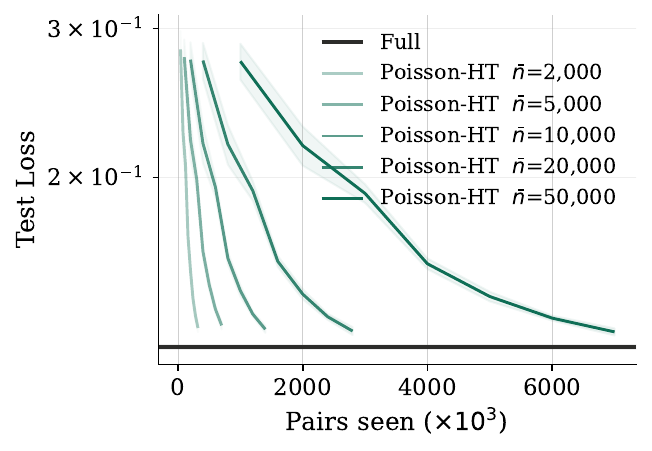}
        \caption{}
        \label{fig:budget-cora}
    \end{subfigure}
     \begin{subfigure}[b]{0.32\textwidth} \includegraphics[width=\linewidth]{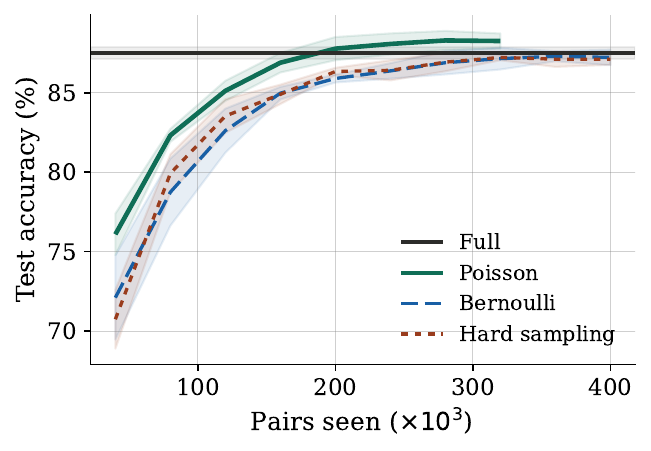}
        \caption{}
        \label{fig:acc-cora}
    \end{subfigure}\\
    \begin{subfigure}[b]{0.32\textwidth} \includegraphics[width=\linewidth]{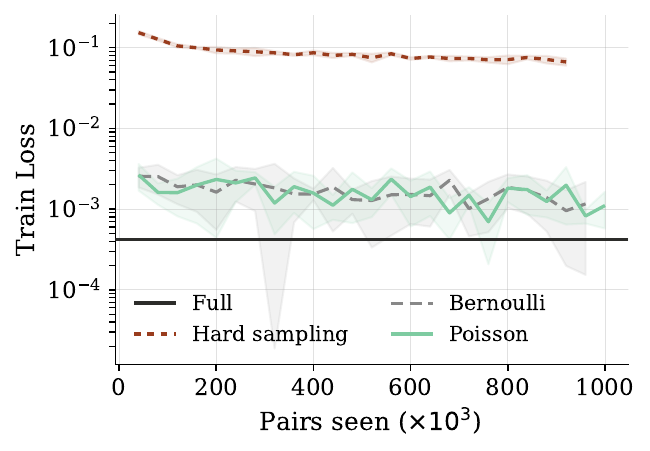}
        \caption{}
        \label{fig:loss-lfw}
    \end{subfigure}
    \begin{subfigure}[b]{0.32\textwidth} \includegraphics[width=\linewidth]{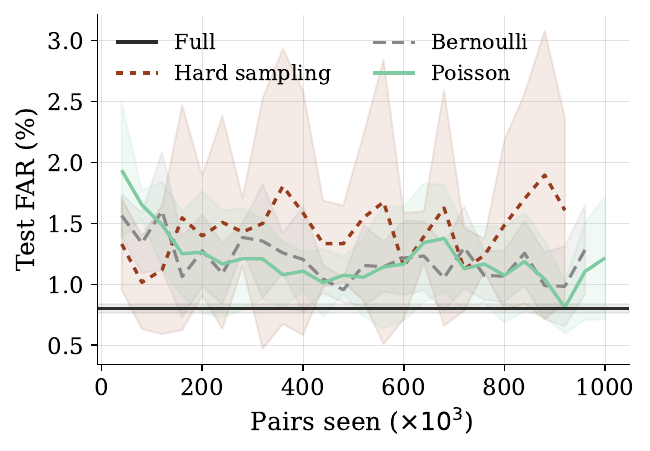}
        \caption{}
        \label{fig:far-lfw}
    \end{subfigure}
     \begin{subfigure}[b]{0.32\textwidth} \includegraphics[width=\linewidth]{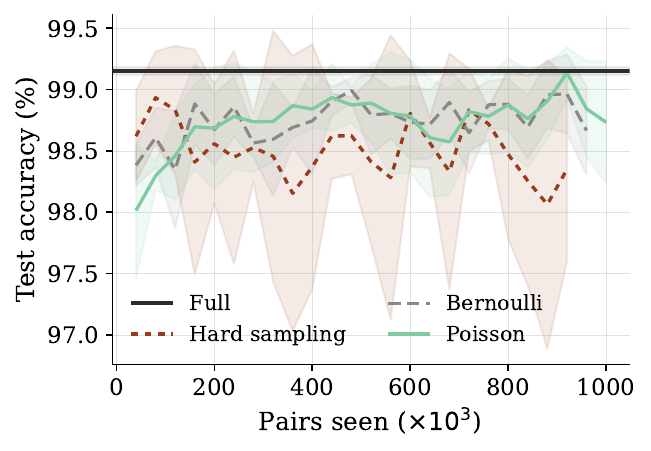}
        \caption{}
        \label{fig:acc-lfw}
    \end{subfigure}
    \caption{Results on Cora (a, b, c) and LFW (d, e, f). Results are averaged over 5 seeds, shaded = $\pm$std). (a, d) At budget $\bar n= 2000$ per epoch (over 150 epochs). (b) All Poisson test losses as the number of pairs seen increases, with different number of sampled pairs per epoch. (c, f) Best test accuracy and (e) FAR as a function of the number of pairs seen. The full baselines were trained on \textasciitilde 3 million pairs for Cora, and \textasciitilde 80 millions for LFW.}
    \label{fig:cora-learning}
\end{figure}
\paragraph{Face Recognition} We train a siamese contrastive model \cite{hadsell06} on LFW and evaluate it on the face recognition task, using the same protocol as for node classification: full-pair training, Poisson, Bernoulli, and hard sampling are compared against the full-pair oracle. We consider two types of auxiliary information to define the inclusion probabilities of the pairs : (i) the top-10 face attributes most correlated with the false-accept rate, as an interpretable, annotation-based proxy for pair difficulty; (ii) the contrastive loss itself, evaluated on pre-trained FaceNet embeddings, which serves as an oracle setting where auxiliary information is nearly perfectly correlated with the target loss.

\paragraph{Results}
At budget $\bar{n} = 2{,}000$, Poisson and Bernoulli both converge toward the full-pair loss while hard sampling plateaus due to its uncorrected selection bias
(Figures~\ref{fig:loss-cora},~\ref{fig:loss-lfw}). Figure~\ref{fig:budget-cora} confirms that Poisson sampling approaches the full-pair baseline across all
budgets (Corollary~1), with larger budgets closing the remaining gap at the cost of more computation. Figures~\ref{fig:acc-cora} and~\ref{fig:acc-lfw} further
show that Poisson sampling remains competitive while requiring substantially fewer pairs. In face verification, false acceptance, that is incorrectly granting access to an impostor, is the more consequential error type from a security standpoint;
Figure~\ref{fig:far-lfw} shows that targeting pairs with higher FAR allows the sampling estimator to match the full-baseline rate after approximately 900{,}000 pairs. The more limited gains compared to node classification are consistent with Table~\ref{tab:variance_reduction}: the auxiliary information available for LFW exhibits weak correlation with the target loss, yet informed sampling degrades gracefully and never underperforms its uninformed counterpart. When the contrastive loss itself is precomputed once over all pairs at the beginning of training and used directly as auxiliary information, substantially better results are obtained (Figure~\ref{fig:idealcaselfwlearning}, Appendix). This oracle setting provides an upper bound on achievable gains, confirming that the limited results on LFW stem from the weakness of the available auxiliary information rather than from any intrinsic limitation of the method.

\section{Conclusion \& perspectives}\label{sec:conclusion}
In this article, we present both theoretical and empirical arguments demonstrating, in a quantifiable manner, that when estimating empirical pairwise loss from a limited subset of a massive database, it is preferable to sample pairs of observations directly rather than selecting observations and forming pairs a posteriori. This trade-off between computational efficiency and estimation accuracy is further improved when auxiliary information allows the definition of pairwise sampling weights positively correlated with the loss. As we showed, minimizing such an empirical pairwise loss estimate based on observations that are no longer i.i.d. yields predictive functions with generalization guarantees, while numerical experiments confirm that task-relevant auxiliary information leads to performance close to full-pair learning at a significantly reduced computational cost. Although sharper learning-rate bounds accounting for variance reduction remain to be established, this work paves the way for more frugal pairwise learning methods through appropriate data selection schemes.

\bibliographystyle{plainnat}
\bibliography{references}

@article{CHAUVET2026,
title = {Exponential inequalities for sampling designs},
author = {G. Chauvet and M. Gerber},
journal = {Statistics \& Probability Letters},
volume = {232},
pages = {110654},
year = {2026},
issn = {0167-7152},
doi = {https://doi.org/10.1016/j.spl.2026.110654},
url = {https://www.sciencedirect.com/science/article/pii/S0167715226000180}
}

@article{yates53,
  title={Selection without replacement from within strata with probability proportional to size},
  author={Yates, F. and Grundy, P.M.},
  journal={Journal of the Royal Statistical Society: Series B (Methodological)},
  volume={15},
  number={2},
  pages={253--261},
  year={1953},
  publisher={Wiley Online Library}
}

@inproceedings{musgrave2020,
author = {Musgrave, K. and Belongie, S. and Lim, S.N.},
title = {A Metric Learning Reality Check},
year = {2020},
isbn = {978-3-030-58594-5},
publisher = {Springer-Verlag},
address = {Berlin, Heidelberg},
url = {https://doi.org/10.1007/978-3-030-58595-2_41},
doi = {10.1007/978-3-030-58595-2_41},
booktitle = {Computer Vision – ECCV 2020: 16th European Conference, Glasgow, UK, August 23–28, 2020, Proceedings, Part XXV},
pages = {681–699},
numpages = {19},
location = {Glasgow, United Kingdom}
}

@article{BLRG12,
author={Boistard, H. and Lopuha\^a, H.P. and Ruiz-Gazen, A.},
title={Approximation of rejective sampling inclusion probabilities and application to high order correlations},
journal={Electron. J. Statist.},
volume={6},
pages={1967-1983},
year={2012},
}

@InProceedings{pmlr-v63-clemencon64,
  title = 	 {{Learning from Survey Training Samples: Rate Bounds for Horvitz-Thompson Risk Minimizers}},
  author = 	 {Cl\'emen\c{c}on, S. and Bertail, P. and Papa, G.},
  booktitle = 	 {Proceedings of The 8th Asian Conference on Machine Learning},
  pages = 	 {142--157},
  year = 	 {2016},
  editor = 	 {Durrant, Robert J. and Kim, Kee-Eung},
  volume = 	 {63},
  series = 	 {Proceedings of Machine Learning Research},
  publisher =    {PMLR}
}

@article{joag1983negative,
  title={Negative association of random variables with applications},
  author={Joag-Dev, K. and Proschan, F.},
  journal={The Annals of Statistics},
  pages={286--295},
  year={1983},
  publisher={JSTOR}
}

@article{CLEM2015a,
  author={Cl\'emen\c{c}on, S. and Bertail, P. and Chautru, E. and Papa, G.},
 title = {Optimal survey schemes for stochastic gradient descent with applications to {M}-estimation},
	DOI= "10.1051/ps/2018021",
	journal = {ESAIM: PS},
	year = 2019,
	volume = 23,
	pages = "310-337",
}

@article{Dupacova,
  title={A note on rejective sampling},
  author={Dupacova, J.},
  journal={Contribution to Statistics (J. Hajek memorial volume) Academia Prague},
  year={1979},
  pages={71-78}
}

@article{Hoeffding63,
 author={Hoeffding, W.},
 year={1963},
title={Probability inequalities for sums of bounded
random variables},
 journal={J. Amer. Statist. Assoc.}, 
 volume={58},
 pages={13-30}
}

@BOOK{tille2006sampling,
  title={{Sampling algorithms}},
  author={Till{\'e}, Y.},
  year={2006},
  publisher={Springer Series in Statistics}
}

@ARTICLE{HT51, 
	author={Horvitz, D.G. and Thompson, D.J.}, 
	year={1951}, 
	title={{A generalization of sampling without replacement from a finite universe}}, 
	journal={JASA}, 
	volume={47}, 
	pages={663-685}
}

@ARTICLE{Hajek64,
	author={Hajek, J.}, 
	year={1964}, 
	title={{Asymptotic theory of rejective sampling with varying probabilities from a finite population}}, 
	journal={The Annals of Mathematical Statistics},
	volume={35}, 
	number={4}, 
	pages={1491-1523}
}

@ARTICLE{HR62, 
	author={Hartley, H.O. and Rao, J.N.K.}, 
	year={1962}, 
	title={{Sampling with unequal probabilities and without replacement}}, 
	journal={Ann. Math. Statist.},
	volume={33}, 
	pages={350-374}
}

@ARTICLE{CBC2017,
	author = {Cl\'{e}men\c{c}on, S. and Bertail, P. and Chautru, E.},
	title = {{Sampling and Empirical Risk Minimization}},
	journal = {Statistics},
    volume = {51},
number = {1},
pages = {30--42},
year = {2017},
publisher = {Taylor \& Francis},
doi = {10.1080/02331888.2016.1259810},


URL = { 
    
        https://doi.org/10.1080/02331888.2016.1259810
    
    

},
eprint = { 
    
        https://doi.org/10.1080/02331888.2016.1259810
    
    

}
}

@article{rao88,
 ISSN = {01621459, 1537274X},
 URL = {http://www.jstor.org/stable/2288945},
 author = {J. N. K. Rao and C. F. J. Wu},
 journal = {Journal of the American Statistical Association},
 number = {401},
 pages = {231--241},
 publisher = {[American Statistical Association, Taylor \& Francis, Ltd.]},
 title = {Resampling Inference With Complex Survey Data},
 urldate = {2025-12-03},
 volume = {83},
 year = {1988}
}

@article{sampford1967,
   author  = {Sampford, M. R.},
   title   = {On sampling without replacement with unequal probabilities
              of selection},
   journal = {Biometrika},
   volume  = {54},
   pages   = {499--513},
   year    = {1967}
}

@article{branden2012,
author = {Br{\"{a}}nd\'en, P. and Jonasson, J.},
title = {Negative Dependence in Sampling},
journal = {Scandinavian Journal of Statistics},
volume = {39},
number = {4},
pages = {830-838},
doi = {https://doi.org/10.1111/j.1467-9469.2011.00766.x},
url = {https://onlinelibrary.wiley.com/doi/abs/10.1111/j.1467-9469.2011.00766.x},
eprint = {https://onlinelibrary.wiley.com/doi/pdf/10.1111/j.1467-9469.2011.00766.x},
year = {2012}
}

@inproceedings{gcn,
  title     = {Semi-Supervised Classification with Graph Convolutional Networks},
  author    = {Kipf, T.N. and Welling, M.},
  booktitle = {International Conference on Learning Representations},
  year      = {2017}
}

@inproceedings{pyg,
  title     = {Fast Graph Representation Learning with {PyTorch Geometric}},
  author    = {Fey, M. and Lenssen, J.E.},
  booktitle = {ICLR Workshop on Representation Learning on Graphs and Manifolds},
  year      = {2019}
}

@article{movielens,
  author  = {Harper, F.M. and Konstan, J.A.},
  title   = {The {MovieLens} Datasets: History and Context},
  journal = {ACM Transactions on Interactive Intelligent Systems},
  year    = {2015}
}

@inproceedings{yang2016revisiting,
  title     = {Revisiting Semi-Supervised Learning with Graph Embeddings},
  author    = {Yang, Z. and Cohen, W. and Salakhutdinov, R.},
  booktitle = {International Conference on Machine Learning},
  year      = {2016}
}

@BOOK{Lee90,
	author={Lee, A.J.},
	year={1990},
	title={{${U}$-statistics: Theory and practice}},
	publisher={Marcel Dekker, Inc.},
	address={New York}
}

@article{BardenetMaillard,
  title={Concentration inequalities for sampling without replacement},
  author={Bardenet, R and Maillard, O.A.},
  journal={Bernoulli},
  year={2015},
  volume={21},
  number={3},
  pages={1361-1385}
}

@article{CCB16,
  author  = {S. Cl{{\'e}}men{\c{c}}on and I. Colin and A. Bellet},
  title   = {Scaling-up Empirical Risk Minimization: Optimization of Incomplete $U$-statistics},
  journal = {Journal of Machine Learning Research},
  year    = {2016},
  volume  = {17},
  number  = {76},
  pages   = {1--36},
  url     = {http://jmlr.org/papers/v17/15-012.html}
}

@book{BHS15,
  title={Metric Learning},
  author={Bellet, A. and Habrard, A. and Sebban, M.},
  isbn={9781627053662},
  series={Synthesis Lectures on Artificial Intelligence and Machine Learning},
  url={https://books.google.fr/books?id=SvzRBgAAQBAJ},
  year={2015},
  publisher={Morgan \& Claypool Publishers}
}

@InProceedings{pmlr-v139-bertail21a,
  title = 	 {Learning from Biased Data: A Semi-Parametric Approach},
  author =       {Bertail, P. and Cl{\'e}men{\c{c}}on, S. and Guyonvarch, Y. and Noiry, N.},
  booktitle = 	 {Proceedings of the 38th International Conference on Machine Learning},
  pages = 	 {803--812},
  year = 	 {2021},
  editor = 	 {Meila, Marina and Zhang, Tong},
  volume = 	 {139},
  series = 	 {Proceedings of Machine Learning Research},
  month = 	 {18--24 Jul},
  publisher =    {PMLR},
  url = 	 {https://proceedings.mlr.press/v139/bertail21a.html}}

@INPROCEEDINGS{hadsell06,
  author={Hadsell, R. and Chopra, S. and LeCun, Y.},
  booktitle={2006 IEEE Computer Society Conference on Computer Vision and Pattern Recognition (CVPR'06)}, 
  title={Dimensionality Reduction by Learning an Invariant Mapping}, 
  year={2006},
  volume={2},
  number={},
  pages={1735-1742},
  doi={10.1109/CVPR.2006.100}}

@BOOK{PenaGine99,
  author        = {V. de la Pena and E. Gin\'e},
  title         = {{Decoupling: from Dependence to Independence}},
  publisher     = {Springer},
  year          = {1999},
}

@ARTICLE{CLV08,
  author =       {S. Cl\'emen\c{c}on and G. Lugosi and N. Vayatis},
  title =        {Ranking and empirical risk minimization of {U}-statistics},
  journal =      {The Annals of Statistics},
  year =         {2008},
  volume =       {36},
  number =       {2},
  pages =        {844-874},
}

@InProceedings{houdre03,
author="Houdr{\'e}, C.
and Reynaud-Bouret, P.",
editor="Gin{\'e}, E.
and Houdr{\'e}, C.
and Nualart, D.",
title="Exponential Inequalities, with Constants, for U-statistics of Order Two",
booktitle="Stochastic Inequalities and Applications",
year="2003",
publisher="Birkh{\"a}user Basel",
address="Basel",
pages="55--69",
isbn="978-3-0348-8069-5"
}

@article{haung2008lfw,
author = {Huang, G. and Mattar, M. and Berg, T. and Learned-Miller, E.},
year = {2008},
month = {10},
pages = {},
title = {Labeled Faces in the Wild: A Database for Studying Face Recognition in Unconstrained Environments},
journal = {Tech. rep.}
}

@book{knuth1973art,
  title={The art of computer programming},
  author={Knuth, D.E. and others},
  volume={3},
  year={1973},
  publisher={Addison-Wesley Reading, MA}
}

@article{chao1982general,
 ISSN = {00063444, 14643510},
 URL = {http://www.jstor.org/stable/2336002},
 author = {M. T. Chao},
 journal = {Biometrika},
 number = {3},
 pages = {653--656},
 publisher = {[Oxford University Press, Biometrika Trust]},
 title = {A General Purpose Unequal Probability Sampling Plan},
 urldate = {2026-03-16},
 volume = {69},
 year = {1982}
}

@article{vitter1985random,
  title={Random sampling with a reservoir},
  author={Vitter, J.S.},
  journal={ACM Transactions on Mathematical Software (TOMS)},
  volume={11},
  number={1},
  pages={37--57},
  year={1985},
  publisher={ACM New York, NY, USA}
}

@inproceedings{schroff2015facenet,
  title={Facenet: A unified embedding for face recognition and clustering},
  author={Schroff, F. and Kalenichenko, D. and Philbin, J.},
  booktitle={Proceedings of the IEEE conference on computer vision and pattern recognition},
  pages={815--823},
  year={2015}
}

@misc{zhou2025,
      title={Randomized Pairwise Learning with Adaptive Sampling: A PAC-Bayes Analysis}, 
      author={S. Zhou and Y. Lei and A. Kabán},
      year={2025},
      eprint={2504.02957},
      archivePrefix={arXiv},
      primaryClass={cs.LG},
      url={https://arxiv.org/abs/2504.02957}, 
}

@article{chen2020simclr,
  author       = {T. Chen and
                  S. Kornblith and
                  M. Norouzi and
                  G.E. Hinton},
  title        = {A Simple Framework for Contrastive Learning of Visual Representations},
  journal      = {CoRR},
  volume       = {abs/2002.05709},
  year         = {2020},
  url          = {https://arxiv.org/abs/2002.05709},
  eprinttype   = {arXiv},
  eprint       = {2002.05709},
  bibsource    = {dblp computer science bibliography, https://dblp.org}
}

@article{zbontar2021barlowtwins,
  author       = {J. Zbontar and
                  L. Jing and
                  I. Misra and
                  Y. LeCun and
                  S. Deny},
  title        = {Barlow Twins: Self-Supervised Learning via Redundancy Reduction},
  journal      = {CoRR},
  volume       = {abs/2103.03230},
  year         = {2021},
  url          = {https://arxiv.org/abs/2103.03230},
  eprinttype   = {arXiv},
  eprint       = {2103.03230},
  bibsource    = {dblp computer science bibliography, https://dblp.org}
}

@article{robinson2020contrastive,
  author       = {J. Robinson and
                  C.Y. Chuang and
                  S. Sra and
                  S. Jegelka},
  title        = {Contrastive Learning with Hard Negative Samples},
  journal      = {CoRR},
  volume       = {abs/2010.04592},
  year         = {2020},
  url          = {https://arxiv.org/abs/2010.04592},
  eprinttype   = {arXiv},
  eprint       = {2010.04592},
  bibsource    = {dblp computer science bibliography, https://dblp.org}
}

@inproceedings{katharopoulos2018not,
  title={Not all samples are created equal: Deep learning with importance sampling},
  author={Katharopoulos, A. and Fleuret, F.},
  booktitle={International Conference on Machine Learning},
  pages={2525--2534},
  year={2018},
  organization={PMLR}
}

@inproceedings{zhao2015stochastic,
  title={Stochastic optimization with importance sampling for regularized loss minimization},
  author={Zhao, P. and Zhang, T.},
  booktitle={international Conference on Machine Learning},
  pages={1--9},
  year={2015},
  organization={PMLR}
}

@misc{wu2018samplingm,
      title={Sampling Matters in Deep Embedding Learning}, 
      author={C.Y. Wu and R. Manmatha and A.J. Smola and P. Krähenbühl},
      year={2018},
      eprint={1706.07567},
      archivePrefix={arXiv},
      primaryClass={cs.CV},
      url={https://arxiv.org/abs/1706.07567}, 
}

@INPROCEEDINGS{kumar2009attribute,
  author={Kumar, N. and Berg, A.C. and Belhumeur, P.N. and Nayar, S.K.},
  booktitle={2009 IEEE 12th International Conference on Computer Vision}, 
  title={Attribute and simile classifiers for face verification}, 
  year={2009},
  volume={},
  number={},
  pages={365-372},
  doi={10.1109/ICCV.2009.5459250}}

@article{ohlsson1998sequential,
  title={Sequential poisson sampling},
  author={Ohlsson, E.},
  journal={Journal of Official Statistics},
  volume={14},
  number={2},
  pages={149},
  year={1998},
  publisher={Statistics Sweden (SCB)}
}

\newpage
\appendix

\section*{Organisation of the appendix}

The appendix is organised as follows. 
\begin{itemize}
    \item Appendix \ref{app:notations} gathers additional information regarding notations and survey schemes.
    \item Appendix \ref{app:proofs} presents the proofs of the paper.
    \item Appendix \ref{appendix:experiment_settings} presents the experiment settings.
    \item Appendix \ref{app:additional-results} presents additional results.
    \item Then you can find the paper checklist.
\end{itemize}
\section{Notations and sampling plans}\label{app:notations}

Table \ref{tab:reminder_notations} states the differences between the observation and pair level notation, depending on the sampling plan considered.

\begin{table}[H]\begin{center}
\caption{Notation reminder}
\label{tab:reminder_notations}
\begin{tabular}{@{}llcc@{}}
\toprule
 & & \textbf{Obs.-level} & \textbf{Pair-level} \\
\midrule
\multicolumn{2}{@{}l}{\textit{Generic}} \\
 & Population          & $I = \{1,\ldots,N\}$ & $J = \{(i,j): i<j\}$ \\
 & Sampling plan       & $D_N$                & $\bar{D}_{\bar{N}}$ \\
 & Incl.\ indicators   & $\epsilon_i$         & $\bar{\epsilon}_{i,j}$ \\
 & HT estimator        & $\widetilde{U}_{D_N}$ & $\bar{U}_{\bar{D}_{\bar{N}}}$ \\
\midrule
\multicolumn{2}{@{}l}{\textit{Poisson}} \\
 & Sampling plan & $P_N$             & $\bar{P}_{\bar{N}}$\\
 & 1st-order incl.\ prob. & $p_i$             & $\bar{p}_{i,j}$ \\
 & 2nd-order incl.\ prob. & $p_i p_j$         & $\bar{p}_{i,j}\,\bar{p}_{k,l}$ \\
\midrule
\multicolumn{2}{@{}l}{\textit{Rejective}} \\
 & Sampling plan & $R_N$             & $\bar{R}_{\bar{N}}$\\
 & 1st-order incl.\ prob. & $\pi_i$\textsuperscript{\dag} & $\bar{\pi}_{i,j}$ \\
 & 2nd-order incl.\ prob. & $\pi_{i,j}$       & $\bar{\pi}_{(i,j),(k,l)}$ \\
\bottomrule
\end{tabular}
\end{center}
\end{table}

\dag~The symbol $\pi_i$ also denotes the generic first-order inclusion probability of an arbitrary design. Under rejective sampling, the same symbol is used, as it is simply an instance of the generic case commonly used in the literature.

\subsection{Additional examples of sampling plans}\label{app:example-plans}

The main text focuses on four designs: Bernoulli and Poisson sampling (independent designs with variable sample size, equal and unequal inclusion probabilities respectively), and their fixed-size counterparts SRSWOR and Rejective sampling, which are negatively associated \citep{joag1983negative}. Table~\ref{tab:sampling_plans_full} summarises these designs together with three additional ones that are popular in practice. 

\begin{table}[t]
\centering
\footnotesize
\caption{Extended comparison of sampling designs. NA: negative association established; SR: strong Rayleigh (implies CNA$^+$, hence NA); CNA: conditionally negatively associated (implies NA).}
\label{tab:sampling_plans_full}
\setlength{\tabcolsep}{5pt}
\begin{tabular}{l cc cc c c}
\toprule
\textbf{Design}
  & \multicolumn{2}{c}{\textbf{Incl.\ prob.}}
  & \multicolumn{2}{c}{\textbf{Sample size}}
  & \textbf{NA}
  & \textbf{Online}\\
\cmidrule(lr){2-3}\cmidrule(lr){4-5}
& Equal & Unequal & Fixed & Variable & &  \\
\midrule
Bernoulli   & \checkmark &            &            & \checkmark
  & \textemdash\ (indep.)  & \ding{55}  \\
Poisson     &            & \checkmark &            & \checkmark
  & \textemdash\ (indep.)  & \ding{55}   \\
SRSWOR      & \checkmark &            & \checkmark &
  & \checkmark\ \citep{joag1983negative} & \ding{55}   \\
Rejective   &            & \checkmark & \checkmark &
  & \checkmark\ \citep{joag1983negative} & \ding{55}   \\
Reservoir   & \checkmark &            & \checkmark &
  & \checkmark\ \citep{joag1983negative} & \checkmark   \\
Chao        &            & \checkmark & \checkmark &
  & CNA \citep{CHAUVET2026}   & \checkmark   \\
Rao-Sampford &           & \checkmark & \checkmark &
  & SR \citep{branden2012}                    & \ding{55}  \\
\bottomrule
\end{tabular}
\end{table}

For each design, we report whether inclusion probabilities are equal or unequal, whether the sample size is fixed, whether negative association (NA) has been established in the literature, whether the design is \emph{online} (i.e.\ applicable to data streams where $N$ is not known in advance), and which results of the paper carry over. The three additional designs are described below.

\subsubsection*{Reservoir sampling}
Reservoir sampling, introduced in \cite{knuth1973art}, is a sequential design that draws a uniform sample of fixed size $n$ from a stream of observations whose total size $N$ need not be known in advance. Its first- and second-order inclusion probabilities coincide with those of SRSWOR:
\begin{equation}
  \pi_i = \frac{n}{N}, \qquad
  \pi_{ij} = \frac{n(n-1)}{N(N-1)}.
\end{equation}
Because the resulting distribution is identical to SRSWOR, reservoir sampling inherits the negative association property of SRSWOR \citep{joag1983negative}. A practical advantage over SRSWOR is its $O(N)$ memory footprint: only the current reservoir of size $n$ needs to be stored. We refer to \cite{knuth1973art} and \cite{vitter1985random} for efficient implementations.

\subsection{Chao sampling}
Chao sampling \citep{chao1982general} extends reservoir sampling to \emph{unequal} inclusion probabilities, making it suitable for probability-proportional-to-size ($\pi$ps) sampling on data streams. Given positive weights $W_i > 0$, the target first-order inclusion probabilities at time $k$ are defined by
\begin{equation}
  \sum_{i=1}^{k} \pi(k;i) = n, \qquad \pi(k;i) \propto W_i,
\end{equation}
yielding an $O(N)$ algorithm with no rejection step.~\citet{CHAUVET2026} establish that Chao's procedure satisfies the conditional negative association (CNA) property, which
implies NA. 

\subsubsection*{Rao-Sampford sampling}
Rao-Sampford sampling \citep{rao88,sampford1967} is a fixed-size $\pi$ps design with prescribed inclusion probabilities $\boldsymbol{\pi} = (\pi_1, \dots, \pi_N)$ satisfying $n\pi_i < 1$ for all $i$. Algorithm~\ref{alg:rao_sampford} gives its standard
rejective implementation; a non-rejective implementation exists \citep{tille2006sampling} but is not required here.
 
\begin{algorithm}[H]
\caption{Rao-Sampford sampling -- rejective implementation
         \citep{rao88,sampford1967}}
\label{alg:rao_sampford}
\begin{algorithmic}[1]
  \REQUIRE Population $\mathcal{I} = \{1,\dots,N\}$,
           probabilities $\boldsymbol{p} = (p_1,\dots,p_N)$
           with $\sum p_i = 1$ and $np_i < 1$ for all $i$,
           target size $n$
  \ENSURE  Sample $s \subseteq \mathcal{I}$ with $|s| = n$
  \REPEAT
    \STATE Draw the first unit $i_1$ with probabilities
           $(p_1, \dots, p_N)$ (with replacement)
    \STATE Draw $n-1$ additional units independently with replacement,
           each with probabilities proportional to
           $p_j/(1-p_j)$ for $j = 1, \dots, N$
  \UNTIL{all $n$ drawn units are distinct}
  \STATE \textbf{return} $s \leftarrow \{i_1, \dots, i_n\}$
\end{algorithmic}
\end{algorithm}
 
\citet{branden2012} show that Rao-Sampford sampling satisfies the strong Rayleigh property, which implies CNA$^+$ and in particular NA. Second-order inclusion probabilities do not admit a closed form but can be approximated to first order via the Hartley-Rao formula.

\subsection{Approximation of second-order inclusion probabilities}

Let us denote $\mathcal{I}$ the population of size $N$. Let $\boldsymbol{p} = \{p_1, \dots, p_N\}$ be the inclusion probabilities of the Poisson sampling design. We have, for all $i$, $0\leq p_i\leq 1$ and we have :

\begin{equation}
    \sum_{i=1}^N p_i = n
\end{equation}
where $n$ is the expected sampling size of the Poisson sampling. The probability of any sample $s$ is given by :

\begin{equation}
    P(s) = \prod_{i\in s} p_i \prod_{i \notin s} (1-p_i)
\end{equation}

The Poisson sampling yiels independent samples, which means that the inclusion probabilities are as follows, for $i, j \in \mathcal{I}$ and $k\geq 2$ :

\begin{align}
    p_i &= P(i \in s) = \sum_{s \ni i} P(s)  = p_i\\
    p_{ij} &= P(i \in s, j \in s)  = p_i \times p_j \\
    p_{i_1 \cdots i_k} &= P(i_1 \in s, \dots, i_k \in s) = \prod_{t=1}^k p_{i_t}
\end{align}

All of the above is also true for a uniform Poisson, in which case all $p_i$ are equal. 

The corresponding rejective sampling design, with $n$ fixed observations, is such that the probability of any sample $s$ is given by :

\begin{align}
    P_{RS}(s) =
    \begin{cases} 
        c  \prod_{i\in s} p_i \prod_{i \notin s} (1-p_i) \qquad \text{if the size of } s \text{ is } n,\\
        0 \qquad \text{otherwise.}
    \end{cases}
\end{align}
where $c$ is a normalizing constant ensuring that $P_{RS}$ sums to one over all samples of size $n$. Computing it  requires summing over all $\binom{N}{n}$ subsets of size $n$, which explains why the first-order inclusion probabilities $\pi_i$ of the rejective design do not admit a closed form. \citet{Hajek64} showed that they satisfy $\sum_{i=1}^N \pi_i = n$ and are well approximated by $p_i$ for large populations.  \citet{HR62} also proposed an estimation of second-order inclusion probabilities for large $N$ :

\begin{equation}
    \pi_{ij} = \frac{n-1}{n}\,p_i p_j
    \left[1 + \frac{p_i + p_j}{n}\right]
    + O(N^{-3})
\end{equation}

In a different asymptotic setting, \citet{BLRG12} found that, for $k \geq 2$ and $A_k = \{i_1, i_2, \dots, i_k\} \subset \{1,\dots,N\}$, the following approximations hold as $d \to \infty$, where $d = \sum_{k=1}^N p_k(1-p_k)$ : 

\begin{equation}
    \pi_{i_1, i_2, \dots, i_k} = \pi_{i_1} \pi_{i_2} \cdots \pi_{i_k} \times
    \left(1 - d^{-1} \sum_{i,j \in A_k:\, i < j} (1-p_i)(1-p_j) 
    + O(d^{-2})\right),
\end{equation}

\begin{equation}
    \pi_{i_1, i_2, \dots, i_k} = \pi_{i_1} \pi_{i_2} \cdots \pi_{i_k} \times
    \left(1 - d^{-1} \sum_{i,j \in A_k:\, i < j} (1-\pi_i)(1-\pi_j) 
    + O(d^{-2})\right),
\end{equation}
where $O(d^{-2})$ holds uniformly in $i_1, i_2, \dots, i_k$.



\section{Technical proofs}\label{app:proofs}

\subsection{Proof of Theorem \ref{thm:opt_pair}}

Let us investigate pairwise Poisson sampling strategies to estimate $U=\mathbb{E}[\ell(Z,Z')]$, avoiding to store in memory all the pairs $\{(Z_i,Z_j):\; 1\leq i<j\leq N\}$, in contrast to 
\begin{equation}
\widehat{U}_N=\frac{2}{N(N-1)}\sum_{i<j}\ell (Z_i,Z_j).
\end{equation}

Let $\bar{P}_{\bar{N}}$ be a Poisson plan on the population $\mathcal{J}=\{(i,j)\in \mathcal{I}^2:\; i<j\}$ based on the auxiliary information $\mathbf{W}_N$, leading to a sample of pairs $\bar{S}\subset \mathcal{P}(\mathcal{J})$ and the related estimator
\begin{equation}\label{eq:HT_pair}
\bar{U}_{\bar{P}_{\bar{N}}}=\frac{2}{N(N-1)}\sum_{(i,j)\in \bar{S}}\frac{\ell(Z_i,Z_j)}{\bar{\pi}_{(i,j)}}=\frac{2}{N(N-1)}\sum_{i<j}\frac{\bar{\epsilon}_{(i,j)}}{\bar{\pi}_{(i,j)}}\ell(Z_i,Z_j),
\end{equation}
denoting by $\bar{\pi}_{(i,j)}(\mathbf{W}_N)=\mathbb{P}((i,j)\in \bar{S}\mid \mathbf{W}_N)$, $i<j$, its inclusion probabilities. Since it is an unbiased estimator of $\widehat{U}$, namely
\begin{equation}\label{eq:HT_pair_exp}
\mathbb{E}[\bar{U}_{\bar{P}_{\bar{N}}}\mid \mathbf{W}_N, \mathbf{Z}_N ]=\widehat{U} \text{ almost-surely},
\end{equation}
we have
\begin{equation}\label{eq:full_var_pair}
Var(\bar{U}_{\bar{P}_{\bar{N}}})=\mathbb{E}[Var(\bar{U}_{\bar{P}_{\bar{N}}}\mid \mathbf{W}_N, \mathbf{Z}_N )]+Var(\widehat{U}_N).
\end{equation}
The second term on the right hand side of the equation above being independent from $\bar{P}_{\bar{N}}$, the smaller the conditional variance , the more accurate the estimator \eqref{eq:HT_pair} in the least squares sense. The conditional variance is expressed as 
\begin{equation}\label{eq:var-cond-poisonns}
Var(\bar{U}_{\bar{P}_{\bar{N}}}\mid \mathbf{W}_N, \mathbf{Z}_N )=\frac{1}{\bar{N}^2}\sum_{i<j}\left(\frac{1}{\bar{p}_{i,j}(\mathbf{W}_N)}-1\right)\ell^2(Z_i,Z_j),
\end{equation}
so that its conditional expectation given the auxiliary information available $\mathbf{W}_N$ is
\begin{equation}
\label{eq:cond_exp__given_wn_poisson_pairwise}
\mathbb{E}[Var(\bar{U}_{\bar{P}_{\bar{N}}}\mid \mathbf{W}_N, \mathbf{Z}_N )\mid \mathbf{W}_N]=\frac{1}{\bar{N}^2}\sum_{i<j}\left(\frac{1}{\bar{p}_{i,j}(\mathbf{W}_N)}-1\right)\mathbb{E}[\ell^2(Z_i,Z_j)\mid W_i, W_j],
\end{equation}
which can be seen, by means of a Lagrange multipliers argument, as minimum under the constraint that the expected size is $\bar{n}$ (\textit{i.e.} $\bar{n}=\sum_{i<j}\bar{p}_{i,j}(\mathbf{W}_N)$ with probability $1$) for the inclusion probabilities corresponding to the link function
$\rho^*(W,W')=\sqrt{\mathbb{E}[\ell^2(Z,Z')\mid W,W']}$, namely 
\begin{equation}
\bar{p}^*_{i,j}(\mathbf{W}_N)=\bar{n}\frac{\sqrt{\mathbb{E}[\ell^2(Z_i,Z_j)\mid W_i,W_j]}}{\sum_{k<l} \sqrt{\mathbb{E}[\ell^2(Z_k,Z_l)\mid W_k,W_l]}}
\end{equation}
in the case where $\bar{p}^*_{i,j}(\mathbf{W}_N)\leq 1$ for all $(i,j)\in \mathcal{J}$ with probability $1$, which condition is automatically fulfilled when Assumption \ref{hyp1} holds true. Denoting by $\bar{P}_{\bar{N}}^*$ the Poisson sampling plan defined by these optimal weights, the minimum conditional variance is given by
\begin{equation*}
\mathbb{E}[Var(\bar{U}_{\bar{P}^*_{\bar{N}}}\mid \mathbf{W}_N, \mathbf{Z}_N )\mid \mathbf{W}_N]=\frac{1}{\bar{N}^2}\sum_{i<j}\left(\frac{1}{\bar{p}^*_{i,j}(\mathbf{W}_N)}-1\right)\mathbb{E}[\ell(Z_i,Z_j)^2\mid W_i,W_j].
\end{equation*}
By integrating over $\mathbf{W}_N$, we obtain that the minimum value for the expected conditional variance (the first term on the right hand side of \eqref{eq:full_var_pair}) is given by
\begin{multline}
\mathbb{E}[Var(\bar{U}_{\bar{P}^*_{\bar{N}}}\mid \mathbf{W}_N, \mathbf{Z}_N )\mid \mathbf{W}_N]\\=\frac{4}{N^2(N-1)^2}\sum_{i<j}\left(\frac{\sum_{k<l}\rho^*(W_k,W_l)}{\bar{n}\rho^*(W_i,W_j)}-1\right)\mathbb{E}[\ell(Z_i,Z_j)^2\mid W_i,W_j]\\=
\frac{4}{N^2(N-1)^2}\sum_{i<j}\left(\frac{\sum_{k<l}\rho^*(W_k,W_l)}{\bar{n}\rho^*(W_i,W_j)}-1\right)\left( \rho^*(W_i,W_j) \right)^2\\= \frac{4}{\bar{n}N^2(N-1)^2}\sum_{i<j}\rho^*(W_i,W_j)\sum_{k<l}\rho^*(W_k,W_l)  -\frac{4}{N^2(N-1)^2}\sum_{i<j} \left(\bar{\rho}^*(W_i,W_j) \right)^2,
\end{multline}
and its expectation w.r.t. $\mathbf{W}_N$ is
\begin{equation}
\bar{\sigma}^2_*= \left(1-\frac{2}{N(N-1)}\right)\frac{1}{\bar{n}}\left(\mathbb{E}[\rho^*(W,W')]\right)^2+ \left(\frac{1}{\bar{n}}-1\right)\frac{2}{N(N-1)}\mathbb{E}\left[(\rho^*(W, W'))^2\right].
\end{equation}

\subsection{Proof of Proposition \ref{prop:gain}}

The strategy that consists in using a sampling plan $D_N$ on the population $\mathcal{I}$ based on the auxiliary information $\mathbf{W}_N$, just like in the pointwise situation, yields the estimator:
\begin{equation}
\widetilde{U}_{D_N}=\frac{2}{N(N-1)}\sum_{i<j\; \text{ in } S}\frac{\ell(Z_i,Z_j)}{\pi_{i,j}}=\frac{2}{N(N-1)}\sum_{1\leq i<j\leq N}\frac{\epsilon_i\epsilon_j}{\pi_{i,j}}\ell(Z_i,Z_j),
\end{equation}
denoting by $\pi_{i,j}(\mathbf{W}_N)$, $i<j$, its second order inclusion probabilities.

In the case of a Poisson plan $P_N$, as the second order inclusion probabilities are expressed as a function of the first order inclusion probabilities, the conditional variance is given by
\begin{multline}
\frac{N^2(N-1)^2}{4}Var\left( \widetilde{U}_{P_N}  \mid \mathbf{W}_N, \mathbf{Z}_N\right)=\sum_{i<j}\frac{\ell^2(Z_i,Z_j)}{p_i^2p_j^2}Var(\epsilon_i\epsilon_j\mid \mathbf{W}_N, \mathbf{Z}_N)\\+\sum_{i<j,\; k<l,\; (i,j)\neq(k,l)}\frac{\ell(Z_i,Z_j)\ell(Z_k,Z_l)}{p_ip_jp_kp_l}Cov(\epsilon_i\epsilon_j,\epsilon_k\epsilon_l)\\=
\sum_{i<j}\frac{1-p_ip_j}{p_ip_j}\ell^2(Z_i,Z_j)+\sum_{i<j,\; i<l,\; j\neq l}\frac{1-p_i}{p_i}\ell(Z_i,Z_j)\ell(Z_i,Z_l)\\+
\sum_{k<i<j}\frac{1-p_i}{p_i}\ell(Z_i,Z_j)\ell(Z_k,Z_i)
+\sum_{i<j<l}\frac{1-p_j}{p_j}\ell(Z_i,Z_j)\ell(Z_j,Z_l)\\
+\sum_{i<j,\; k<j,\; i\neq k}\frac{1-p_j}{p_j}\ell(Z_i,Z_j)\ell(Z_k,Z_j)\\
= \sum_{i<j}\left(\frac{1}{p_ip_j}-1\right)\ell^2(Z_i,Z_j)+2\sum_{i,\; j<l,\; j\neq i,\; l\neq i}\left(\frac{1}{p_i}-1\right)\ell(Z_i,Z_j)\ell(Z_i,Z_l).
\end{multline}
We deduce that
\begin{multline}
\mathbb{E}\left[Var\left( \widetilde{U}_{P_N}  \mid \mathbf{W}_N, \mathbf{Z}_N\right) \mid \mathbf{W}_N \right]=\frac{1}{\bar{N}^2}\sum_{i<j}\left(\frac{1}{p_ip_j}-1\right)\mathbb{E}\left[\ell^2(Z_i,Z_j)\mid W_i,W_j\right]\\+\frac{2}{\bar{N}^2}\sum_{i,\; j<l,\; j\neq i,\; l\neq i}\left(\frac{1}{p_i}-1\right)\mathbb{E}\left[\ell(Z_i,Z_j)\ell(Z_i,Z_l)\mid W_i, W_j, W_l\right].
\end{multline}
Observing that the first term on the right hand side of the equation above has exactly the same form as \eqref{eq:cond_exp__given_wn_poisson_pairwise} (with $\bar{p}_{i,j}=p_ip_j$ for $i<j$), we almost-surely have, under the constraint $\sum_{i<j}p_ip_j=\bar{n}$,
\begin{equation}\label{eq:bound1}
\frac{1}{\bar{N}^2}\sum_{i<j}\left(\frac{1}{p_ip_j}-1\right)\mathbb{E}\left[\ell^2(Z_i,Z_j)\mid W_i,W_j\right]\geq \mathbb{E}[Var(\bar{U}_{\bar{P}^*_{\bar{N}}}\mid \mathbf{W}_N, \mathbf{Z}_N )\mid \mathbf{W}_N]
\end{equation}
by virtue of Theorem \ref{thm:opt_pair}.
This implies that, with probability one,
\begin{multline}
\mathbb{E}\left[Var\left( \widetilde{U}_{P_N}  \mid \mathbf{W}_N, \mathbf{Z}_N\right) \mid \mathbf{W}_N \right]\geq \mathbb{E}[Var(\bar{U}_{\bar{P}^*_{\bar{N}}}\mid \mathbf{W}_N, \mathbf{Z}_N )\mid \mathbf{W}_N]
\\+\frac{2}{\bar{N}^2}\sum_{i,\; j<l,\; j\neq i,\; l\neq i}\left(\frac{1}{p_i}-1\right)\mathbb{E}\left[\ell(Z_i,Z_j)\ell(Z_i,Z_l)\mid W_i, W_j, W_l\right].
\end{multline}
In addition, it is straightforward to see that the bound \eqref{eq:bound1} is an equality with probability one if and only if, for all $i<j$,
$$
p_ip_j=\mathbb{E}\left[ \ell^2(Z_i,Z_j)\mid W_i,W_j \right]= \psi(W_i)\psi(W_j) \text{ almost-surely}.
$$
Hence, if Assumption \ref{hyp2} is fulfilled, we necessarily have:
\begin{equation}
\mathbb{E}\left[Var\left( \widetilde{U}_{P_N}  \mid \mathbf{W}_N, \mathbf{Z}_N\right) \right]> \mathbb{E}[Var(\bar{U}_{\bar{P}^*_{\bar{N}}}\mid \mathbf{W}_N, \mathbf{Z}_N )]=\bar{\sigma}_*^2.
\end{equation}

\subsection{Extension to (pairwise) rejective sampling}
\label{appendix:rejective_sampling_ext}

As we will now show, to a certain extent, theoretical results similar to those proved above can be obtained for certain survey sampling plans with fixed size, the conditional Poisson (rejective) scheme in particular. 

Consider a survey scheme $\bar{D}_{\bar{N}}$ in the population $\mathcal{J}:=\{(i,j):\; 1\leq i<j\leq N\}$ based on information $\mathbf{W}_N$, yielding a sample of pairs $\bar{S}$ and the estimator of \eqref{eq:emp_pair_risk}:
\begin{equation*}
\bar{U}_{\bar{D}_{\bar{N}}}=\frac{2}{N(N-1)}\sum_{(i,j) \in \bar{S}}\frac{\ell(Z_i,Z_j)}{\bar{\pi}_{i,j}}
=\frac{2}{N(N-1)}\sum_{i<j}\frac{\bar{\epsilon}_{i,j}}{\bar{\pi}_{i,j}}\ell(Z_i,Z_j),
\end{equation*}
with $\bar{\pi}_{i,j}(\mathbf{W}_N)=\mathbb{P}(\bar{\epsilon}_{i,j}=1\mid \mathbf{Z}_N,\mathbf{W}_N)$ and $\bar{\epsilon}_{i,j}=\mathbb{I}\{(i,j)\in \bar{S}\}$ for $i<j$. If it is of fixed size $\bar{n}\in (0,\bar{N})$, \textit{i.e.} if $\sum_{i<j}\bar{\epsilon}_{i,j}=\bar{n}$ with probability $1$, its conditional variance involves the second order inclusion probabilities $$\bar{\pi}_{(i,j),(k,l)}(\mathbf{W}_N)=\mathbb{P}(\bar{\epsilon}_{i,j}\bar{\epsilon}_{k,l}=1=1\mid \mathbf{Z}_N,\mathbf{W}_N)$$ with $i<j$, $k<l$ and $(i,j)\neq (k,l)$, and is given by the Sen-Yates-Grundy formula:
\begin{multline}
Var(\bar{U}_{\bar{D}_{\bar{N}}}\mid \mathbf{W}_N, \mathbf{Z}_N )=\\
\frac{1}{2\bar{N}^2}\sum_{\substack{i<j, k<l\\ (i,j)\neq (k,l)}}\left(\frac{\ell(Z_i,Z_j)}{\bar{\pi}_{i,j}(\mathbf{W}_N)}-\frac{\ell(Z_k,Z_l)}{\bar{\pi}_{k,l}(\mathbf{W}_N)}\right)^2\left(\bar{\pi}_{i,j}(\mathbf{W}_N)\bar{\pi}_{k,l}(\mathbf{W}_N)-\bar{\pi}_{(i,j),(k,l)}(\mathbf{W}_N)\right),
\end{multline}
whose conditional expectation given $\mathbf{W}_N$ is

\begin{multline}
\mathbb{E}[\operatorname{Var}(\bar{U}_{\bar{D}_{\bar{N}}}\mid \mathbf{W}_N, \mathbf{Z}_N )\mid \mathbf{W}_N]
= \frac{1}{2\bar{N}^2}\sum_{\substack{i<j,\, k<l}}\\
\mathbb{E}\!\left[\left(\frac{\ell(Z_i,Z_j)}{\bar{\pi}_{i,j}(\mathbf{W}_N)}
  -\frac{\ell(Z_k,Z_l)}{\bar{\pi}_{k,l}(\mathbf{W}_N)}\right)^{\!2}
  \,\middle|\, \mathbf{W}_N\right]\\
\times\left(\bar{\pi}_{i,j}(\mathbf{W}_N)\bar{\pi}_{k,l}(\mathbf{W}_N)
  -\bar{\pi}_{(i,j),(k,l)}(\mathbf{W}_N)\right).
\end{multline}

For all $(i,j),\; (k,l)$ s.t. $i<j$ and $k<l$, we have
\begin{multline}\label{eq1}
\mathbb{E}\left[\left(\frac{\ell(Z_i,Z_j)}{\bar{\pi}_{i,j}(\mathbf{W}_N)}-\frac{\ell(Z_k,Z_l)}{\bar{\pi}_{k,l}(\mathbf{W}_N)}\right)^2\mid \mathbf{W}_N\right]=\frac{\mathbb{E}[\ell^2(Z_i,Z_j)\mid W_i,W_j]}{\bar{\pi}^2_{i,j}(\mathbf{W}_N)}\\+\frac{\mathbb{E}[\ell^2(Z_k,Z_l)\mid W_k,W_l]}{\bar{\pi}^2_{k,l}(\mathbf{W}_N)} \\
-2\frac{\mathbb{E}[\ell(Z_i,Z_j)\ell(Z_k,Z_l)\mid W_i,W_j,W_k,W_l]}{\bar{\pi}_{i,j}(\mathbf{W}_N)\times \bar{\pi}_{k,l}(\mathbf{W}_N)}.
\end{multline}
One way to deal with the difficulties arising from the presence of second-order inclusion probabilities is to use approximations in terms of first order inclusion probabilities in a specific asymptotic framework. In \cite{HR62}, it is proved for rejective sampling, successive sampling and for randomized systematic sampling, that for $\bar{n}$ fixed, we almost-surely have, uniformly in $(i,j)\neq (k,l)$:
\begin{equation}
\bar{\pi}_{i,j}(\mathbf{W}_N)\bar{\pi}_{k,l}(\mathbf{W}_N)-\bar{\pi}_{(i,j),(k,l)}(\mathbf{W}_N)\sim \frac{\bar{\pi}_{i,j}(\mathbf{W}_N)\bar{\pi}_{k,l}(\mathbf{W}_N)}{\bar{n}} \text{ as } N\to\infty.
\end{equation}
More precisely, it follows from Eq. (5.15) in \cite{HR62} that the expected conditional variance can be asymptotically approximated as follows:
 \begin{multline}
\mathbb{E}[Var(\bar{U}_{\bar{D}_{\bar{N}}}\mid \mathbf{W}_N, \mathbf{Z}_N )\mid \mathbf{W}_N]\\ \sim
\frac{1}{2\bar{N}^2}\sum_{i<j, k<l}\mathbb{E}\left[\left(\frac{\ell(Z_i,Z_j)}{\bar{\pi}_{i,j}(\mathbf{W}_N)}-\frac{\ell(Z_k,Z_l)}{\bar{\pi}_{k,l}(\mathbf{W}_N)}\right)^2\mid \mathbf{W}_N\right]\frac{\bar{\pi}_{i,j}(\mathbf{W}_N)\bar{\pi}_{k,l}(\mathbf{W}_N)}{\bar{n}},
 \end{multline}
as $N$ tends to infinity. Using \eqref{eq1}, it is then easy to see that minimizing the above asymptotic equivalent of the expected variance (setting its gradient w.r.t the $\bar{\pi}_{i,j}$'s to $0$) leads to the solution: $\forall i<j$,
$$
\bar{\pi}_{i,j}(\mathbf{W}_N)=\bar{p}^*_{i,j}(\mathbf{W}_N).
$$

\subsection{Concentration Bounds under Poisson Sampling}\label{app:sec4-poisson}

We derive a nonasymptotic bound for the additional deviation term $\bar U_{\bar{D}_{\bar{N}}}(\theta)-\hat{U}_N(\theta)$ induced by sampling pairs. Recall that $\mathcal{J}=\{(i,j):1\leq i<j\leq N\}$ denotes the population of all pairs with cardinality $\bar{N}=N(N-1)/2$, and that under a sampling design $\bar{D}_{\bar{N}}$ on $\mathcal{J}$ with first-order inclusion probabilities $(\bar{\pi}_{i,j})_{(i,j)\in\mathcal{J}}$, the HT estimator of the complete U-statistic reads
$$\bar{U}_{\bar{D}_{\bar{N}}}(\theta)=\frac{1}{\bar{N}}\sum_{(i,j)\in \mathcal{J}}
\frac{\bar{\epsilon}_{i,j}}{\bar{\pi}_{i,j}}\ell_{\theta}(Z_i,Z_j), 
\qquad \ell_\theta(z, z'):= \ell(\theta, (z,z')).$$
We also recall the complete U-statistic
$$\hat{U}_N(\theta)=\frac{1}{\bar{N}}\sum_{(i,j)\in \mathcal{J}}\ell_\theta(Z_i,Z_j).$$

For any $\theta \in \Theta$, we define the centered contributions 
$$X_{ij}(\theta):=\left(\frac{\bar\epsilon_{i,j}}{\bar \pi_{i,j}}-1\right)\ell_\theta(Z_i, Z_j), \qquad (i,j)\in \mathcal{J},$$ 
so that 
\begin{equation}\label{eq:survey_diff}
    \bar{U}_{\bar{D}_{\bar{N}}}(\theta)-\hat{U}_N(\theta)=\frac{1}{\bar{N}}\sum_{(i,j)\in \mathcal{J}}X_{ij}(\theta).
\end{equation}

Conditionally on $\mathcal{A}_{N}$ and on the observed data $(Z_1, \cdots Z_N)$, we have $\mathbb{E}[X_{ij}|\mathcal{A}_N,Z_{1:N}]=0$ since $\mathbb{E}[\bar\epsilon_{i,j}|\mathcal{A}_N]=\bar\pi_{i,j}$.

\paragraph{Assumptions}
Throughout this subsection, we consider Poisson sampling, i.e., $(\bar{\epsilon}_{i,j})_{(i,j)\in J}$ are conditionally independent given $\mathcal{A}_N := \sigma(W_1, \ldots, W_N)$, the sigma-algebra generated by the auxiliary information, with $\bar{\epsilon}_{i,j} \sim \mathrm{Bernoulli}(\bar{\pi}_{i,j})$. In addition, we assume:
\begin{enumerate}
    \item[(A1)] (\textbf{Bounded loss}) $|\ell_\theta(z,z')| \leq M$ for all 
    $\theta\in\Theta$ and all $(z,z')\in\mathcal{Z}^2$.
    \item[(A2)] (\textbf{Non-degenerate inclusion probabilities}) 
    $\bar\pi_{i,j}\ge \bar{\pi}_{\min}>0$ for all $(i,j)\in\mathcal{J}$.
\end{enumerate}

Note that (A1) allows for signed losses. This relaxation is harmless: the same Bernstein argument applies with $B := 2M/\bar{\pi}_{\min}$ instead of $M/\bar{\pi}_{\min}$, yielding identical bounds up to a constant factor. The non-negativity assumption is never used in Section~\ref{sec:estimation}; it is only required here for the concentration bounds, and only in the bounded form $|\ell_\theta| \leq M$.

\begin{lemma}[Conditional Bernstein bound]\label{lem:bernstein_poisson_pairs}
    Under (A1)--(A2) and Poisson sampling on pairs, for any fixed $\theta \in \Theta$ 
    and any $t>0$,
    $$\mathbb{P}\left(\left|\sum_{(i,j)\in\mathcal{J}} X_{ij}(\theta)\right| \geq t 
    \bigg|\mathcal{A}_N, Z_{1:N}\right)\leq 2\exp\left(-\frac{t^2}{2V(\theta)+\frac{2}{3}Bt}\right),$$
    where $B:=\frac{2M}{\bar\pi_{\min}}$ and $V(\theta):=\sum_{(i,j)\in\mathcal{J}}
    \ell_\theta(Z_i,Z_j)^2\frac{1-\bar\pi_{i,j}}{\bar\pi_{i,j}}$. 
    Consequently,
    $$\mathbb{P}\left(\left|\bar U_{\bar{D}_{\bar{N}}}(\theta)-\hat U_N(\theta)\right| 
    \geq \frac{t}{\bar{N}} \bigg|\mathcal{A}_N, Z_{1:N}\right)\leq 
    2\exp\left(-\frac{t^2}{2V(\theta)+\frac{2}{3}Bt}\right).$$
\end{lemma}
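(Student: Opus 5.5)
Conditionally on $\mathcal{A}_N$ and $Z_{1:N}$, everything except the inclusion indicators $\bar\epsilon_{i,j}$ is deterministic. Under Poisson sampling these indicators are independent Bernoulli$(\bar\pi_{i,j})$ variables, so the summands $X_{ij}(\theta)=(\bar\epsilon_{i,j}/\bar\pi_{i,j}-1)\ell_\theta(Z_i,Z_j)$ are conditionally independent. The plan is to apply the classical Bernstein inequality for sums of independent, centered, bounded random variables under the conditional probability $\mathbb{P}(\cdot\mid\mathcal{A}_N,Z_{1:N})$. This needs three ingredients: centering, an almost-sure bound on each summand, and the sum of conditional variances.

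\textbf{Centering.} This is immediate, since $\mathbb{E}[\bar\epsilon_{i,j}\mid\mathcal{A}_N,Z_{1:N}]=\bar\pi_{i,j}$, as already noted before the assumptions.

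\textbf{Boundedness.} The factor $\bar\epsilon_{i,j}/\bar\pi_{i,j}-1$ takes only the values $-1$ and $1/\bar\pi_{i,j}-1$. Its absolute value is therefore at most $\max(1,1/\bar\pi_{\min}-1)\le 1/\bar\pi_{\min}$, using (A2) and $\bar\pi_{\min}\le 1$. With (A1) this gives $|X_{ij}(\theta)|\le M/\bar\pi_{\min}\le B=2M/\bar\pi_{\min}$. The factor $2$ in $B$ is deliberate slack, covering signed losses or a cruder bound of the form $|\bar\epsilon/\bar\pi|+1$. Since a larger $B$ only weakens the bound, the statement holds with this $B$.

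\textbf{Variance.} We have $\mathrm{Var}(\bar\epsilon_{i,j}/\bar\pi_{i,j}\mid\cdot)=\bar\pi_{i,j}(1-\bar\pi_{i,j})/\bar\pi_{i,j}^2=(1-\bar\pi_{i,j})/\bar\pi_{i,j}$. Hence $\sum_{(i,j)}\mathbb{E}[X_{ij}^2\mid\cdot]=V(\theta)$ exactly.

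Bernstein's inequality, applied to $\sum X_{ij}$ and to $-\sum X_{ij}$, gives for each tail $\exp\bigl(-t^2/(2V(\theta)+\tfrac{2}{3}Bt)\bigr)$. A union bound over the two tails yields the factor $2$. For completeness, the one-sided tail would be obtained via the Chernoff bound: first bound the moment generating function using $e^{x}\le 1+x+x^2\sum_{k\ge2}(\lambda B)^{k-2}/k!$ for $|x|\le\lambda B$, then optimize over $\lambda$. I will simply cite this standard result.

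\textbf{Consequence.} The second claim follows at once from the identity \eqref{eq:survey_diff}, $\bar U_{\bar D_{\bar N}}(\theta)-\hat U_N(\theta)=\bar N^{-1}\sum X_{ij}(\theta)$: the event $\{|\bar U-\hat U|\ge t/\bar N\}$ coincides with $\{|\sum X_{ij}|\ge t\}$.

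The only point needing care is that $\bar\pi_{i,j}$ depends on $\mathbf{W}_N$ alone. Independence and the Bernoulli laws therefore hold conditionally on $\mathcal{A}_N$, and additionally conditioning on $Z_{1:N}$ does not alter them. This holds because the design is drawn given the auxiliary information, independently of the $Z$'s beyond $\mathbf{W}_N$. I expect this conditioning bookkeeping, rather than any computation, to be the main (mild) obstacle.
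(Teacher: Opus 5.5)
Your proposal is correct and follows the paper's proof essentially step for step. Both arguments use the conditional independence and centering of the $X_{ij}(\theta)$, an almost-sure bound on each summand, and the exact variance computation giving $V(\theta)$, followed by two-sided Bernstein and the identity \eqref{eq:survey_diff}. Your boundedness step is slightly sharper than the paper's bound $(1/\bar\pi_{i,j}+1)M\le 2M/\bar\pi_{\min}$: it gives $|X_{ij}(\theta)|\le M/\bar\pi_{\min}$, which also justifies the constant $B=M/\bar\pi_{\min}$ used in the main-text Theorem~\ref{thm:main-learning}.
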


\begin{proof}
    We fix $\theta\in\Theta$ and work conditionally on $\mathcal{A}_N, Z_{1:N}$. Under Poisson sampling on pairs, the random variables $(X_{ij}(\theta))_{(i,j)\in\mathcal{J}}$ are independent and centered. First, by (A1)--(A2), 
    $$|X_{ij}(\theta)|=\left|\left(\frac{\bar\epsilon_{i,j}}{\bar \pi_{i,j}}-1\right)
    \ell_\theta(Z_i,Z_j)\right|\leq \left(\frac{1}{\bar\pi_{i,j}}+1\right)M 
    \leq \frac{2M}{\bar\pi_{\min}}=:B,$$
    where we used $|\ell_\theta| \leq M$ and $\bar\pi_{i,j} \leq 1$. Second, using $\mathrm{Var}(\bar\epsilon_{i,j})=\bar\pi_{i,j}(1-\bar\pi_{i,j})$ and the fact that $X_{ij}(\theta)$ is a scalar multiple of $\bar\epsilon_{i,j}-\bar\pi_{i,j}$, we obtain 
    $$\mathrm{Var}\left(X_{ij}(\theta)|\mathcal{A}_N, Z_{1:N}\right)=
    \ell_\theta(Z_i, Z_j)^2\frac{1-\bar\pi_{i,j}}{\bar\pi_{i,j}}.$$
    Summing over $(i,j)\in \mathcal{J}$ yields the conditional variance proxy $V(\theta)$. The inequality follows from Bernstein's inequality for sums of independent, centered random variables, applied conditionally. The final statement follows from Equation~\eqref{eq:survey_diff}.
\end{proof}

Next, we turn Lemma~\ref{lem:bernstein_poisson_pairs} into a uniform deviation bound over $\Theta$. We assume that there exists $L>0$ and a norm $\|\cdot\|$ on $\Theta \subset\mathbb{R}^p$ such that for all $\theta, \theta' \in \Theta$ and all $(z,z')\in\mathcal{Z}^2$:
\begin{equation}\label{def:lipchitz}
|\ell_\theta(z,z')-\ell_{\theta'}(z,z')|\leq L||\theta-\theta'||. 
\end{equation}

\begin{theorem}[Uniform control of the survey deviation (Poisson)]
\label{th:uniform-poisson-pair}
Assume (A1)--(A2) and Equation~\eqref{def:lipchitz} and consider Poisson sampling on pairs. Let $\Theta_\eta$ be an $\eta$-net of $\Theta$ for $\|\cdot\|$, with cardinality $\mathcal{N}(\Theta,\|\cdot\|,\eta)$. Then, for any $\eta>0$ and any $\delta\in(0,1)$, with conditional probability at least $1-\delta$ given $(\mathcal{A}_N, Z_{1:N})$,
$$\sup_{\theta\in\Theta}\left|\bar U_{\bar{D}_{\bar{N}}}(\theta)-\hat{U}_N(\theta)\right|
\leq \max_{\theta'\in\Theta_\eta}\left|\bar U_{\bar{D}_{\bar{N}}}(\theta')-
\hat{U}_N(\theta')\right|+\frac{L}{\bar\pi_{\min}}\eta.$$
Moreover, 
$$\max_{\theta'\in\Theta_\eta}\left|\bar U_{\bar{D}_{\bar{N}}}(\theta')-\hat{U}_N(\theta')\right|
\leq\frac{1}{\bar{N}}\left[\sqrt{2V_\eta\log\left(\frac{2\mathcal{N}(\Theta,\|\cdot\|,\eta)}{\delta}\right)}
+\frac{2}{3}B\log\left(\frac{2\mathcal{N}(\Theta,\|\cdot\|,\eta)}{\delta}\right)\right],$$
where $B=\frac{2M}{\bar\pi_{\min}}$ and $V_\eta:=\max_{\theta'\in\Theta_\eta}V(\theta')$.
\end{theorem}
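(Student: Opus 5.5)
The argument has two parts: a deterministic discretization step and a probabilistic step. The probabilistic step is a union bound over the net combined with Lemma~\ref{lem:bernstein_poisson_pairs}. Throughout we work conditionally on $(\mathcal{A}_N, Z_{1:N})$, so the only randomness is in the independent indicators $\bar\epsilon_{i,j}$.

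\textbf{Discretization.} Fix $\theta\in\Theta$ and choose $\theta'\in\Theta_\eta$ with $\|\theta-\theta'\|\le\eta$. Write the survey deviation as
\[
\frac{1}{\bar N}\sum_{(i,j)\in\mathcal{J}}\Bigl(\frac{\bar\epsilon_{i,j}}{\bar\pi_{i,j}}-1\Bigr)\ell_\theta(Z_i,Z_j).
\]
The difference between this quantity at $\theta$ and at $\theta'$ has absolute value at most
\[
\frac{1}{\bar N}\sum_{(i,j)}\Bigl|\frac{\bar\epsilon_{i,j}}{\bar\pi_{i,j}}-1\Bigr|\, L\eta .
\]
A crude bound $|\bar\epsilon/\bar\pi-1|\le 1/\bar\pi_{\min}$ (valid since $\bar\pi\le1$) gives $L\eta/\bar\pi_{\min}$ after averaging over the $\bar N$ pairs. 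This holds surely, for every realization. It is cleaner to bound the two pieces separately: the HT part is at most $\frac{1}{\bar N}\sum\frac{\bar\epsilon_{i,j}}{\bar\pi_{i,j}}L\eta\le L\eta/\bar\pi_{\min}$, and the complete $U$-statistic part is at most $L\eta$. Combining them carefully, the factor is $\max(1/\bar\pi_{\min}-1,1)$, which is $\le 1/\bar\pi_{\min}$ when $\bar\pi_{\min}\le 1/2$; otherwise one uses $|a-1|\le\max(a,1)\le 1/\bar\pi_{\min}$. Either way the first display follows by taking the supremum over $\theta$ and bounding by the max over the net.

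\textbf{Union bound.} For each $\theta'\in\Theta_\eta$, Lemma~\ref{lem:bernstein_poisson_pairs} gives
\[
\mathbb{P}\bigl(|\sum X_{ij}(\theta')|\ge t\bigr)\le 2\exp\bigl(-t^2/(2V(\theta')+\tfrac23 Bt)\bigr)\le 2\exp\bigl(-t^2/(2V_\eta+\tfrac23 Bt)\bigr),
\]
since $V(\theta')\le V_\eta$. Note that $V_\eta$ is measurable with respect to the conditioning, so it may be treated as a constant. A union bound over the $\mathcal{N}=\mathcal{N}(\Theta,\|\cdot\|,\eta)$ points gives a failure probability of at most $2\mathcal{N}\exp(\cdot)$. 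Setting this equal to $\delta$ and writing $u=\log(2\mathcal{N}/\delta)$, we need $t^2=2V_\eta u+\tfrac23 But$. The positive root is
\[
t=\tfrac13 Bu+\sqrt{\tfrac19B^2u^2+2V_\eta u}\le\sqrt{2V_\eta u}+\tfrac23 Bu,
\]
using $\sqrt{a+b}\le\sqrt a+\sqrt b$. Dividing by $\bar N$, as in Equation~\eqref{eq:survey_diff}, yields the second display. Since the first inequality holds deterministically, it holds on the same event, so the combined statement holds with conditional probability at least $1-\delta$.

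\textbf{Main obstacle.} The only delicate point is the Lipschitz transfer: getting exactly the constant $L/\bar\pi_{\min}$ rather than $L(1/\bar\pi_{\min}+1)$. I would handle it with the pointwise inequality $|\bar\epsilon/\bar\pi-1|\le 1/\bar\pi_{\min}$. This holds because $\bar\epsilon\in\{0,1\}$: it equals $1$ when $\bar\epsilon=0$, and $1/\bar\pi-1<1/\bar\pi_{\min}$ when $\bar\epsilon=1$; since $\bar\pi_{\min}\le 1$, the bound holds in both cases.
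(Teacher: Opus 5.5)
Your proposal is correct and follows the same route as the paper: a deterministic Lipschitz transfer to the $\eta$-net, then Lemma~\ref{lem:bernstein_poisson_pairs} with $V(\theta')\le V_\eta$, a union bound, and solving the quadratic in $t$ (a step the paper leaves implicit). In fact your pointwise bound $|\bar\epsilon_{i,j}/\bar\pi_{i,j}-1|\le\max(1,1/\bar\pi_{\min}-1)\le 1/\bar\pi_{\min}$ gives exactly the stated constant $L\eta/\bar\pi_{\min}$, whereas the paper's cruder bound $1/\bar\pi_{\min}+1\le 2/\bar\pi_{\min}$ only yields $2L\eta/\bar\pi_{\min}$, so your argument is the one that matches the statement; your aside about ``bounding the two pieces separately'' is unnecessary, since on its own it would only give the factor $1/\bar\pi_{\min}+1$.
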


\begin{proof}
For any $\theta\in\Theta$, let $\theta^{\sharp}\in \Theta_\eta$ satisfy $\|\theta-\theta^{\sharp}\|\leq \eta$. Using Equation~\eqref{def:lipchitz} and the bound $\left|\frac{\bar\epsilon_{ij}}{\bar\pi_{ij}}-1\right|\le 1/\bar\pi_{\min}+1 \leq 2/\bar\pi_{\min}$, we have, for any $(i,j)\in\mathcal{J}$,
$$\left|X_{ij}(\theta)-X_{ij}(\theta^{\sharp})\right|\leq \frac{2L}{\bar\pi_{\min}}\eta.$$
Summing over $(i,j)\in\mathcal{J}$ and using $|\mathcal{J}|=\bar N$ yields 
$$\left|\sum_{(i,j)\in\mathcal{J}}X_{ij}(\theta)-\sum_{(i,j)\in\mathcal{J}}
X_{ij}(\theta^{\sharp})\right|\leq \bar{N}\cdot\frac{2L}{\bar\pi_{\min}}\eta.$$
Multiplying by $1/\bar{N}$ and using Equation~\eqref{eq:survey_diff}, we obtain 
$$\left|\bar U_{\bar{D}_{\bar{N}}}(\theta)-\hat{U}_N(\theta)\right|\leq 
\left|\bar U_{\bar{D}_{\bar{N}}}(\theta^\sharp)-\hat{U}_N(\theta^\sharp)\right|
+\frac{2L}{\bar{\pi}_{\min}}\eta.$$
Taking the supremum over $\theta\in\Theta$ gives the first inequality. We then apply Lemma~\ref{lem:bernstein_poisson_pairs} to each $\theta'\in\Theta_{\eta}$ and use the union bound to conclude.
\end{proof}

\begin{remark}[Variance proxy under a general link function]\label{rem:Vrho}
Under a pairwise Poisson plan with link function $\rho$ satisfying Assumption~\ref{hyp1}, the inclusion probabilities are $\bar\pi_{i,j} = \bar{n}\rho(W_i,W_j)/S^\rho_N$ where 
$S^\rho_N = \sum_{k<l}\rho(W_k,W_l)$. Bounding $\ell^2_\theta \leq M^2$, the variance 
proxy satisfies
\begin{equation}\label{eq:Vrho_bound}
    V(\theta) \leq M^2\sum_{i<j}\frac{1-\bar\pi_{i,j}}{\bar\pi_{i,j}} 
    \leq M^2\sum_{i<j}\left(\frac{S^\rho_N}{\bar{n}\,\rho(W_i,W_j)}-1\right) =: V^\rho.
\end{equation}
Substituting $V_\eta \leq V^\rho$ into Theorem~\ref{th:uniform-poisson-pair} yields the variance-aware bound stated in Theorem~\ref{thm:main-learning} of the main text. The bound is tightest when $\rho$ is aligned with the loss: the optimal link function $\rho^*$ of Theorem~\ref{thm:opt_pair} simultaneously minimizes estimation variance in Section~\ref{sec:estimation} and minimizes $V^\rho$ here. In the special case 
$\rho \equiv 1$ (uniform sampling), $V^\rho = M^2\bar{N}(\bar{N}/\bar{n}-1)$, recovering a generic $O(\bar{N}/\bar{n})$ constant.
\end{remark}

\subsection{Concentration Bounds under Rejective Sampling}\label{app:sec4-rejective}
Now, we extend Theorem~\ref{th:uniform-poisson-pair} to rejective (fixed-size) sampling 
designs. We consider a rejective sample design $\bar{D}_{\bar{N}}$ that selects exactly $\bar{n}$ pairs from $\mathcal{J}$ according to some inclusion probabilities $(\bar\pi_{i,j})_{(i,j)\in\mathcal{J}}$ satisfying $\sum_{(i,j)}\bar\pi_{i,j}=\bar{n}$. Under such designs, the inclusion indicators 
$(\bar\epsilon_{i,j})_{(i,j)\in\mathcal{J}}$ are no longer independent but satisfy negative association.

\begin{remark}[Recall on negative association]
We say that random variables $X_1,\cdots, X_m$ are negatively associated~\citep{joag1983negative} if for any two disjoint subsets $I,J \subseteq\{1,\cdots,m\}$ and any decreasing functions $f:\mathbb{R}^{|I|}\to \mathbb{R}$ and $g:\mathbb{R}^{|J|}\to \mathbb{R}$,
$$\mathbb{E}[f(X_I)\,g(X_J)]\le \mathbb{E}[f(X_I)]\mathbb{E}[g(X_J)],$$ 
where $X_I = (X_i)_{i\in I}$. A key property is that sampling without replacement induces negative association among the inclusion indicators~\citep{joag1983negative}. 
\end{remark} 

\begin{lemma}[Bernstein for negative association~\citep{BardenetMaillard}]
\label{lem:bernstein-rejective}
Let $X_1,\ldots,X_m$ be negatively associated, centered random variables with $|X_k| \le B$ almost surely. Then for any $t > 0$,
$$\mathbb{P}\left(\left|\sum_{k=1}^m X_k\right| \ge t\right)
\le 2\exp\left(-\frac{t^2}{2\sum_{k=1}^m \mathrm{Var}(X_k) + \frac{2}{3}Bt}\right).$$
\end{lemma}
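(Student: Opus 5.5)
The plan is to run the Cramér--Chernoff argument used for the classical Bernstein inequality. The only place independence is normally used is the factorization of the moment generating function of the sum, and there I will substitute a sub-multiplicativity inequality that follows from negative association. Write $S_m=\sum_{k=1}^m X_k$ and $V=\sum_{k=1}^m \mathrm{Var}(X_k)$. I will first bound the upper tail $\mathbb{P}(S_m\ge t)$. For the lower tail, I apply the same bound to $(-X_k)_{k\le m}$, and a union bound gives the factor $2$.

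\textbf{Step 1 (sub-multiplicativity of the m.g.f.).} Fix $\lambda>0$. I claim
$$\mathbb{E}\bigl[e^{\lambda S_m}\bigr]\le \prod_{k=1}^m \mathbb{E}\bigl[e^{\lambda X_k}\bigr].$$
I will prove this by induction on $m$. The definition recalled above is stated for decreasing $f,g$. Replacing $f,g$ by $-f,-g$ shows it is equivalent to the same covariance inequality $\mathbb{E}[fg]\le\mathbb{E}[f]\mathbb{E}[g]$ for two coordinatewise increasing functions of disjoint blocks. Take $I=\{1,\dots,m-1\}$ and $J=\{m\}$, with $f(x_I)=\prod_{k<m}e^{\lambda x_k}$ and $g(x_m)=e^{\lambda x_m}$. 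Both are increasing, and they are bounded because $|X_k|\le B$, so all expectations are finite. This gives $\mathbb{E}[e^{\lambda S_m}]\le \mathbb{E}[e^{\lambda S_{m-1}}]\,\mathbb{E}[e^{\lambda X_m}]$. The variables $X_1,\dots,X_{m-1}$ are still negatively associated, since any subfamily of an NA family is NA, so the induction closes. The same argument applies to $-X_1,\dots,-X_m$: NA is preserved when every coordinate is composed with the same monotone map, because $x\mapsto -x$ turns increasing functions of disjoint blocks into decreasing ones. These variables are also centered and bounded by $B$.

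\textbf{Step 2 (single-term bound and optimisation).} Each $X_k$ is centered with $|X_k|\le B$. Expanding the exponential and using $\mathbb{E}|X_k|^p\le B^{p-2}\mathrm{Var}(X_k)$ for $p\ge 2$, together with $p!\ge 2\cdot 3^{p-2}$, gives for $0<\lambda<3/B$
$$\mathbb{E}\bigl[e^{\lambda X_k}\bigr]\le 1+\frac{\lambda^2\mathrm{Var}(X_k)}{2(1-\lambda B/3)}\le \exp\Bigl(\frac{\lambda^2\mathrm{Var}(X_k)}{2(1-\lambda B/3)}\Bigr).$$
Combining this with Step 1 and Markov's inequality yields
$$\mathbb{P}(S_m\ge t)\le \exp\Bigl(-\lambda t+\frac{\lambda^2 V}{2(1-\lambda B/3)}\Bigr).$$
Choosing $\lambda=t/(V+Bt/3)$, which lies in $(0,3/B)$, makes the exponent equal to $-t^2/(2V+\tfrac23 Bt)$. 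This is the claimed rate.

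\textbf{Step 3 (two-sided bound and expected obstacle).} Repeating Step 2 for $-S_m$ via the NA family $(-X_k)$, and adding the two tail bounds, produces the factor $2$. The main point requiring care is Step 1. One must check that the covariance inequality in the NA definition really applies to the unbounded-looking but, thanks to $|X_k|\le B$, bounded increasing functions $e^{\lambda x}$. One must also check the closure of NA under taking subfamilies and under the simultaneous sign flip. Once these are in place, the rest is the textbook Bernstein computation, and the constants match those of the independent case, as in \cite{BardenetMaillard}.
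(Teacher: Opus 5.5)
Your proof is correct. The constants also check out: with $\lambda=t/(V+Bt/3)$ one has $1-\lambda B/3=V/(V+Bt/3)$, which gives exactly the exponent $-t^2/(2V+\tfrac23 Bt)$.

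The paper gives no argument of its own for this lemma, only the pointer to \cite{BardenetMaillard}. Bardenet--Maillard treat sums of draws without replacement from a fixed finite population, not general, non-identically distributed negatively associated families. Your self-contained route is therefore the appropriate justification for the lemma as stated, and for its later use on the heterogeneous variables $X_{ij}(\theta)$. That route applies the NA covariance inequality inductively to the increasing functions $e^{\lambda x}$ to get $\mathbb{E}[e^{\lambda S_m}]\le\prod_k\mathbb{E}[e^{\lambda X_k}]$, a standard consequence of NA (cf.\ \cite{joag1983negative}). It then uses the classical Bernstein moment bound $\mathbb{E}[e^{\lambda X_k}]\le\exp\bigl(\lambda^2\mathrm{Var}(X_k)/(2(1-\lambda B/3))\bigr)$.

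The only loose end is cosmetic. When $V=0$, your $\lambda$ equals $3/B$ and so does not lie in $(0,3/B)$. In that case every $X_k=0$ almost surely and the inequality is trivial, so set that case (and $B=0$) aside at the start.
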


\begin{theorem}[Uniform control of the survey deviation (rejective)]
\label{th:uniform-rejective-pair}
Under the same assumptions as Theorem~\ref{th:uniform-poisson-pair}, but with rejective sampling instead of Poisson sampling, the conclusion of Theorem~\ref{th:uniform-poisson-pair} continues to hold: for any $\eta>0$ and any $\delta\in(0,1)$, with probability at least $1-\delta$,
$$\sup_{\theta\in\Theta}\left|\bar U_{\bar{D}_{\bar{N}}}(\theta)-\hat{U}_N(\theta)\right|
\leq \max_{\theta'\in\Theta_\eta}\left|\bar U_{\bar{D}_{\bar{N}}}(\theta')-
\hat{U}_N(\theta')\right|+\frac{2L}{\bar\pi_{\min}}\eta,$$
and 
$$\max_{\theta'\in\Theta_\eta}\left|\bar U_{\bar{D}_{\bar{N}}}(\theta')-\hat{U}_N(\theta')\right|
\leq\frac{1}{\bar{N}}\left[\sqrt{2V_\eta\log\left(\frac{2\mathcal{N}(\Theta,\|\cdot\|,\eta)}{\delta}\right)}
+\frac{2}{3}B\log\left(\frac{2\mathcal{N}(\Theta,\|\cdot\|,\eta)}{\delta}\right)\right].$$
\end{theorem}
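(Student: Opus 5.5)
The plan is to reproduce the proof of Theorem~\ref{th:uniform-poisson-pair} step by step. The only change is that the conditional Bernstein bound of Lemma~\ref{lem:bernstein_poisson_pairs} is replaced by its negatively associated analogue, Lemma~\ref{lem:bernstein-rejective}. Throughout, I work conditionally on $(\mathcal{A}_N, Z_{1:N})$. Under this conditioning, the inclusion probabilities $\bar\pi_{i,j}$ and the loss values $\ell_\theta(Z_i,Z_j)$ are deterministic, and the only randomness comes from the rejective design through the indicators $(\bar\epsilon_{i,j})_{(i,j)\in\mathcal{J}}$. These indicators are negatively associated, as recalled in the remark above (\citet{joag1983negative}).

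\emph{Discretization step.} This part is purely deterministic and uses no independence at all. For $\theta\in\Theta$, pick $\theta^\sharp\in\Theta_\eta$ with $\|\theta-\theta^\sharp\|\le\eta$. By \eqref{def:lipchitz} and the bound $|\bar\epsilon_{i,j}/\bar\pi_{i,j}-1|\le 2/\bar\pi_{\min}$, we get $|X_{ij}(\theta)-X_{ij}(\theta^\sharp)|\le 2L\eta/\bar\pi_{\min}$. Averaging over $\mathcal{J}$ through \eqref{eq:survey_diff} gives the first displayed inequality, with the term $2L\eta/\bar\pi_{\min}$. Taking the supremum over $\theta$ completes this step.

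\emph{Pointwise concentration.} Fix $\theta'\in\Theta_\eta$. Each $X_{ij}(\theta')=\ell_{\theta'}(Z_i,Z_j)\,\bar\pi_{i,j}^{-1}(\bar\epsilon_{i,j}-\bar\pi_{i,j})$ is a constant multiple of the indicator $\bar\epsilon_{i,j}$ plus a constant. It is centered because $\mathbb{E}[\bar\epsilon_{i,j}\mid\mathcal{A}_N]=\bar\pi_{i,j}$. It is bounded by $B=2M/\bar\pi_{\min}$. Its variance is $\ell_{\theta'}^2(Z_i,Z_j)(1-\bar\pi_{i,j})/\bar\pi_{i,j}$, so the variances sum to $V(\theta')$.

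This is where the main obstacle lies. Negative association is preserved under coordinatewise \emph{monotone in the same direction} transformations of disjoint blocks. However, the map $\bar\epsilon_{i,j}\mapsto X_{ij}(\theta')$ is increasing when $\ell_{\theta'}(Z_i,Z_j)\ge0$ and decreasing otherwise. To handle this, I would split $\mathcal{J}$ into $\mathcal{J}_+=\{\ell_{\theta'}\ge0\}$ and $\mathcal{J}_-$. Within each part the family is NA (for $\mathcal{J}_-$, a common sign flip preserves NA), and Lemma~\ref{lem:bernstein-rejective} can be applied to each part. Two routes then remain. The first is to accept a harmless loss in constants. The second, preferable one, uses the nonnegativity of losses that the paper invokes (the bounded form (A1) with $\ell\ge0$), so that $\mathcal{J}_-=\emptyset$ and the lemma applies directly.

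The underlying mechanism is the standard one: for NA variables and increasing $x\mapsto e^{\lambda x}$, $\mathbb{E}\exp(\lambda\sum X_k)\le\prod\mathbb{E}\exp(\lambda X_k)$. The Bernstein moment-generating-function bound per term then yields the tail bound. For $\lambda<0$, one uses that $x\mapsto e^{\lambda x}$ is decreasing, and NA still gives the product inequality, which yields the two-sided bound. This gives
\[
\mathbb{P}\Bigl(\bigl|\bar U_{\bar{D}_{\bar{N}}}(\theta')-\hat U_N(\theta')\bigr|\ge t/\bar N\,\Big|\,\mathcal{A}_N,Z_{1:N}\Bigr)\le 2\exp\Bigl(-\frac{t^2}{2V(\theta')+\frac23 Bt}\Bigr).
\]

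\emph{Union bound and inversion.} Bound $V(\theta')\le V_\eta$ and take a union bound over the $\mathcal{N}(\Theta,\|\cdot\|,\eta)$ points of the net. Then set the right-hand side equal to $\delta$ and solve the quadratic in $t$. Using $\sqrt{a+b}\le\sqrt a+\sqrt b$, one obtains $t\le\sqrt{2V_\eta\log(2\mathcal{N}/\delta)}+\frac23B\log(2\mathcal{N}/\delta)$, which is the second display. Finally, integrating the conditional statement over $(\mathcal{A}_N,Z_{1:N})$ gives the unconditional probability $1-\delta$.
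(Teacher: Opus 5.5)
Your proposal is correct and follows the paper's own proof: the same deterministic $\eta$-net discretization, the independent Bernstein bound replaced by Lemma~\ref{lem:bernstein-rejective} (using that negative association is preserved under monotone maps of the individual indicators $\bar\epsilon_{i,j}$), and the same union bound and inversion. Your sign remark makes explicit what the paper's one-line justification leaves implicit: calling the $X_{ij}(\theta)$ non-decreasing functions of the $\bar\epsilon_{i,j}$ requires $\ell_\theta\ge 0$, so your second route (nonnegative losses) is exactly what gives the stated constants, whereas your splitting route only yields a version with worse constants.
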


\begin{proof}
The proof follows exactly the same structure as that of Theorem~\ref{th:uniform-poisson-pair}, with the difference that instead of applying Bernstein's inequality for independent variables 
(Lemma~\ref{lem:bernstein_poisson_pairs}), we apply Lemma~\ref{lem:bernstein-rejective} for negatively associated random variables. Since the $X_{ij}(\theta)$ are negatively associated under rejective sampling (as non-decreasing functions of disjoint subsets of the negatively associated indicators 
$\bar\epsilon_{ij}$), the same bound holds. The union bound over the $\eta$-net proceeds identically.
\end{proof}

\subsection{Proof of Corollary~\ref{cor:sample}}

We prove the sample complexity result under Poisson sampling; the rejective case follows identically by Theorem~\ref{th:uniform-rejective-pair}.

Let $\eta = 1/\sqrt{N}$ and set $\bar{n} = cN$ for some fixed constant $c > 0$. Substituting $V^\rho$ from Remark~\ref{rem:Vrho} into Theorem~\ref{th:uniform-poisson-pair}, and noting that under $\bar{n} = cN$,
$$V^\rho \leq M^2\sum_{i<j}\left(\frac{S^\rho_N}{\bar{n}\,\rho(W_i,W_j)}-1\right) 
\leq M^2\cdot\frac{\bar{N}^2}{\bar{n}}\cdot\frac{\bar\rho}{\rho_{\min}} 
= M^2\cdot\frac{\bar{N}^2}{cN}\cdot\frac{\bar\rho}{\rho_{\min}},$$
where $\bar\rho = S^\rho_N/\bar{N}$ and $\rho_{\min} = \min_{i<j}\rho(W_i,W_j)$, we obtain that the survey-induced term satisfies
$$\frac{1}{\bar{N}}\sqrt{2V^\rho\log\frac{2\mathcal{N}(\Theta)}{\delta}} 
= O\left(\sqrt{\frac{\log\mathcal{N}(\Theta)}{\bar{n}}}\right) 
= O\left(\sqrt{\frac{\log\mathcal{N}(\Theta)}{N}}\right).$$
The Lipschitz remainder satisfies $\frac{L}{\bar\pi_{\min}}\eta = O(1/\sqrt{N})$ by choice of $\eta$. Combined with the standard U-statistic bound $\|\widehat{U}_N - U\|_\infty = O_\mathbb{P}(1/\sqrt{N})$ from~\citep{CLV08}, the decomposition~\eqref{eq:decomp} gives
$$U(\bar{\theta}_{\bar{D}_{\bar{N}}}) - \inf_{\theta\in\Theta}U(\theta) \leq 
\left(\frac{C_1}{\sqrt{c}} + C_2\right)\sqrt{\frac{\log\mathcal{N}(\Theta)}{N}},$$
with high probability, where $C_1$ depends on $M$, $\bar\pi_{\min}$, $\bar\rho/\rho_{\min}$, and $C_2$ is the constant from~\citep{CLV08}. The computational reduction follows from $\bar{n}/\bar{N} = cN / (N(N-1)/2) \sim 2c/N \to 0$, so that the ratio of pairs used to pairs available vanishes as $N \to \infty$ for any fixed $c$. \qed

\section{Experiment settings}
\label{appendix:experiment_settings}

This section reports the setup and protocol for the experiments presented in Sections~\ref{sec:estimation} and~\ref{sec:learning}. The code to reproduce experiments is available on this anonymous GitHub \url{https://anonymous.4open.science/r/pairwise_estimation_and_learning-2503/}.

\subsection{Datasets and preprocessing}

\paragraph{Toy dataset.}
We construct a synthetic dataset of $N=1000$ observations drawn from a sparse mixture: with probability $p=0.10$, $x_i \sim \mathcal{N}(5, 0.3^2)$ (signal), otherwise $x_i \sim \mathcal{N}(0, 0.3^2)$ (noise). The pairwise loss is $\ell(x_i, x_j) = (x_i \cdot x_j)^2$, concentrated on signal-signal pairs ($0.9\%$ of pairs, carrying $93\%$ of $U_{\rm true}$, ${\rm CV}^2(\ell)=99.8$). 
The auxiliary information $\rho(x_i, x_j) = |x_i \cdot x_j|$ achieves ${\rm Corr}(\rho,\ell) = 0.97$.

\paragraph{Cora.}
Cora~\citep{yang2016revisiting}, as provided by the PyTorch Geometric library~\citep{pyg}, is a citation graph of $N=2708$ scientific articles (7 classes) with bag-of-words node features ($d=1433$). For the estimation experiments (Section~3), node features are projected onto $64$ dimensions via PCA and $\ell_2$-normalized, yielding embeddings $z_i \in \mathbb{R}^{64}$. The pairwise loss is a hinge loss over these embeddings:
\begin{equation}
\ell(z_i, z_j) = \begin{cases}
\max(0,\, d_{ij} - M_+) & \text{if } y_i = y_j \\
\max(0,\, M_- - d_{ij}) & \text{if } y_i \neq y_j
\end{cases}
\end{equation}
where $d_{ij} = 1 - \langle z_i, z_j\rangle$ is the cosine distance and $M_+ = 0.2$, $M_- = 0.8$. This yields $14.4\%$ active pairs.
The auxiliary information combines two sources available without computing $\ell$: the same hinge loss evaluated on PCA-8 projections of the node features, and the number of common neighbors in the citation graph,$\rho(W_i, W_j) = \ell_8(z_i^{(8)}, z_j^{(8)}) + 
|\mathcal{N}(i) \cap \mathcal{N}(j)|$, achieving $\mathrm{Corr}(\rho, \ell) = 0.36$.

For the learning experiments (Section~4), we use a two-layer GCN~\citep{gcn} with hidden dimension $128$ and embedding dimension $64$, trained with the same hinge loss over training node pairs. Node classification accuracy is evaluated using a $60/40$ train/test split. Hyperparameters were fixed a priori and not tuned, they are reported in the \textit{Learning} paragraph below.

\paragraph{MovieLens.}
MovieLens-100k~\citep{movielens} contains $100{,}000$ ratings from $943$ users on $1682$ items (films). Observations are items $Z_i$, with auxiliary information $W_i = (\mathrm{pop}_i, \bar{r}_i)$ where $\mathrm{pop}_i$ is the number of ratings and $\bar{r}_i$ is the mean rating of item $i$. The pairwise loss is a thresholded BPR-style preference asymmetry:
\begin{equation}
\ell(i, j) = \max\!\left(0,\; 
\frac{1}{|U_{ij}|}\sum_{u \in U_{ij}} 
\mathbf{1}\{r_{ui} > r_{uj}\} - \tau\right),
\end{equation}
where $U_{ij}$ denotes the set of users who rated both items  $i$ and $j$, and $\tau = 0.6$ is a threshold ensuring sparsity. 
Pairs with fewer than $10$ co-raters are assigned $\ell = 0$. This yields $4.6\%$ active pairs.

\paragraph{LFW.}
Labeled Faces in the Wild~\cite{haung2008lfw} is a face recognition dataset containing $N = 13{,}233$ face images of $5{,}749$ distinct identities. Images are embedded by a pretrained FaceNet model~\citep{schroff2015facenet} into $512$-dimensional vectors $z_i \in \mathbb{R}^{512}$, which are $\ell_2$-normalised before use.

For the estimation experiments, the pairwise loss is a False Acceptance Rate (FAR) indicator:
\begin{equation}
  \ell(i,j) = \mathbf{1}\!\left\{d_{ij} < \theta \;\text{ and }\;
  y_i \neq y_j\right\},
\end{equation}
where $d_{ij} = \|z_i - z_j\|_2$ and $\theta$ is the 10th-percentile quantile of pairwise Euclidean distances. Two auxiliary configurations are evaluated, both derived from the $73$ per-image face attributes available in the LFW attribute  file~\citep{kumar2009attribute}.
In the \emph{realistic} configuration, the auxiliary score is obs-additive: $\rho(i,j) = s_i + s_j$, where $s_i$ is the normalised value of the single attribute most correlated (Pearson) with the per-observation FAR marginal count $f_i = |\{j : d_{ij} < \theta,\, y_i \neq y_j\}|$. In the \emph{idealistic} configuration, $\rho$ is the exact pairwise FAR indicator $\rho(i,j) = \mathbf{1}[d_{ij} < \theta,\, y_i \neq y_j] + \varepsilon$, which requires full knowledge of identity labels on all pairs and serves as an oracle upper bound.

For the learning experiments, the auxiliary information uses the top-10 attributes most correlated with $f_i$ (same construction as above with $K=10$). We additionally consider a \emph{loss-based} proxy that requires no hand-crafted attributes. It evaluates the Siamese
contrastive loss~\citep{hadsell06} directly on the pretrained embeddings (before the learned projection):
\begin{equation}
\rho(i,j) = \tfrac{1}{2}\Bigl[
  \mathbf{1}[y_i=y_j]\cdot d_{ij}^2
  +\mathbf{1}[y_i\neq y_j]\cdot\max(0,\,1-d_{ij})^2
\Bigr],
\end{equation}
assigning high weight to hard positives (same identity, far apart) and hard negatives (different identities, close together).

\paragraph{Hard sampling baseline}

As a competitive alternate strategy, we also choose to implement a method we call Hard sampling, which is inspired by Hard negative mining but adapted to a supervised setting where positive pairs also need to be sampled. At each epoch, the current model embeddings are computed (with no gradient) and the per-pair loss is evaluated on this pool.
The $\bar{n}$ pairs with the highest loss are then selected and used for the gradient step. Since this method does not have tractable inclusion probabilities, it yields a biased estimator
of the population loss; it serves purely as a practical baseline.

\paragraph{Survey sampling implementation}
Since NumPy does not provide implementations of the survey sampling schemes considered in this work, we implemented all schemes from scratch. Inclusion probabilities were constructed under the following constraints: they must lie in $(0,1]$ and sum to the expected sample size, whether sampling individuals or pairs. For the baselines (Bernoulli and SRSWOR), equal inclusion probabilities were used. For Poisson and rejective sampling, inclusion probabilities were either proportional to the target (an ideal but unrealistic setting), or correlated with the absolute value of the target variable through a chosen link function (sigmoid or identity). Each scheme was implemented as follows.
\begin{itemize}
  \item \textbf{SRSWR} draws $n$ indices independently and uniformly: $O(n)$ time and memory per replicate, no special casing required.
  \item \textbf{Bernoulli and Poisson} each draw unit $i$ independently with probability $p_i$, requiring a single pass over the population: $O(N)$ time and memory per replicate. Realisations yielding an empty sample were discarded. To avoid allocating a $B \times N$ boolean matrix when $N$ is large, replicates are processed in chunks of size $k = \min(B, \lfloor c/N \rfloor)$ capped at $c = 200\,\text{MB}$, keeping peak memory at $O(k \cdot N)$.
  \item \textbf{SRSWOR} was implemented via hash-set rejection sampling, which draws indices uniformly at random and discards duplicates. When $n \ll N$ the expected number of draws is $O(n)$; when $n/N$ is non-negligible the method degrades to a partial Fisher-Yates shuffle at $O(N)$.
  \item \textbf{Rejective sampling} repeatedly draws a Poisson candidate and rejects it if the realised sample size differs from $n$. The acceptance probability is $p_{\mathrm{acc}} = \Pr\!\bigl(\sum_i \mathbf{1}[U_i < p_i] = n\bigr)$ with $U_i \overset{\mathrm{iid}}{\sim} \mathcal{U}(0,1)$, giving $O(N/p_{\mathrm{acc}})$ expected time per replicate. The vector $(p_i)_{i=1}^N$ is allocated once and shared across replicates; only the $B \times n$ index matrix is stored, yielding $O(N + B \cdot n)$ total memory. For larger-scale settings, we additionally implemented Ohlsson's sequential algorithm~\cite{ohlsson1998sequential}, which avoids repeated rejection and scales more gracefully when $p_{\mathrm{acc}}$ is small while still yielding the exact same conditional Poisson distribution, preserving the unbiasedness of the HT estimator.
\end{itemize}
Second-order inclusion probabilities $\pi_{ij}$ are required for Horvitz--Thompson variance estimation in the observation sampling case, while pair sampling relies on first-order inclusion probabilities of pairs directly. For independent schemes (Bernoulli, Poisson), $\pi_{ij} = p_i p_j$ is exact and $O(1)$ per pair. For SRSWOR and SRSWR, closed-form scalar expressions depending only on $N$ and $n$ apply. For rejective sampling, no closed form exists; we use the \citet{Hajek64} approximation $\pi_{ij} \approx p_i p_j\bigl(1 - (1-p_i)(1-p_j)/d\bigr)$ with $d = \sum_k p_k(1-p_k)$, following \citet{BLRG12}, which requires a single $O(N)$ precomputation pass after which each pair query is $O(1)$. Table~\ref{tab:sampling_complexity} summarises all costs.
\begin{table}[t]
\centering
\caption{Construction complexity and second-order inclusion probability structure for each sampling design ($N$: population size, $n$: expected sample size, $B$: number of replicates). Time complexity is identical for naive and vectorised implementations; only memory differs.}
\label{tab:sampling_complexity}
\begin{threeparttable}
\begin{tabular}{@{}llllll@{}}
\toprule
 & \multicolumn{3}{c}{Construction complexity} & \multicolumn{2}{c}{$\pi_{ij}$} \\
\cmidrule(lr){2-4}\cmidrule(lr){5-6}
Scheme & Time / rep & Mem.\ / rep & Mem.\ ($B$ reps) & Exact? & Cost \\
\midrule
SRSWR     & $O(n)$                      & $O(n)$    & $O(B{\cdot}n)$        & \checkmark    & $O(1)$ (scalar) \\
Bernoulli & $O(N)$                      & $O(N)$    & $O(k \cdot N)$\tnote{a} & \checkmark  & $O(1)$ / pair \\
Poisson   & $O(N)$                      & $O(N)$    & $O(k \cdot N)$\tnote{a} & \checkmark  & $O(1)$ / pair \\
SRSWOR    & $O(n)$\tnote{b}             & $O(n)$\tnote{b} & $O(B{\cdot}n)$   & \checkmark  & $O(1)$ (scalar) \\
Rejective & $O(N/p_{\mathrm{acc}})$     & $O(N)$    & $O(N + B{\cdot}n)$    & $\approx$\tnote{c} & $O(N)$ then $O(1)$ \\
Ohlsson\tnote{d}  & $O(N)$  & $O(N)$  & $O(N + B{\cdot}n)$  & $\approx$\tnote{c} & $O(N)$ then $O(1)$ \\
\bottomrule
\end{tabular}
\begin{tablenotes}
  \small
  \item[a] $k = \min\!\left(B,\lfloor c/N \rfloor\right)$, $c = 200\,\text{MB}$.
  \item[b] Expected; degrades to $O(N)$ via partial Fisher-Yates when $n/N$ is large.
  \item[c] Hájek (1964); $d = \sum_k p_k(1-p_k)$, precomputed in $O(N)$.
  \item[d] Alternative to rejective sampling with identical conditional Poisson distribution.
\end{tablenotes}
\end{threeparttable}
\end{table}

\paragraph{Estimation.}
Variance and MAE are estimated over $B = 1000$ independent replicates of the sampling procedure, using a fixed random seed for reproducibility. Confidence intervals on MAE are computed as $\pm 1.96 \times \hat{\sigma}/\sqrt{B}$, where $\hat{\sigma}$ is the empirical standard deviation over the $B$ replicates.

\paragraph{Learning hyperparameters.}
Both datasets (Cora and LFW) share the same pair-budget grid $\bar{n} \in \{2000, 5000, 10000, 20000, 50000\}$ and a maximum of 1000 epochs.  At each epoch, a pool of 200k candidate pairs is drawn at random from the training nodes/observations and subsampled to a budget of $\bar{n}$ pairs according to the selected scheme. The optimiser is Adam. Hyperparameters are tuned per dataset: learning rate $5\times10^{-3}$ (LFW) and $1\times10^{-3}$ (Cora); early stopping monitors test accuracy every 20 epochs with patience. Results are averaged over 5 seeds, a smaller number of replicates than for the estimation experiments, due to the computational cost of training. For each global seed $s$, the train/val/test split is fixed \emph{before} any computation of $\rho$, so all auxiliary-information configurations are evaluated on the exact same split. LFW uses a random partition of 8000/2000/3000 images; Cora a random 60/20/20 partition of its 2708 nodes (approximately 1624/542/542). Model initialisation and per-epoch pair sampling use a deterministic strategy-specific sub-seed $s_{\mathrm{strat}} = 31s + \sum_c \mathrm{ord}(c)$, where the sum runs over the ASCII codes of the strategy name and 31 is a small prime used to spread values, ensuring that each (global seed, strategy) pair maps to a distinct reproducible sub-seed.

\paragraph{Computational resources}
Estimation experiments required CPU resources only, as they are primarily memory-bound. All of them were obtained by using a single Intel Xeon Gold 6154.
\begin{itemize}
    \item Figure~\ref{fig:toy_example_results} was obtained in approximately 43 minutes and can readily be reproduced by a normal computer using the provided script.
    \item Table~\ref{tab:variance_reduction} completed in under 10 minutes, as it involves a single sample size.
    \item The experiments in Figure~\ref{fig:supp-cora-movielens} completed in under one hour for Cora and under 30 minutes for MovieLens.
    \item Figure~\ref{fig:supp-lfw} was more challenging. Storing the 87 million pairwise inclusion probabilities required chunking to avoid memory overflow, as described in the survey sampling implementation paragraph above. Smaller sampling sizes completed in under 10 minutes across all replicates, while larger sampling sizes required over 40 minutes due to this chunking overhead. In total, the 12 subfigures took approximately 4 hours to complete, even with precomputed cache. This is normal due to the high number of replicates.
\end{itemize}
Learning experiments required GPU acceleration.
\begin{itemize}
    \item Cora experiments in figures~\ref{fig:cora-learning} and \ref{fig:cora-learning-rej} were obtained using a single NVIDIA RTX 3090 (24\,GB). The experiments took at most 1.5 hours in early iterations, decreasing to approximately 15 minutes in later iterations as intermediate computations were cached. We acknowledge that this makes it difficult to report a single reliable runtime estimate for the full experiment.
    \item LFW experiments in figures~\ref{fig:cora-learning}, \ref{fig:cora-learning-rej} and~\ref{fig:idealcaselfwlearning} were the most computationally demanding and were run on a single NVIDIA L40S (46\,GB). Each auxiliary information configuration required approximately 2 hours across all sampling sizes and replicates. As with the Cora experiments, precomputed embeddings and auxiliary information reduced wall-clock time considerably; the true cost of a full cold run would likely be substantially higher.
\end{itemize}



\section{Additional Results}\label{app:additional-results}
This Section reports additional results obtained both to illustrate the interest of our sampling design for estimation and from a learning perspective. 

\paragraph{Pairwise loss estimation}

Figures~\ref{fig:supp-cora-movielens} and \ref{fig:supp-lfw} display mean absolute error (MAE) and variance as a function of $\bar{n}$ for Cora, MovieLens and LFW. Across all sampling budgets, informed pair sampling consistently dominates uniform pair sampling, which itself dominates observation-based sampling by a wide margin. This is consistent with results obtained in Table \ref{tab:variance_reduction}.

\begin{figure}[H]
    \centering
    \begin{subfigure}[t]{0.24\linewidth}
        \includegraphics[width=\linewidth]{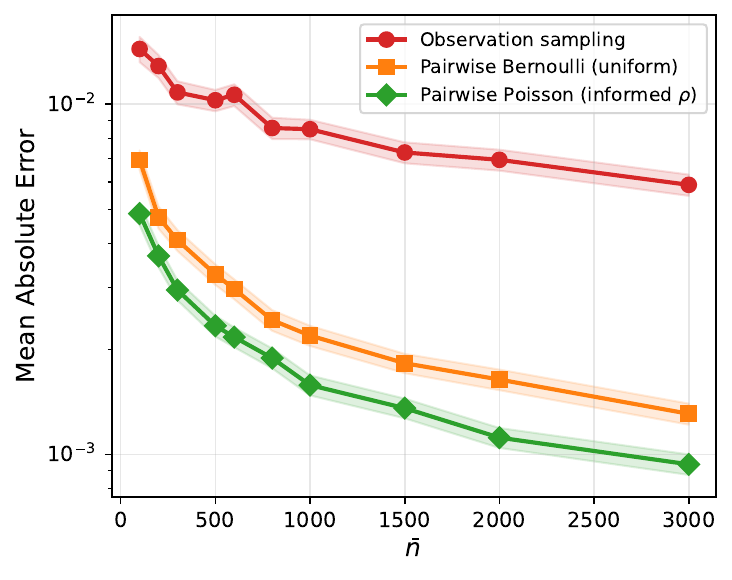}
    \end{subfigure}\begin{subfigure}[t]{0.24\linewidth}
        \includegraphics[width=\linewidth]{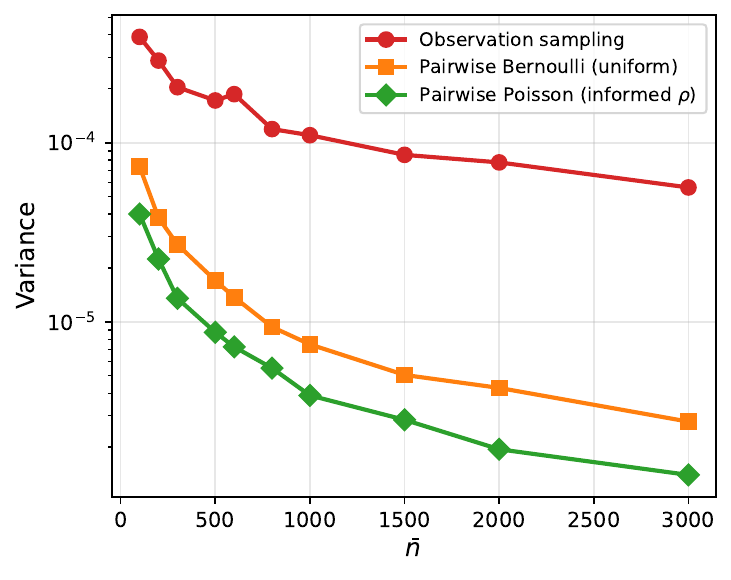}
        \caption{}
    \end{subfigure}\begin{subfigure}[t]{0.24\linewidth}
        \includegraphics[width=\linewidth]{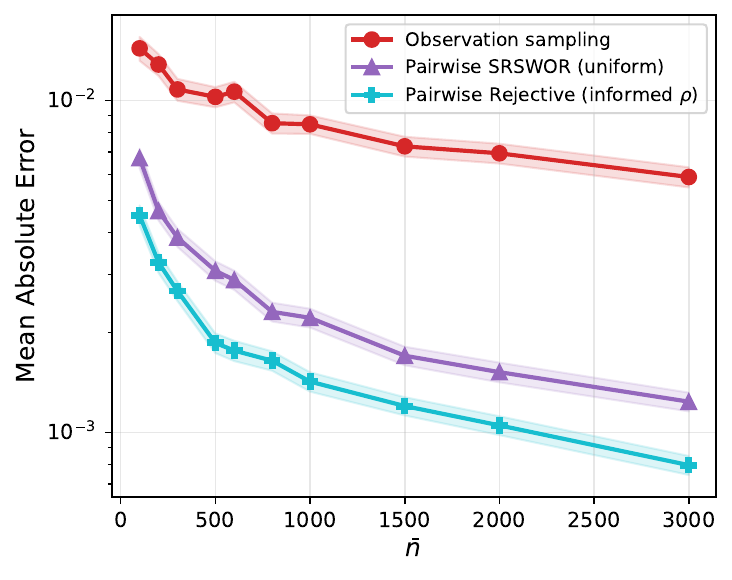}
        \caption{}
    \end{subfigure}\begin{subfigure}[t]{0.24\linewidth}
        \includegraphics[width=\linewidth]{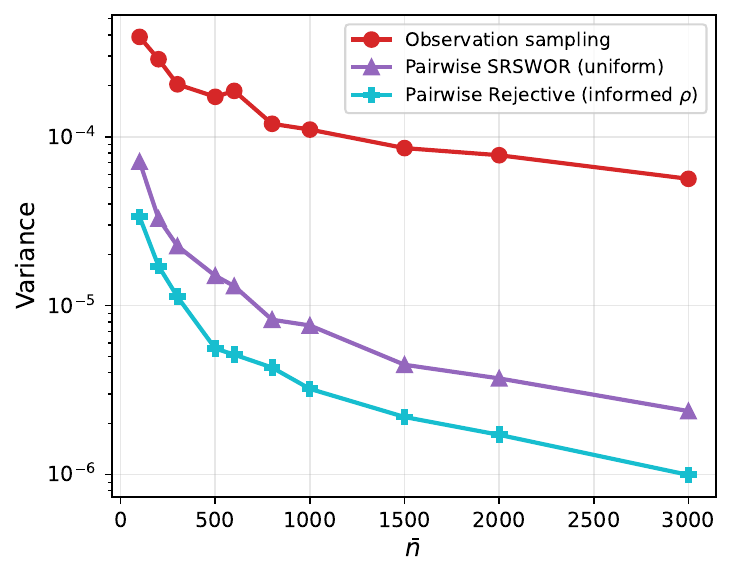}
        \caption{}
    \end{subfigure}\\
    \begin{subfigure}[t]{0.24\linewidth}
        \includegraphics[width=\linewidth]{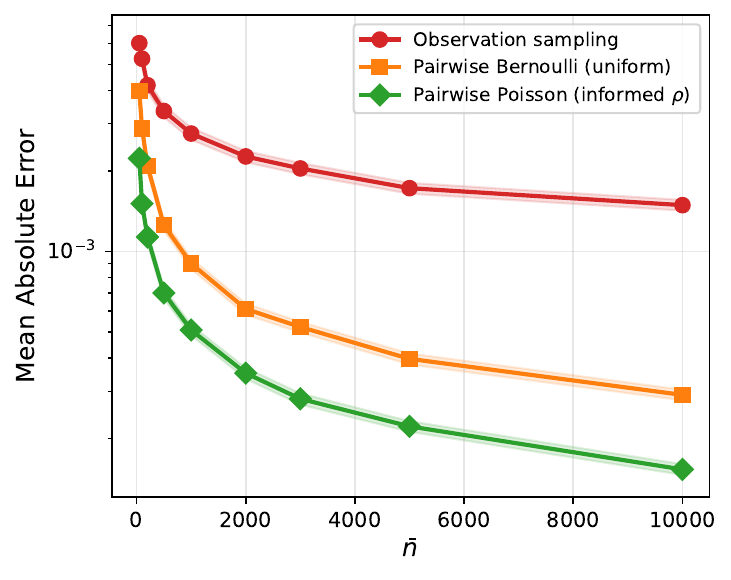}
        \caption{}
    \end{subfigure}\begin{subfigure}[t]{0.24\linewidth}
        \includegraphics[width=\linewidth]{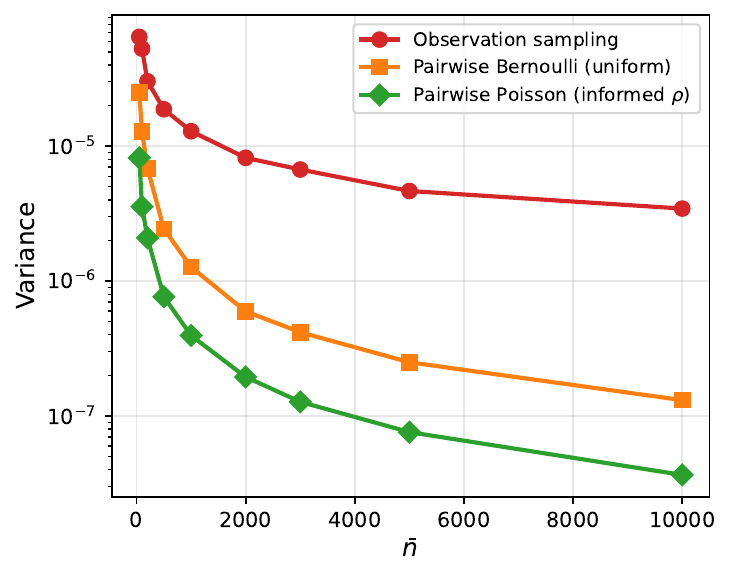}
        \caption{}
    \end{subfigure}\begin{subfigure}[t]{0.24\linewidth}
        \includegraphics[width=\linewidth]{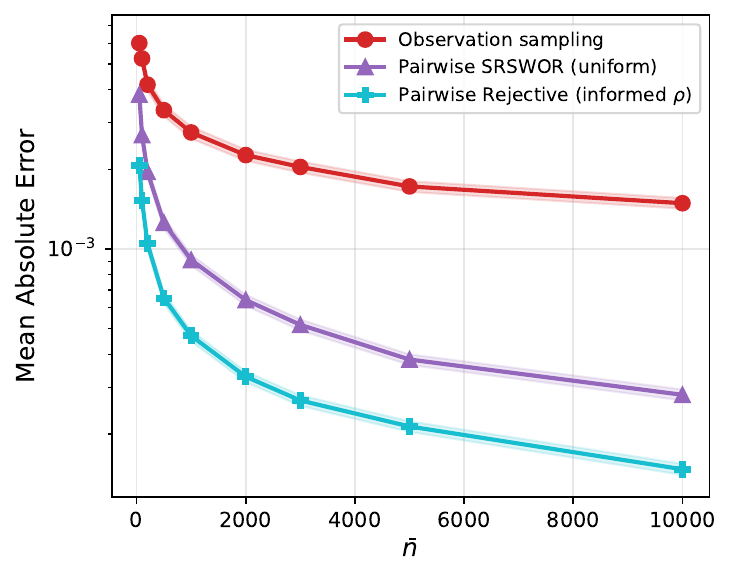}
        \caption{}
    \end{subfigure}\begin{subfigure}[t]{0.24\linewidth}
        \includegraphics[width=\linewidth]{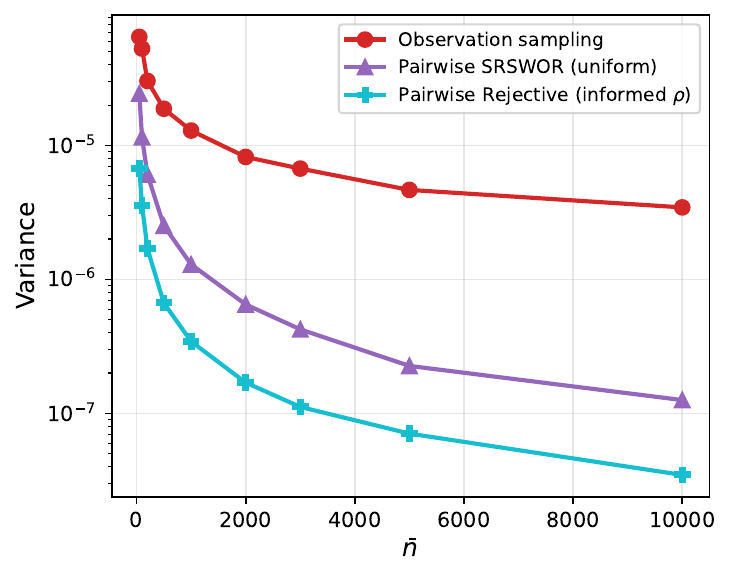}
        \caption{}
    \end{subfigure}
    \caption{MAE and variance comparison on Cora (a, b, c, d) and MovieLens (e, f, g, h) between Observation Sampling, Bernoulli pairwise sampling and Poisson informed pairwise sampling.}
    \label{fig:supp-cora-movielens}
\end{figure}

\begin{figure}[ht]
    \centering
    \begin{subfigure}[t]{0.24\linewidth}
        \includegraphics[width=\linewidth]{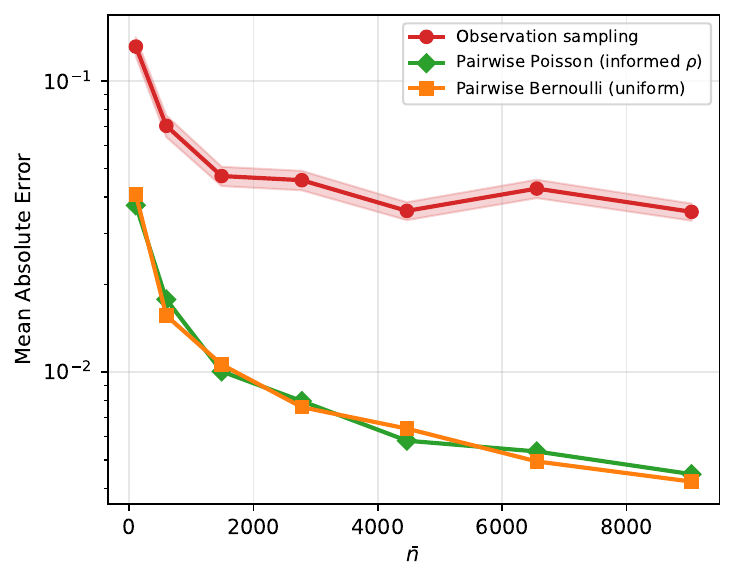}
        \caption{}
    \end{subfigure}\begin{subfigure}[t]{0.24\linewidth}
        \includegraphics[width=\linewidth]{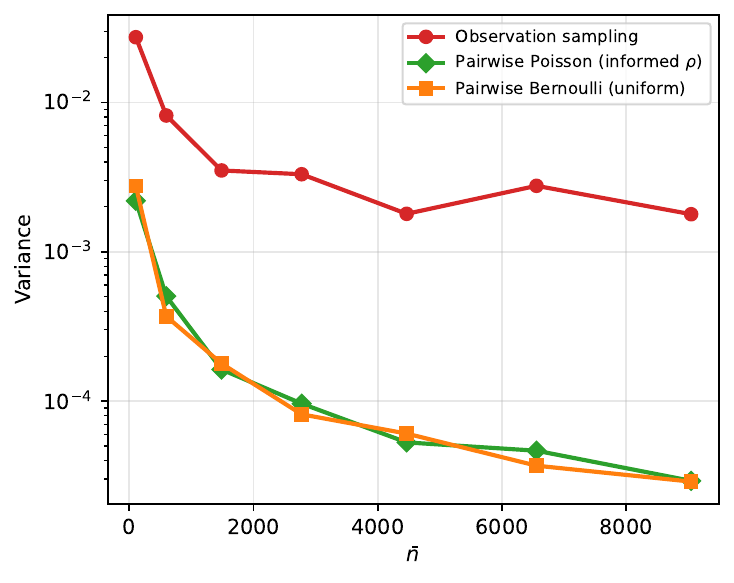}
        \caption{}
    \end{subfigure}\begin{subfigure}[t]{0.24\linewidth}
        \includegraphics[width=\linewidth]{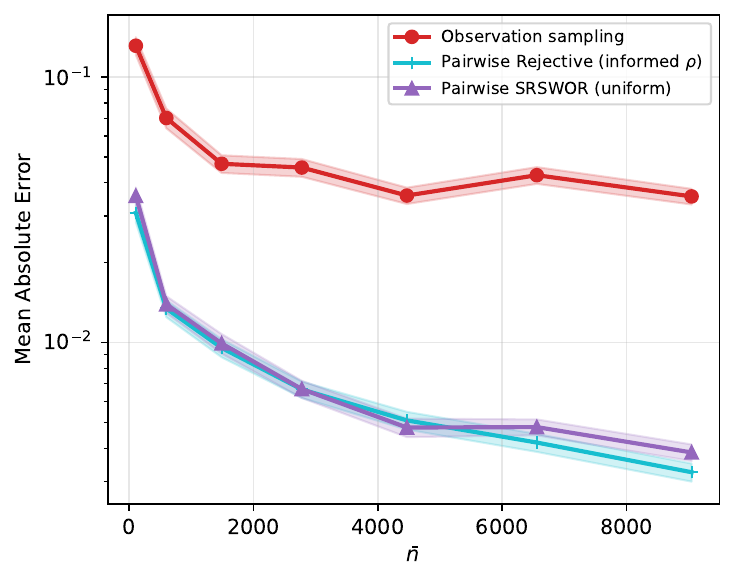}
        \caption{}
    \end{subfigure}\begin{subfigure}[t]{0.24\linewidth}
        \includegraphics[width=\linewidth]{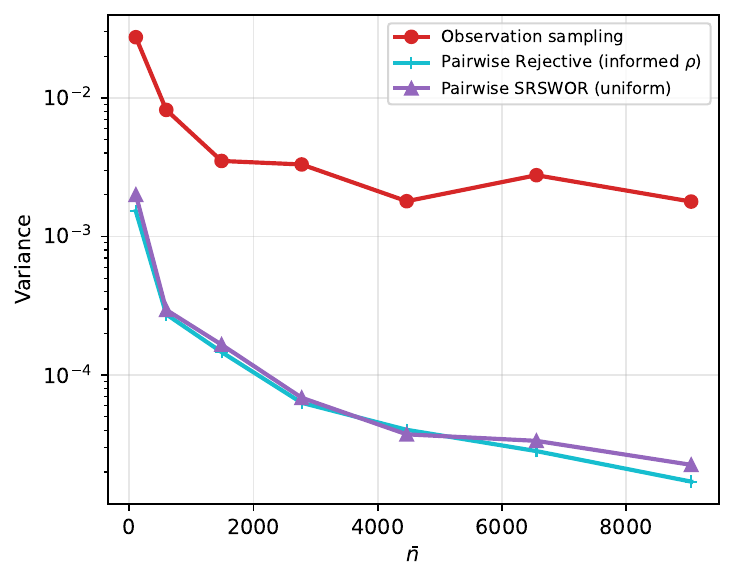}
        \caption{}
    \end{subfigure}\\
    \begin{subfigure}[t]{0.24\linewidth}
        \includegraphics[width=\linewidth]{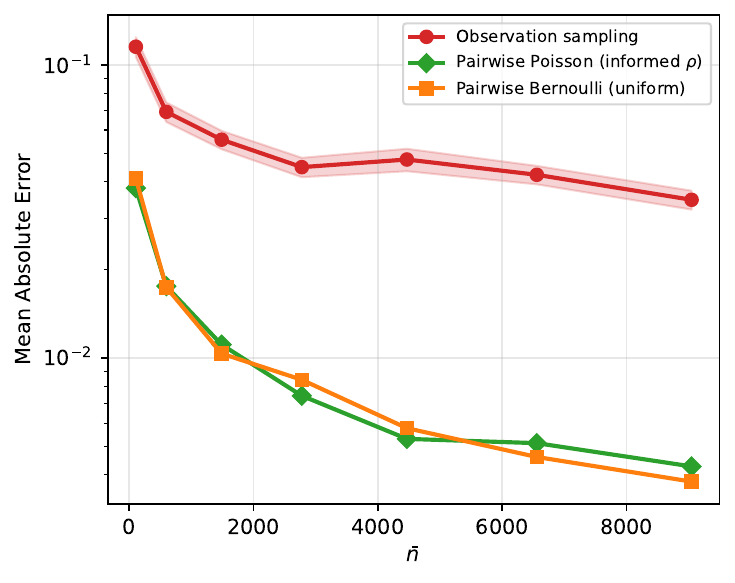}
        \caption{}
    \end{subfigure}\begin{subfigure}[t]{0.24\linewidth}
        \includegraphics[width=\linewidth]{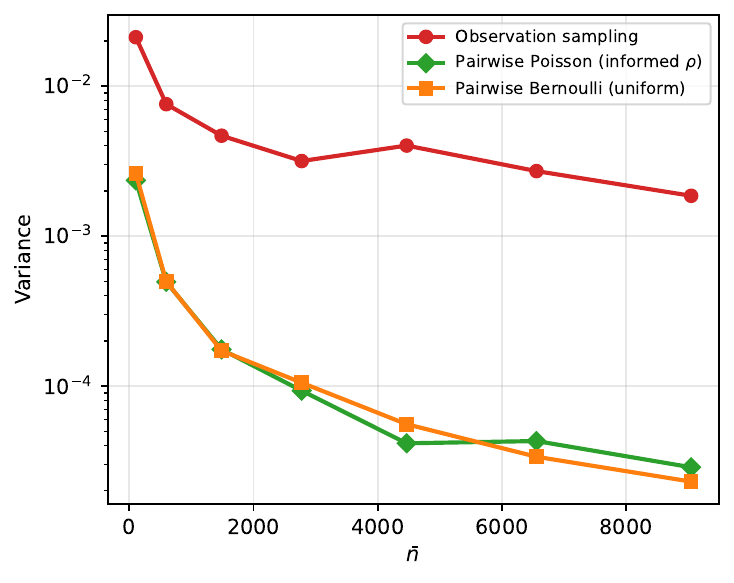}
        \caption{}
    \end{subfigure}\begin{subfigure}[t]{0.24\linewidth}
        \includegraphics[width=\linewidth]{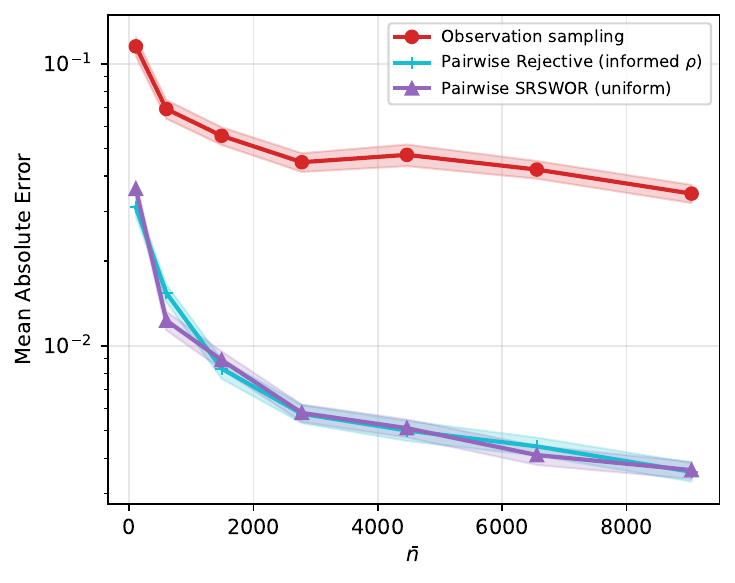}
        \caption{}
    \end{subfigure}\begin{subfigure}[t]{0.24\linewidth}
        \includegraphics[width=\linewidth]{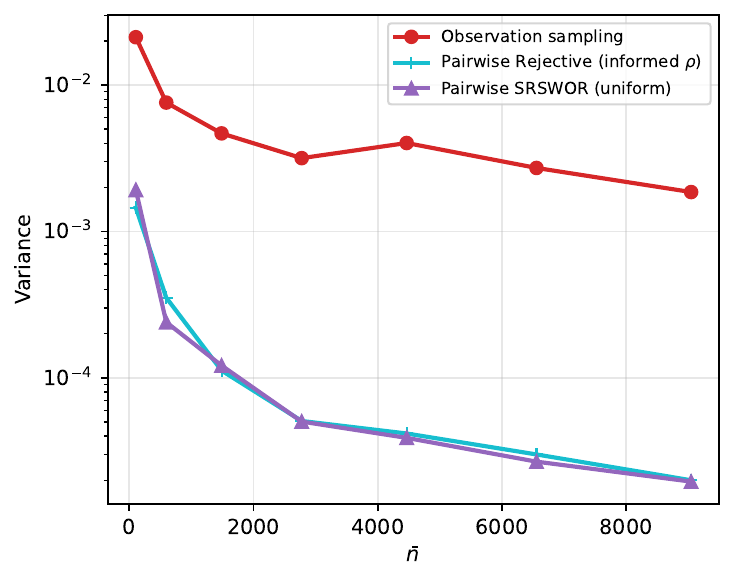}
        \caption{}
    \end{subfigure}\\
    \begin{subfigure}[t]{0.24\linewidth}
        \includegraphics[width=\linewidth]{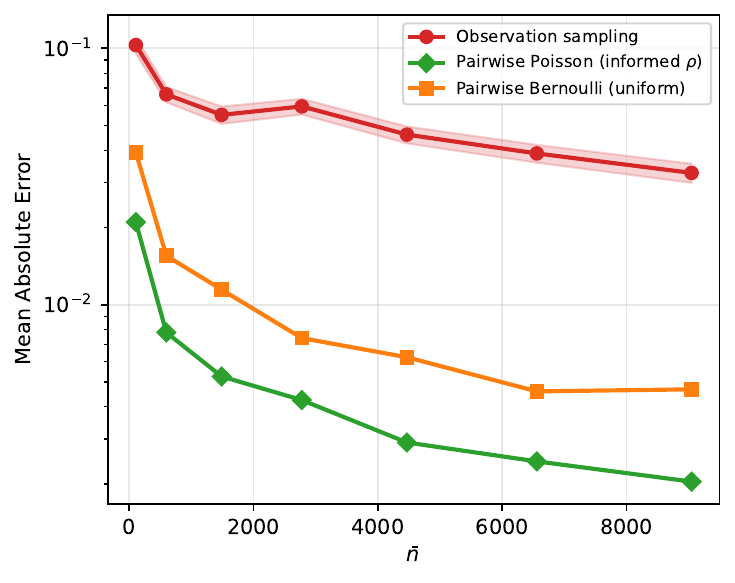}
        \caption{}
    \end{subfigure}\begin{subfigure}[t]{0.24\linewidth}
        \includegraphics[width=\linewidth]{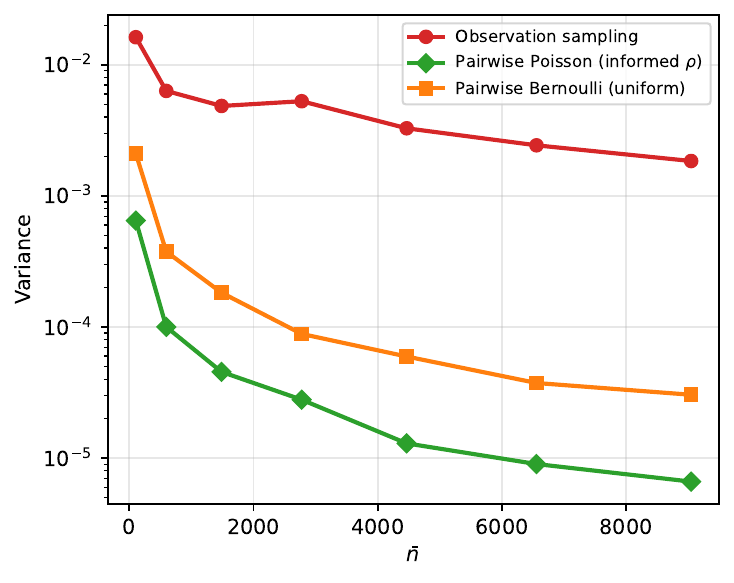}
        \caption{}
    \end{subfigure}\begin{subfigure}[t]{0.24\linewidth}
        \includegraphics[width=\linewidth]{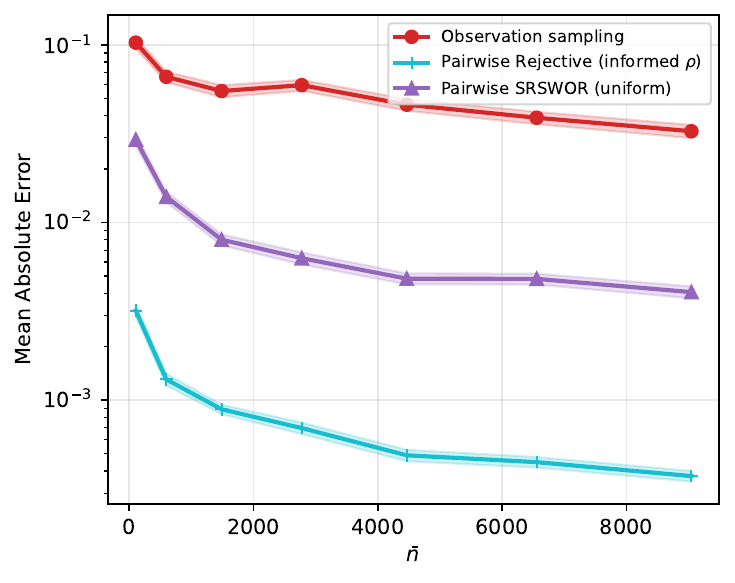}
        \caption{}
    \end{subfigure}\begin{subfigure}[t]{0.24\linewidth}
        \includegraphics[width=\linewidth]{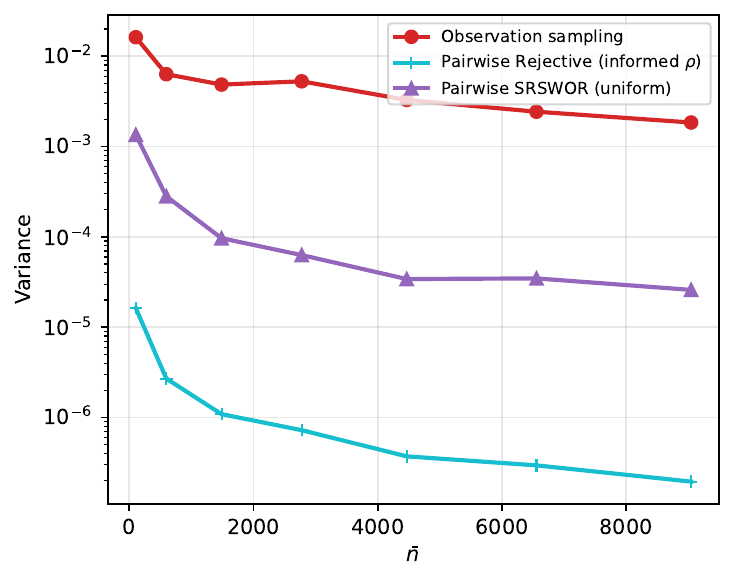}
        \caption{}
    \end{subfigure}
    \caption{MAE and variance comparison on LFW with the top-1 correlated attribute (a, b, c, d), with the top-3 correlated attributes (e, f, g, h) and with the \textit{ideal} target FAR as auxiliary information (i, j, k, l), between Observation Sampling, Bernoulli pairwise sampling and Poisson informed pairwise sampling.}
    \label{fig:supp-lfw}
\end{figure}

\paragraph{Node classification and face recognition with rejective sampling}

Figure \ref{fig:cora-learning-rej} shows that the rejective pairwise sampling strategy behaves similarly to Poisson sampling when compared with observation sampling, hard sampling, and its uninformed counterpart SRS.WOR. 

\begin{figure}
    \centering
    \begin{subfigure}[b]{0.32\textwidth} \includegraphics[width=\linewidth]{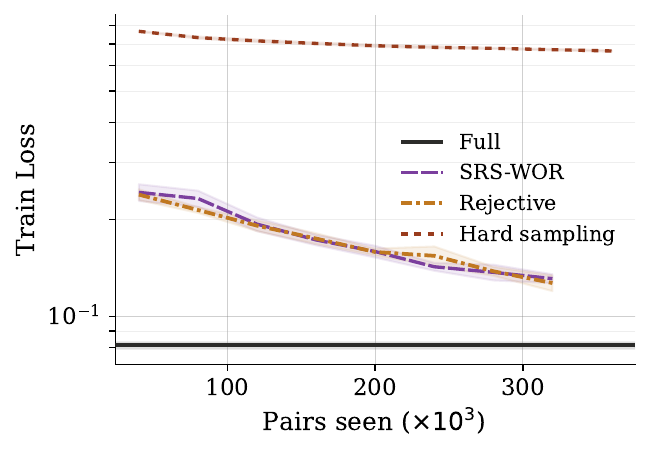}
        \caption{Cora}
        \label{fig:loss-cora-rej}
    \end{subfigure}
    \begin{subfigure}[b]{0.32\textwidth} \includegraphics[width=\linewidth]{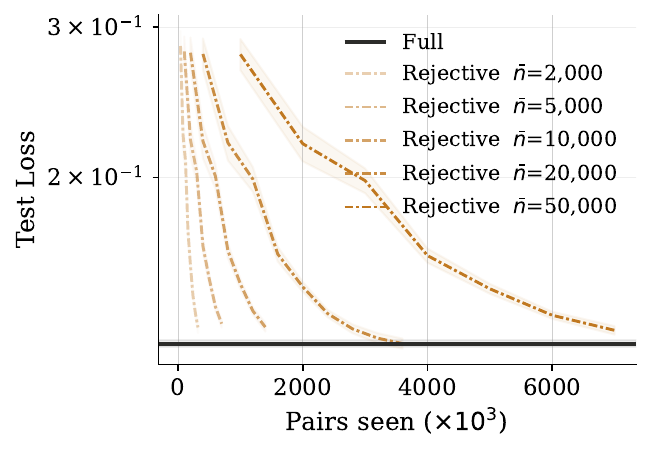}
        \caption{Cora}
        \label{fig:budget-cora-rej}
    \end{subfigure}
     \begin{subfigure}[b]{0.32\textwidth} \includegraphics[width=\linewidth]{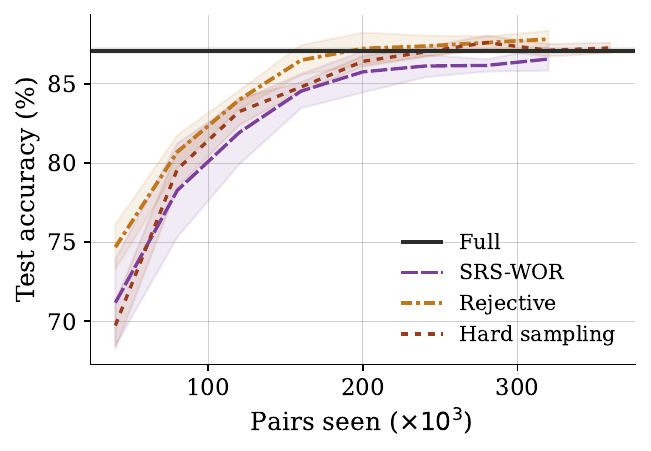}
        \caption{Cora}
        \label{fig:acc-cora-rej}
    \end{subfigure}\\
    \begin{subfigure}[b]{0.32\textwidth} \includegraphics[width=\linewidth]{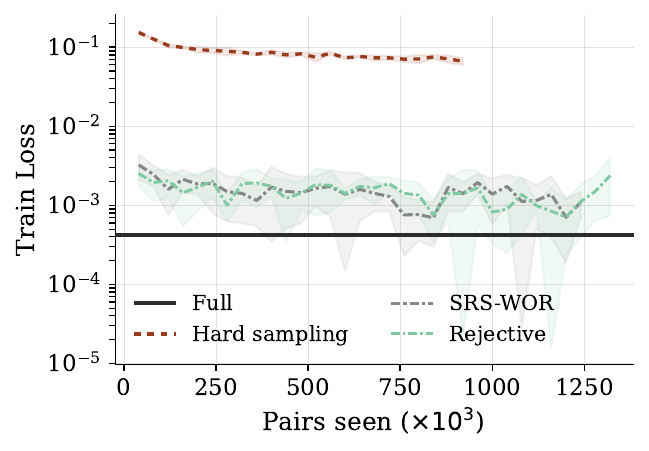}
        \caption{LFW}
        \label{fig:loss-lfw-rej}
    \end{subfigure}
    \begin{subfigure}[b]{0.32\textwidth} \includegraphics[width=\linewidth]{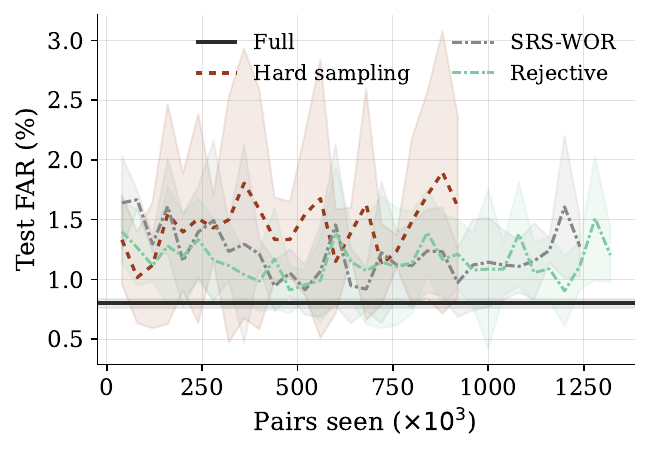}
        \caption{LFW}
        \label{fig:budget-lfw-rej}
    \end{subfigure}
     \begin{subfigure}[b]{0.32\textwidth} \includegraphics[width=\linewidth]{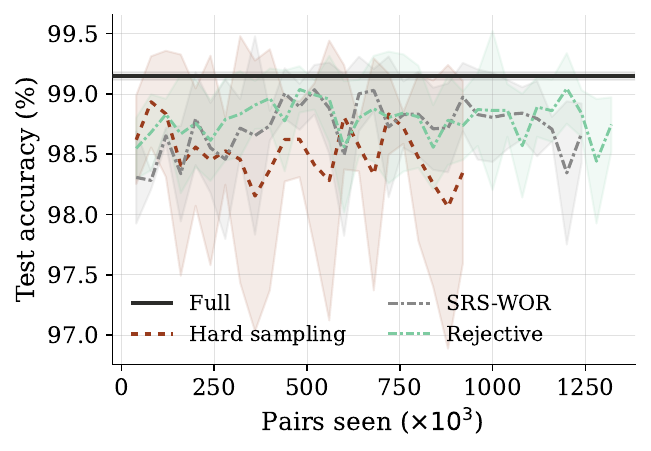}
        \caption{LFW}
        \label{fig:acc-lfw-rej}
    \end{subfigure}
    \caption{Loss curves on Cora and LFW (average over 5 seeds, shaded = $\pm$std). (a) At budget $\bar n= 2000$ per epoch (over 150 epochs). (b) Rejective across budgets. (c) Best test accuracy vs budget.}
    \label{fig:cora-learning-rej}
\end{figure}

\paragraph{Loss-based proxy as an idealistic oracle.}
Figure~\ref{fig:idealcaselfwlearning} shows results when the contrastive loss on the pretrained embeddings is used directly as the sampling proxy
$\rho$. Unlike hard sampling, inclusion probabilities remain tractable, so the HT correction is applied and the gradient step is unbiased. This configuration consistently outperforms hard sampling in terms of stability, final accuracy, and FAR, confirming
that debasing the loss estimator matters for learning, beyond the practical benefit of focusing on informative pairs.

\begin{figure}
    \centering
    \begin{subfigure}[b]{0.32\textwidth} \includegraphics[width=\linewidth]{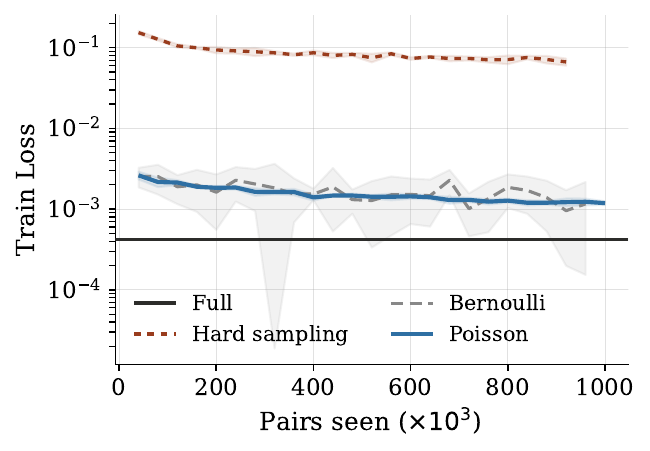}
        \caption{}
        \label{fig:ideal-loss-lfw}
    \end{subfigure}
    \begin{subfigure}[b]{0.32\textwidth} \includegraphics[width=\linewidth]{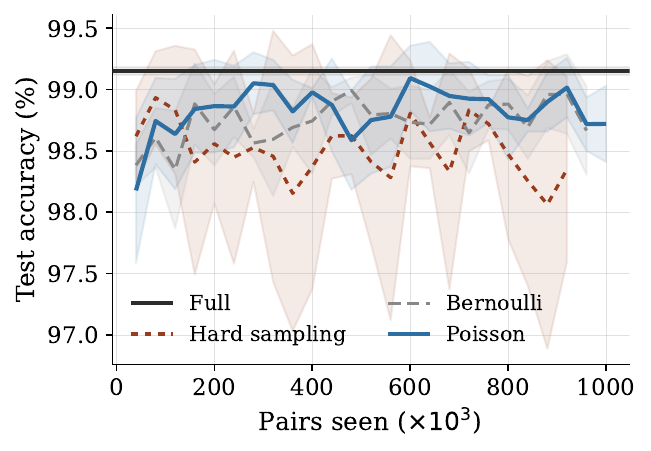}
        \caption{}
        \label{fig:ideal-acc-lfw}
    \end{subfigure}
    \begin{subfigure}[b]{0.32\textwidth} \includegraphics[width=\linewidth]{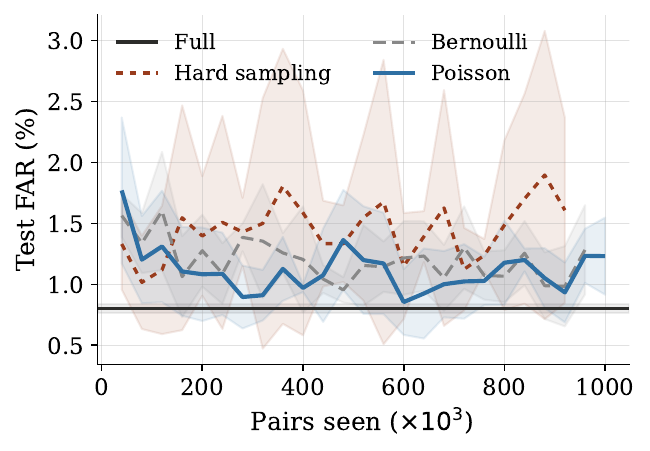}
        \caption{}
        \label{fig:ideal-far-lfw}
    \end{subfigure}\\
    \begin{subfigure}[b]{0.32\textwidth} \includegraphics[width=\linewidth]{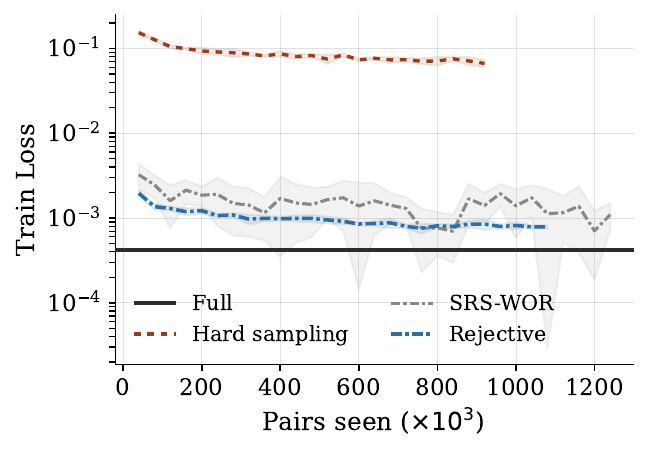}
        \caption{}
        \label{fig:ideal-loss-lfw-rej}
    \end{subfigure}
    \begin{subfigure}[b]{0.32\textwidth} \includegraphics[width=\linewidth]{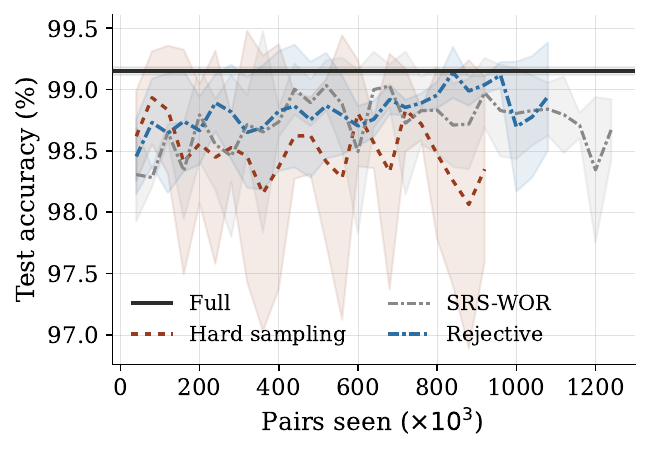}
        \caption{}
        \label{fig:ideal-acc-lfw-rej}
    \end{subfigure}
    \begin{subfigure}[b]{0.32\textwidth} \includegraphics[width=\linewidth]{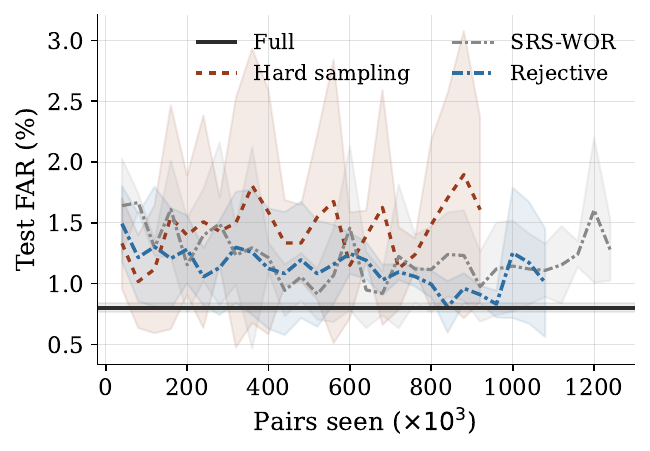}
        \caption{}
        \label{fig:ideal-far-lfw-rej}
    \end{subfigure}
    \caption{Example of directly using the Siamese loss at the beginning of training as auxiliary information to build the inclusion probabilities. The first line reports results for Poisson sampling, and the second one for Rejective sampling.}
    \label{fig:idealcaselfwlearning}
\end{figure}

\end{document}